\newtheorem{mydef}{Definition}
\newtheorem{theorem}{Theorem}
\newtheorem{assump}{Assumption}
\newtheorem{lemma}{Lemma}
\newtheorem{corollary}{Corollary}
\def\eqref#1{equation~\ref{#1}}
\def\floor#1{\lfloor #1 \rfloor}
\def\1{\bm{1}}
\DeclareMathAlphabet{\mathsfit}{\encodingdefault}{\sfdefault}{m}{sl}
\SetMathAlphabet{\mathsfit}{bold}{\encodingdefault}{\sfdefault}{bx}{n}
\def\sR{{\mathbb{R}}}
\newcommand{\E}{\mathbb{E}}
\newcommand{\R}{\mathbb{R}}
\DeclareMathOperator*{\argmax}{arg\,max}
\providecommand*{\barvee}{%
  \mathbin{%
    \mathpalette\@barvee{}%
  }%
}
\newcommand*{\@barvee}[2]{%
  \sbox0{$#1\veebar\m@th$}%
  \sbox2{%
    \hbox to \wd0{%
      \hss
      \resizebox{1.05\wd0}{\height}{$#1-\m@th$}%
      \hss
    }%
  }%
  \sbox4{%
    \resizebox{\wd0}{.7\ht0}{$#1\vee\m@th$}%
  }%
  \sbox6{$#1\vcenter{}$}
  \ht2=\ht6 %
  \vbox to \ht0{%
    \copy2 %
    \vss
    \copy4 %
  }%
}
\newcommand{\state}{\mathcal{S}}
\newcommand{\action}{\mathcal{A}}
\newcommand{\dynamics}{\rho}
\newcommand{\reward}{r}
\newcommand{\rmax}{\reward_{\text{MAX}}}
\newcommand{\rmin}{\reward_{\text{MIN}}}
\newcommand{\goals}{\mathcal{G}}
\renewcommand{\v}{V}
\newcommand{\q}{Q}
\newcommand{\vpi}{V^\pi}
\newcommand{\qpi}{Q^\pi}
\newcommand{\vstar}{V^*}
\newcommand{\qstar}{Q^*}
\newcommand{\pistar}{\pi^*}
\newcommand{\virtual}{\omega}
\newcommand{\tasks}{\mathcal{M}}
\newcommand{\universal}{\mathcal{U}}
\newcommand{\nothing}{\varnothing}
\newcommand{\rbig}{\reward_{\universal}}
\newcommand{\rsmall}{\reward_{\nothing}}
\newcommand{\mbig}{\tasks_{\universal}}
\newcommand{\msmall}{\tasks_{\nothing}}
\newcommand{\rbar}{\bar{\reward}}
\newcommand{\rbarmin}{\rbar_{\text{MIN}}}
\newcommand{\qbar}{\bar{\q}}
\newcommand{\qstarbar}{\bar{\q}^*}
\newcommand{\constantr}{\reward_{s,a}}
\newcommand{\qstarbarbig}{\qstarbar_{\universal}}
\newcommand{\qstarbarsmall}{\qstarbar_{\nothing}}
\newcommand{\goalq}{\bar{\mathcal{\q}}^*}
\newcommand{\rneg}[1]{
\reward_{\neg #1}
}
\newcommand{\rand}[2]{
\reward_{#1 \wedge #2}
}
\newcommand{\ror}[2]{
\reward_{#1 \vee #2}
}
\newcommand{\gstar}{G^*_{s:g, a}}
\title{A Boolean Task Algebra For Reinforcement Learning}
\author{%
  Geraud Nangue Tasse, Steven James, Benjamin Rosman
  \\
  School of Computer Science and Applied Mathematics\\
  University of the Witwatersrand\\
  Johannesburg, South Africa \\
  \texttt{geraudnt@gmail.com, \{steven.james, benjamin.rosman1\}@wits.ac.za}
}
\begin{document}

\maketitle

\begin{abstract}

The ability to compose learned skills to solve new tasks is an important property of lifelong-learning agents. 
In this work, we formalise the logical composition of tasks as a Boolean algebra. 
This allows us to formulate new tasks in terms of the negation, disjunction and conjunction of a set of base tasks.
We then show that by learning goal-oriented value functions and restricting the transition dynamics of the tasks, an agent can solve these new tasks with no further learning.
We prove that by composing these value functions in specific ways, we immediately recover the optimal policies for all tasks expressible under the Boolean algebra.
We verify our approach in two domains---including a high-dimensional video game environment requiring function approximation---where an agent first learns a set of base skills, and then composes them to solve a super-exponential number of new tasks. 

\end{abstract}

\section{Introduction}

Reinforcement learning (RL) has achieved recent success in a number of difficult, high-dimensional environments \citep{mnih15,levine16,lillicrap16,silver17}.
However, these methods generally require millions of samples from the environment to learn optimal behaviours, limiting their real-world applicability. 
A major challenge is thus in designing sample-efficient agents that can transfer their existing knowledge to solve new tasks quickly.
This is particularly important for agents in a multitask or lifelong setting, since learning to solve complex tasks from scratch is typically impractical.

One approach to transfer is \textit{composition} \citep{todorov09}, which allows an agent to leverage existing skills to build complex, novel behaviours.
These newly-formed skills can then be used to solve or speed up learning in a new task.  
In this work, we focus on concurrent composition, where existing base skills are combined to produce new skills \citep{todorov09,saxe17,haarnoja18,vanniekerk19,hunt19,peng19}.
This differs from other forms of composition, such as options \citep{sutton99} and hierarchical RL \citep{barto03}, where actions and skills are chained in a temporal sequence.  

While previous work on logical composition considers only the union and intersection of tasks \citep{haarnoja18,vanniekerk19,hunt19}, they do not formally define them. 
However, union and intersection are operations on sets, rather than tasks.
We therefore formalise the notion of union and intersection of tasks using the Boolean algebra structure, since this is the algebraic structure that abstracts the notions of union, intersection, and complement of sets. 
We then define a Boolean algebra over the space of optimal value functions, and then prove that there exists a homomorphism between the task and value function algebras. 
Given a set of base tasks that have been previously solved by the agent, any new task written as a Boolean expression can immediately be solved without further learning,  resulting in a zero-shot super-exponential explosion in the agent's abilities. We summarise our main contributions as follows:

\begin{enumerate}
	\item \textit{Boolean task algebra:} We formalise the disjunction, conjunction, and negation of tasks in a Boolean algebra structure. This extends previous composition work to encompass \textit{all} Boolean operators, and enables us to apply logic to tasks, much as we would to propositions.  
	
	\item \textit{Extended value functions:} We introduce a new type of goal-oriented value function that encodes how to achieve all goals in an environment. 
	We then prove that this richer value function allows us to achieve zero-shot composition when an agent is given a new task.
	
	\item \textit{Zero-shot composition:} We improve on previous work \citep{vanniekerk19} by showing zero-shot logical composition of tasks without any additional assumptions. This is an important result as it enables lifelong-learning agents to solve a super-exponentially increasing number of tasks as the number of base tasks they learn increase.
\end{enumerate}


We illustrate our approach in the Four Rooms domain \citep{sutton99}, where an agent first learns to reach a number of rooms, after which it can then optimally solve any task expressible in the Boolean algebra.
We then demonstrate composition in a high-dimensional video game environment, where an agent first learns to collect different objects, and then composes these abilities to solve complex tasks immediately.
Our results show that, even when function approximation is required, an agent can leverage its existing skills to solve new tasks without further learning.

\section{Preliminaries} \label{sec:pre}

We consider tasks modelled by Markov Decision Processes (MDPs).
An MDP is defined by the tuple $(\mathcal{\state}, \action, \dynamics, \reward)$, where 
\begin{enumerate*}[label=(\roman*)]
  \item $\state$ is the state space,
  \item $\action$ is the action space,
  \item $\dynamics$ is a Markov transition kernel $(s, a) \mapsto \rho_{(s, a)}$ from $\state \times \action$ to $\state$, and
  \item $\reward$ is the real-valued reward function bounded by $[\rmin, \rmax]$.
\end{enumerate*}
In this work, we focus on stochastic shortest path problems \citep{bertsekas91}, which model tasks in which an agent must reach some goal.
We therefore consider the class of undiscounted MDPs with an absorbing set $\goals \subseteq \state$.

The goal of the agent is to compute a Markov policy $\pi$ from $\state$ to $\action$ that optimally solves a given task.
A given policy $\pi$ induces a value function $\vpi(s) = \E_\pi \left[ \sum_{t=0}^{\infty} r(s_t, a_t) \right]$, specifying the expected return obtained under $\pi$ starting from state $s$.\footnote{Since we consider undiscounted MDPs, we can ensure the value function is bounded by augmenting the state space with a virtual state $\virtual$ such that $\dynamics_{(s,a)}({\virtual}) = 1$ for all $(s, a) \in \goals \times \action$, and $\reward = 0$ after reaching $\virtual$.}
The \textit{optimal} policy $\pistar$ is the policy that obtains the greatest expected return at each state: $\v^{\pistar}(s) = \vstar(s) = \max_\pi \vpi (s)$ for all $s \in \state$.
A related quantity is the $Q$-value function, $\qpi(s, a)$, which defines the expected return obtained by executing $a$ from $s$, and thereafter following $\pi$.
Similarly, the optimal $Q$-value function is given by $\qstar(s, a) = \max_\pi \qpi(s, a)$ for all $(s, a) \in \state \times \action$.
Finally, we denote a \textit{proper policy} to be a policy that is guaranteed to eventually reach the absorbing set $\goals$ \citep{james06,vanniekerk19}. 
We assume the value functions for improper policies---those that never reach absorbing states---are unbounded from below.

\section{Boolean Algebras for Tasks and Value Functions}

In this section, we develop the notion of a Boolean task algebra.
This  formalises the notion of task conjunction  ($\wedge$) and disjunction ($\vee$) introduced in previous work \citep{haarnoja18,vanniekerk19,hunt19}, while additionally introducing the concept of negation ($\neg$).  
We then show that, having solved a series of base tasks, an agent can use its knowledge to solve tasks expressible as a Boolean expression over those tasks, without any further learning.\footnote{Owing to space constraints, all proofs are presented in the supplementary material.}

We consider a family of related MDPs $\tasks$ restricted by the following assumption:
\begin{assump}[\cite{vanniekerk19}] 
For all tasks in a set of tasks $\tasks$,
\begin{enumerate*}[label=(\roman*)]
    \item the tasks share the same state space, action space and transition dynamics,
    \item the transition dynamics are deterministic, and
    \item the reward functions between tasks differ only on the absorbing set $\goals$.  For all non-terminal states, we denote the reward $\constantr$ to emphasise that it is constant across tasks.
\end{enumerate*}
\label{assump:1}
\end{assump}

Assumption~\ref{assump:1} represents the family of tasks where the environment remains the same but the goals and their desirability may vary. This is typically true for robotic navigation and manipulation tasks where there are multiple achievable goals, the goals we want the robot to achieve may vary, and how desirable those goals are may also vary. 
Although we have placed restrictions on the reward functions, the above formulation still allows for a large number of tasks to be represented. 
Importantly, sparse rewards can be formulated under these restrictions. 
In practice, however, all of these assumptions can be violated with minimal impact.
In particular, additional experiments in the supplementary material show that even for tasks with stochastic transition dynamics and dense rewards, and which differ in their terminal states, our composition approach still results in policies that are either identical or very close to optimal.

\subsection{A Boolean Algebra for Tasks}

An abstract Boolean algebra is a set $\mathcal{B}$ equipped with operators $\neg, \vee, \wedge$ that satisfy the Boolean axioms of 
\begin{enumerate*}[label=(\roman*)]
    \item idempotence,
    \item commutativity,
    \item associativity,
    \item absorption,
    \item distributivity,
    \item identity, and
    \item complements.\footnote{We provide a description of these axioms in the supplementary material.}
\end{enumerate*}

We first define the $\neg, \vee$, and $\wedge$ operators over a set of tasks.

\begin{mydef}
Let $\tasks$ be a set of tasks which adhere to Assumption~\ref{assump:1},  with $\mbig,\msmall \in \tasks$ such that 

\noindent\begin{minipage}{.5\linewidth}
\begin{align*}
  \reward_{\mbig}:~ \state \times \action &\to \sR \\
  (s, a) &\mapsto \max\limits_{M \in \tasks} r_{M}(s, a)
\end{align*}
\end{minipage}%
\begin{minipage}{.5\linewidth}
\begin{align*}
  \reward_{\msmall}:~\state \times \action &\to \sR \\
  (s, a) &\mapsto \min\limits_{M \in \tasks} r_{M}(s, a)
\end{align*}
\end{minipage}

Define the $\neg, \vee$, and $\wedge$ operators over $\tasks$ as

\begin{fleqn}
\begin{ceqn}
\setlength{\abovedisplayskip}{2pt}
\setlength{\belowdisplayskip}{2pt}
\begin{alignat*}{3}
  \neg :~\tasks &\to \tasks              \\
   M &\mapsto (\state, \action, \dynamics, \rneg{M}), \text{ where }    & \rneg{M}:~\state \times \action &\to \sR  \\
    & & (s, a)  &\mapsto \left(\reward_{\mbig}(s, a) + \reward_{\msmall}(s, a) \right) - \reward_M(s, a)
\end{alignat*}
\begin{alignat*}{3}
  \vee :~\tasks \times \tasks &\to \tasks              \\
   (M_1, M_2) &\mapsto (\state, \action, \dynamics, \ror{M_1}{M_2}), \text{ where }    & \ror{M_1}{M_2}:~\state \times \action &\to \sR  \\
    & & (s, a)  &\mapsto \max\{r_{M_1}(s, a), r_{M_2}(s, a)\}
\end{alignat*} 
\begin{alignat*}{3}
  \wedge :~\tasks \times \tasks &\to \tasks              \\
   (M_1, M_2) &\mapsto (\state, \action, \dynamics, \rand{M_1}{M_2}), \text{ where }    & \rand{M_1}{M_2}:~\state \times \action &\to \sR  \\
    & & (s, a)  &\mapsto \min\{r_{M_1}(s, a), r_{M_2}(s, a)\}
\end{alignat*} 
\end{ceqn}
\end{fleqn}
\end{mydef}

In order to formalise the logical composition of tasks under the Boolean algebra structure, it is necessary that the tasks have a Boolean nature. 
This is enforced by the following sparseness assumption:

\begin{assump} 
For all tasks in a set of tasks $\tasks$ which adhere to Assumption~\ref{assump:1}, the set of possible terminal rewards consists of only two values. That is, for all $(g, a)$ in $\goals \times \action$, we have that $\reward(g, a) \in \{ \rsmall, \rbig \} \subset [\rmin, \rmax]$ with $\rsmall \leq \rbig$.\footnote{While Assumption 2 is necessary to establish the Boolean algebra, we show in Theorem~\ref{thm:3} that it is not required for zero-shot negation, disjunction, and conjunction.}
\label{assump:2}
\end{assump}

Given the above definitions and the restrictions placed on the set of tasks we consider, we can now define a Boolean algebra over a set of tasks.

\begin{theorem}
Let $\tasks$ be a set of tasks which adhere to Assumption~\ref{assump:2}. Then $(\tasks,\vee,\wedge,\neg,\mbig,\msmall)$ is a Boolean algebra.

\label{thm:taskalgebra}
\end{theorem}

Theorem~\ref{thm:taskalgebra} allows us to compose existing tasks together to create new tasks in a principled way. 
Figure~\ref{fig:rewards} illustrates the semantics for each of the Boolean operators in a simple environment.










\subsection{Extended Value Functions}

The reward and value functions described in Section~\ref{sec:pre} are insufficient to solve tasks specified by the Boolean algebra above. To understand why, consider two tasks that have multiple different goals, but at least one common goal.
Clearly, there is a meaningful conjunction between them---namely,  achieving the common goal. 
Now consider an agent that learns standard value functions for both tasks, and which is then required to solve their conjunction without further learning.
Note that this is impossible in general, since the regular value function for each task only represents the value of each state with respect to the \textit{nearest} goal.
That is, for all states where the nearest goal for each task is \textit{not} the common goal, the agent has no information about that common goal. 
We therefore define extended versions of the reward and value function such that the agent is able to learn the value of achieving all goals, and not simply the nearest one. These are given by the following two definitions: 

\begin{mydef}
The \textit{extended} reward function $\rbar: \state \times \goals \times \action \to \mathbb{R}$ is given by the mapping
\begin{equation}
    (s, g, a) \mapsto \begin{cases}
\rbarmin & \text{if } g \neq s \in \goals\\
r(s, a) &\text{otherwise},
\end{cases}
\end{equation}
where $\rbarmin\leq \min \{\rmin, (\rmin - \rmax)D \}$, and $D$ is the diameter of the MDP \citep{jaksch10}.\footnote{The diameter is defined as $D = \max_{s \neq s^\prime \in \state} \min_{\pi} \E \left[ T(s^\prime | \pi, s) \right] $, where $T$ is the number of timesteps required to first reach $s^\prime$ from $s$ under $\pi$.} 
\end{mydef}

Because we require that tasks share the same transition dynamics, we also require that the absorbing set of states is shared. 
Thus the extended reward function adds the extra constraint that, if the agent enters a terminal state for a \textit{different} task, it should receive the largest penalty possible.
In practice, we can simply set $\rbarmin$ to be the lowest finite value representable by the data type used for the value function.

\begin{mydef}
The \textit{extended} Q-value  function $\qbar: \state \times \goals \times \action \to \mathbb{R}$ is given by the mapping
\begin{equation}
    (s, g, a) \mapsto \rbar(s, g, a) + \int_{\state} \bar{\v}^{\bar{\pi}}(s^\prime, g) \dynamics_{(s, a)} (ds^\prime),
\end{equation}
\end{mydef}
where $\bar{\v}^{\bar{\pi}}(s, g) = \E_{\bar{\pi}} \left[ \sum_{t=0}^{\infty} \rbar(s_t, g, a_t) \right]$.

The extended Q-value function is similar to DG functions \citep{kaelbling93} which also learn how to achieve all goals, except here we use task-dependent reward functions as opposed to measuring distance between states. \citet{veeriah2018many} refers to this idea of learning to achieve all goals in an environment as ``mastery''. We can see that the definition of extended Q-value functions encapsulates this notion for arbitrary task rewards. 


The standard reward functions and value functions can be recovered from their extended versions through the following lemma.

\begin{lemma}
Let $\reward_{M}, \rbar_{M}, \qstar_M, \qstarbar_M$ be the reward function, extended reward function, optimal Q-value function, and optimal extended Q-value function for a task $M$ in $\tasks$. 
Then for all $(s, a)$ in $\state \times \action$, we have
\begin{enumerate*}[label=(\roman*)]
    \item $\reward_M(s, a) = \max\limits_{g \in \goals} \rbar_M(s, g, a)$, and
    \item $\qstar_M(s, a) = \max\limits_{g \in \goals} \qstarbar_M(s, g, a)$.
\end{enumerate*}
\label{lem:1}
\end{lemma}

\begin{figure*}[t!]
    \centering
    \begin{subfigure}[t]{0.15\textwidth}
        \centering
        \includegraphics[height=0.9in]{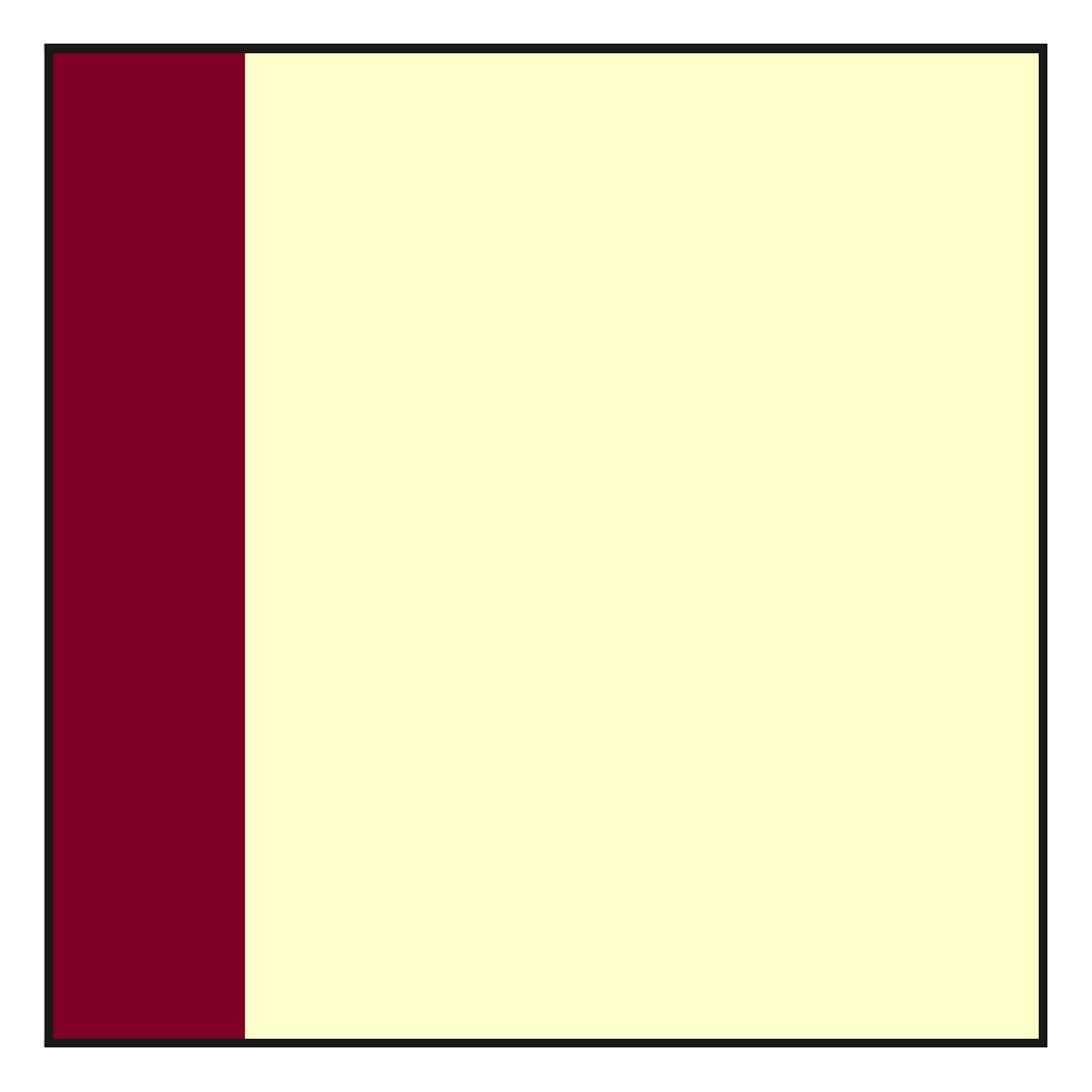}
        \caption{$\reward_{M_\text{LEFT}}$}
    \end{subfigure}%
    ~ 
        \begin{subfigure}[t]{0.15\textwidth}
        \centering
        \includegraphics[height=0.9in]{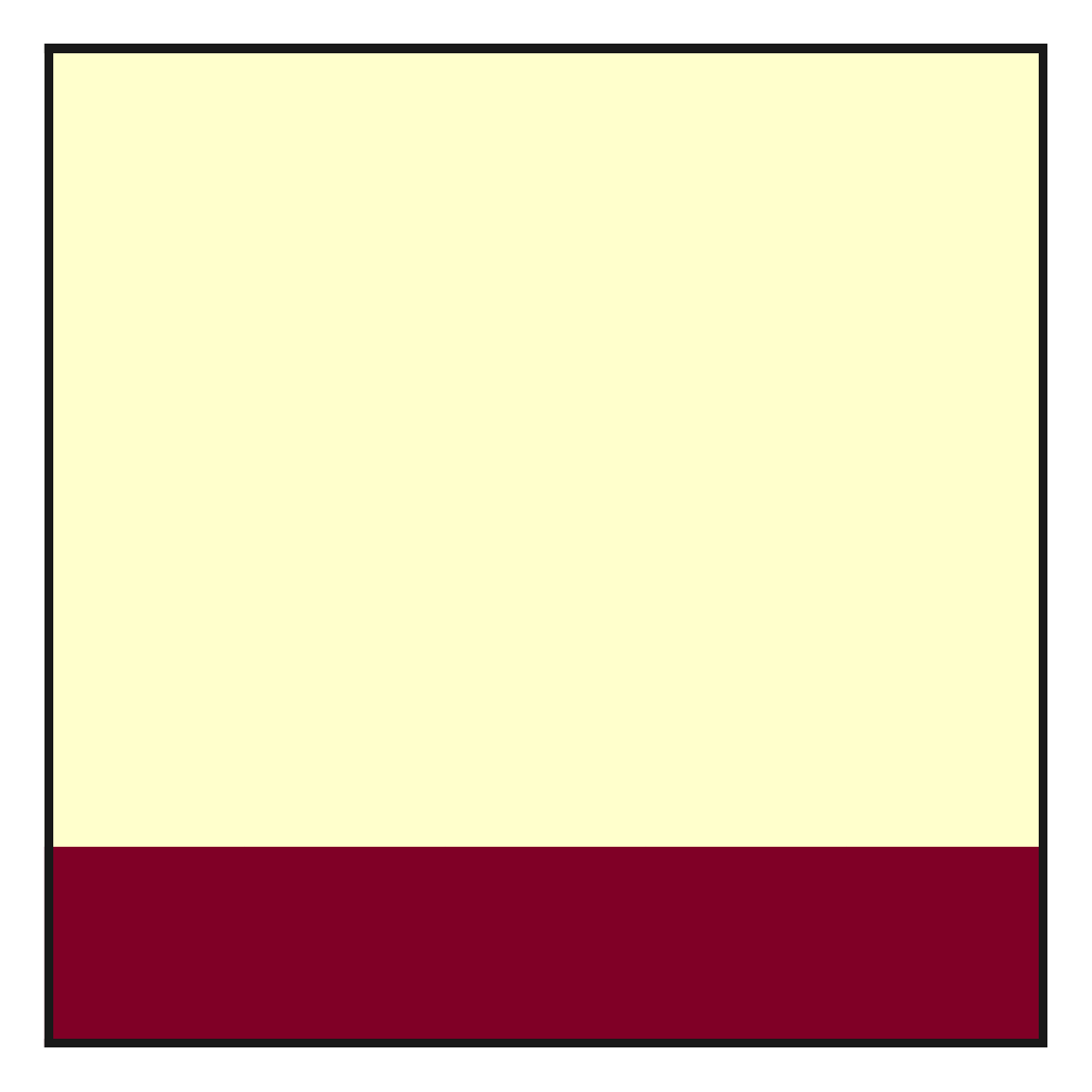}
        \caption{$\reward_{M_\text{DOWN}}$}
    \end{subfigure}%
    ~ 
    \begin{subfigure}[t]{0.15\textwidth}
        \centering
        \includegraphics[height=0.9in]{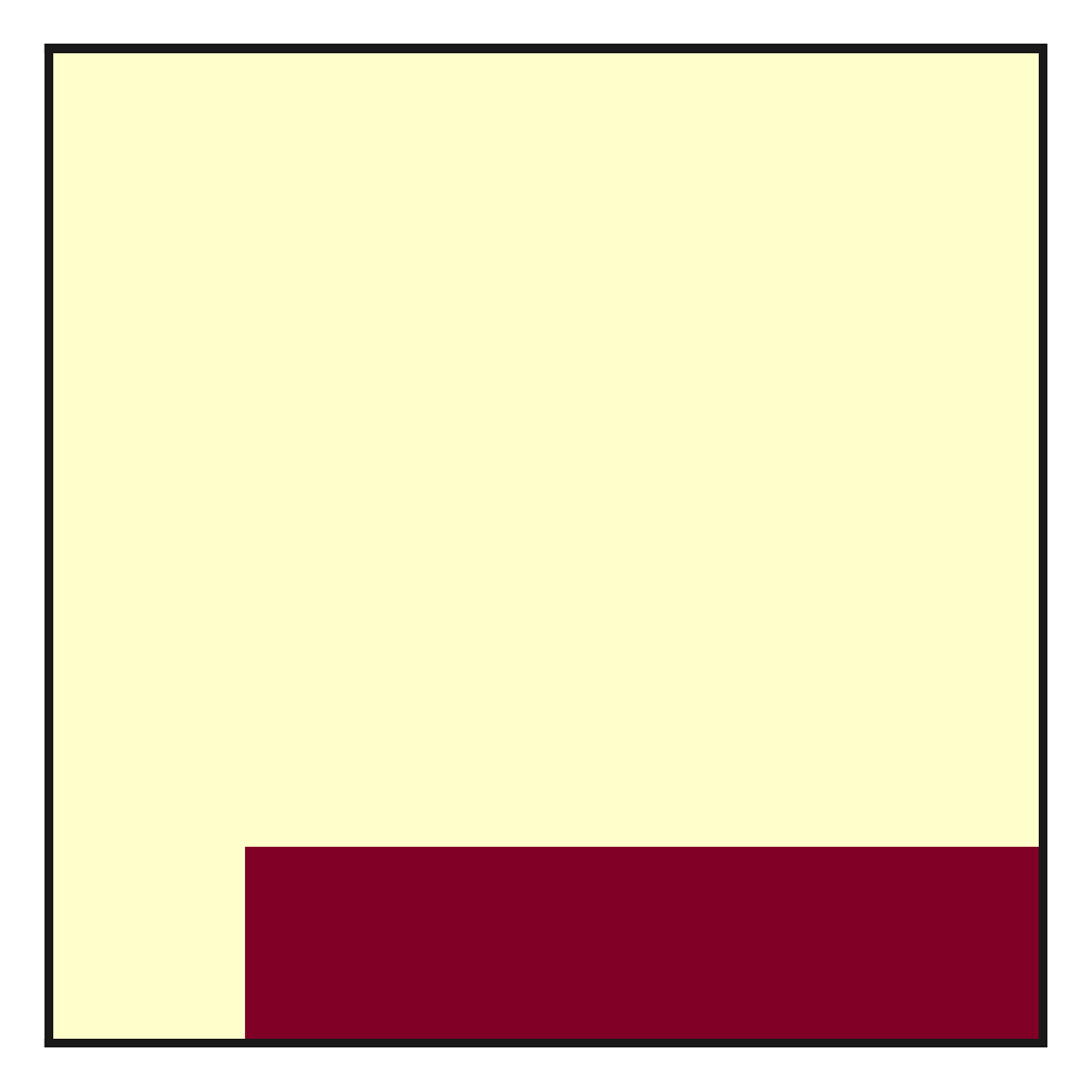}
        \caption{$\reward_{M_{\neg \text{LEFT}}}$}
    \end{subfigure}%
    ~ 
    \begin{subfigure}[t]{0.15\textwidth}
        \centering
        \includegraphics[height=0.9in]{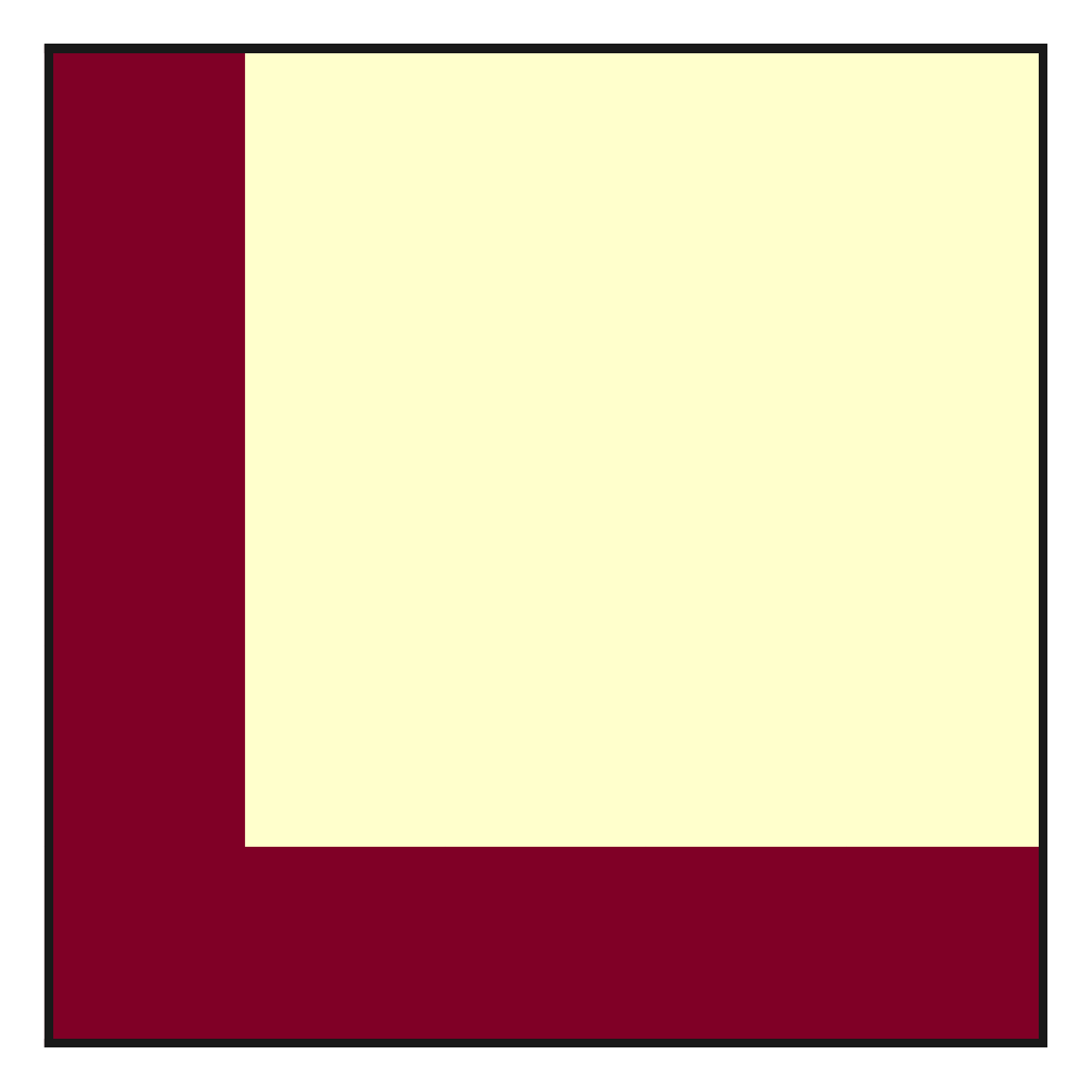}
        \caption{Disjunction}
    \end{subfigure}%
    ~ 
    \begin{subfigure}[t]{0.15\textwidth}
        \centering
        \includegraphics[height=0.9in]{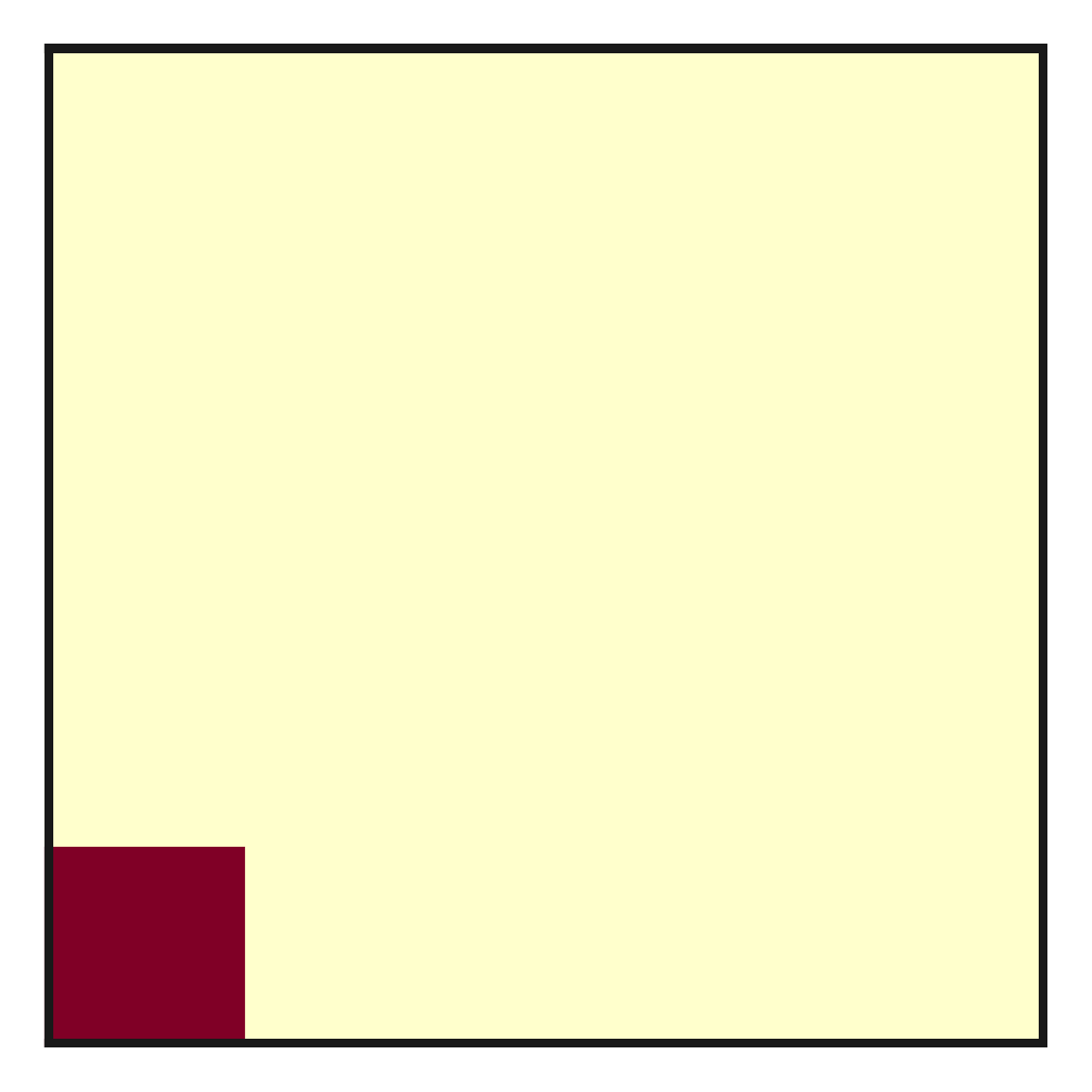}
        \caption{Conjunction}
    \end{subfigure}%
    ~ 
    \begin{subfigure}[t]{0.15\textwidth}
        \centering
        \includegraphics[height=0.9in]{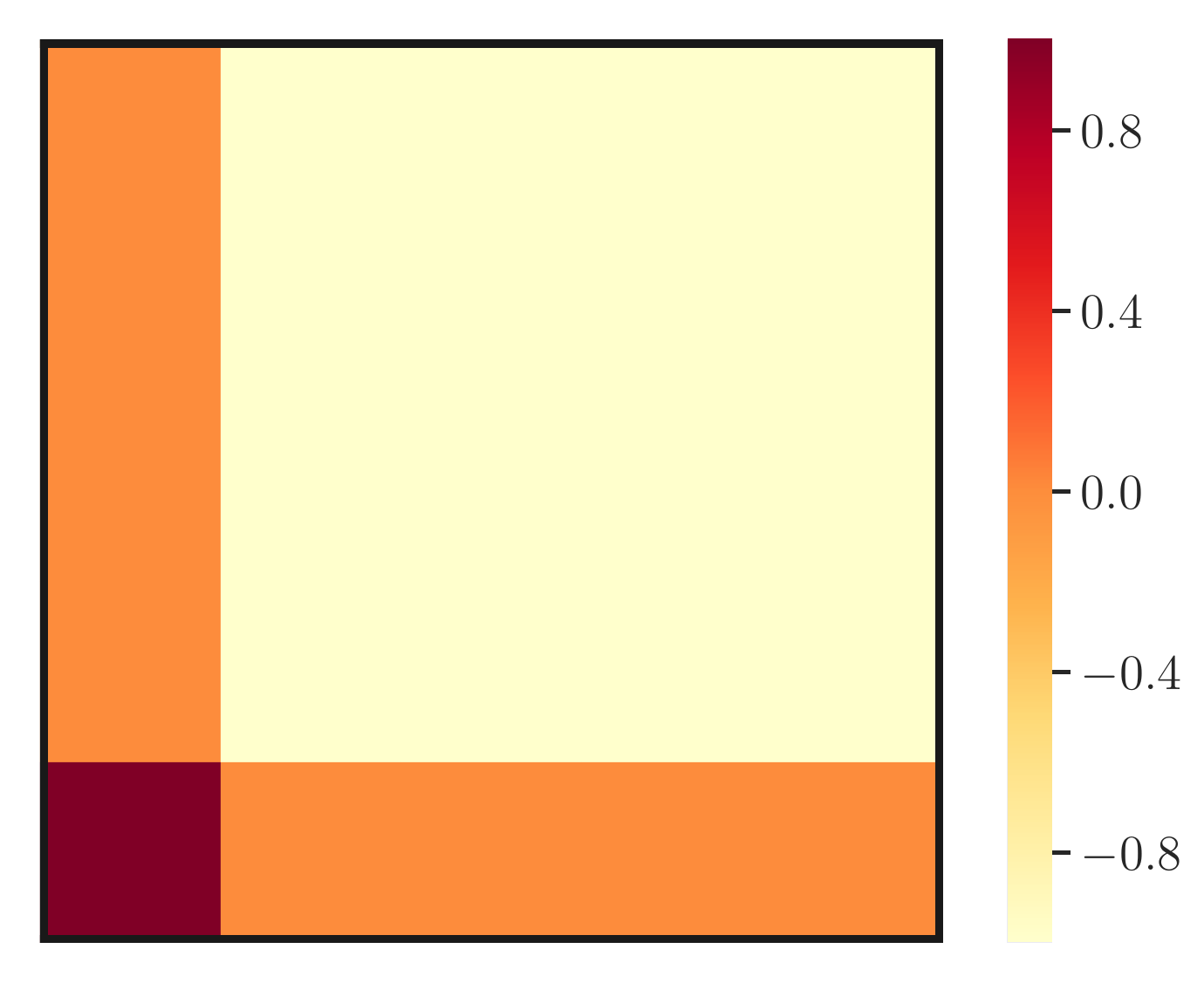}
        \caption{Average}
    \end{subfigure}%
    \caption{Consider two tasks, $M_\text{LEFT}$ and $M_\text{DOWN}$, in which an agent must navigate to the left and bottom regions of an $xy$-plane respectively. From left to right we plot the reward for entering a region of the state space for the individual tasks, the negation of $M_\text{LEFT}$, and  the union (disjunction) and intersection (conjunction) of tasks. For reference, we also plot the average reward function, which has been used in previous work to approximate the conjunction operator \citep{haarnoja18,hunt19,vanniekerk19}. Note that by averaging reward, terminal states that are not in the intersection are erroneously given rewards.}
    \label{fig:rewards}
\end{figure*}

In the same way, we can also recover the optimal policy from these extended value functions by first applying Lemma~\ref{lem:1}, and acting greedily with respect to the resulting value function.

\begin{lemma}
Denote $\state^{-} = \state \setminus \goals  $ as the non-terminal states of $\tasks$.  Let $M_1, M_2 \in \tasks$, and let each $g$ in $\goals$ define MDPs $M_{1,g}$ and $M_{2,g}$ with reward functions 
\[
r_{M_{1,g}} \vcentcolon = \rbar_{M_1}(s,g,a) \text{ and } r_{M_{2,g}} \vcentcolon = \rbar_{M_2}(s,g,a) \text{ for all } (s,a) \text{ in } \state \times \action.
\]
Then for all $g$ in $\goals$ and $s$ in $\state^{-}$,
\[
\pistar_g(s) \in \argmax\limits_{a \in \action}\qstar_{M_{1, g}}(s, a) \text{ iff } \pistar_g(s) \in \argmax\limits_{a \in \action}\qstar_{M_{2, g}}(s, a).
\]
\label{lem:2}
\end{lemma}

Combining Lemmas \ref{lem:1} and \ref{lem:2}, we can extract the greedy action from the extended value function by first maximising over goals, and then selecting the maximising action: $\pistar(s) \in \argmax_{a \in \action} \max_{g \in \goals} \qstarbar(s, g, a)$.
If we consider the extended value function to be a set of standard value functions (one for each goal), then this is equivalent to first performing generalised policy improvement \citep{barreto17}, and then selecting the greedy action.

Finally, much like the regular definition of value functions, the extended Q-value function can be written as the sum of rewards received by the agent until first encountering a terminal state.

\begin{corollary}
Denote $\gstar$ as the sum of rewards starting from $s$ and taking action $a$ up until, but not including, $g$. Then let $M \in \tasks$ and $\qstarbar_M$ be the extended Q-value function. Then for all $s \in \state, g \in \goals, a \in \action$, there exists a $\gstar \in \R$ such that
\begin{equation*}
    \qstarbar_M(s, g, a) = \gstar + \rbar_M(s^\prime,g, a^\prime), \text{ where } s^\prime \in \goals \text{ and } a^\prime = \argmax_{b \in \action} \rbar_M(s^\prime,g, b).
\end{equation*}
\label{cor:1}
\end{corollary}

\subsection{A Boolean Algebra for Value Functions}

In the same manner we constructed a Boolean algebra over a set of tasks, we can also do so for a set of optimal extended Q-value functions for the corresponding tasks.

\begin{mydef}
Let $\goalq$ be the set of optimal extended $\bar{Q}$-value functions for tasks in $\tasks$ which adhere to Assumption~\ref{assump:1},  with $\qstarbarsmall, \qstarbarbig \in \goalq$ the optimal $\bar{Q}$-functions for the tasks $\msmall, \mbig \in \tasks$ .Define the $\neg, \vee$, and $\wedge$ operators over $\goalq$ as,

\begin{fleqn}
\begin{ceqn}
\setlength{\abovedisplayskip}{2pt}
\setlength{\belowdisplayskip}{2pt}
\begin{alignat*}{3}
  \neg :~\goalq &\to \goalq              \\
   \qstarbar &\mapsto \neg \qstarbar, \text{ where }    & \neg \qstarbar:~\state \times \goals \times \action &\to \sR  \\
    & & (s, g, a)  &\mapsto \left(\qstarbarbig(s,g, a) + \qstarbarsmall(s, g, a) \right) - \qstarbar(s, g, a)
\end{alignat*} 
\begin{alignat*}{3}
  \vee :~\goalq \times \goalq &\to \goalq              \\
   (\qstarbar_1, \qstarbar_2) &\mapsto \qstarbar_1 \vee \qstarbar_2, \text{ where }    &  \qstarbar_1 \vee \qstarbar_2:~\state \times \goals \times \action &\to \sR \\
    & & (s, g, a)  &\mapsto \max\{\qstarbar_{1}(s, g, a), \qstarbar_{2}(s, g, a)\}
\end{alignat*} 
\begin{alignat*}{3}
  \wedge :~\goalq  \times \goalq &\to \goalq              \\
   (\qstarbar_1, \qstarbar_2) &\mapsto \qstarbar_1 \wedge \qstarbar_2, \text{ where }    & \qstarbar_1 \wedge \qstarbar_2:~\state \times \goals \times \action &\to \sR  \\
    & & (s, g, a)  &\mapsto \min\{\qstarbar_{1}(s, g, a), \qstarbar_{2}(s, g, a)\}
\end{alignat*} 
\end{ceqn}
\end{fleqn}
\end{mydef}

\begin{theorem}
Let $\goalq$ be the set of optimal extended $\bar{Q}$-value functions for tasks in $\tasks$ which adhere to Assumption~\ref{assump:2}. Then $(\goalq,\vee,\wedge,\neg,\qstarbarbig,\qstarbarsmall)$ is a Boolean Algebra.
\label{thm:valuealgebra}
\end{theorem}

\subsection{Between Task and Value Function Algebras}

Having established a Boolean algebra over tasks and extended value functions, we finally show that there exists an equivalence between the two. 
As a result, if we can write down a task under the Boolean algebra, we can immediately write down the optimal value function for the task.

\begin{theorem}
Let $\goalq$ be the set of optimal extended $\bar{Q}$-value functions for tasks in $\tasks$ which adhere to Assumption~\ref{assump:1}. Then for all $M_1, M_2 \in \tasks$, we have 
\begin{enumerate*}[label=(\roman*)]
    \item $\qstarbar_{\neg M_1} = \neg \qstarbar_{M_1}$,
    \item $\qstarbar_{M_1 \vee M_2} = \qstarbar_{M_1} \vee \qstarbar_{M_2}$, and 
    \item $\qstarbar_{M_1 \wedge M_2} = \qstarbar_{M_1} \wedge \qstarbar_{M_2}$.
\end{enumerate*}

\label{thm:3}
\end{theorem}

\begin{corollary}
Let $\mathcal{F}~: \tasks \to \goalq$ be any map from $\tasks$ to $\goalq$ such that $\mathcal{F}(M) = \qstarbar_M$ for all $M$ in $\tasks$. Then $\mathcal{F}$ is a homomorphism between $(\tasks,\vee,\wedge,\neg,\mbig,\msmall)$ and $(\goalq,\vee,\wedge,\neg,\qstarbarbig,\qstarbarsmall)$.
\label{cor:2}
\end{corollary}

Theorem~\ref{thm:3} shows that we can provably achieve zero-shot negation, disjunction, and conjunction provided Assumption~\ref{assump:1} is satisfied. Corollary~\ref{cor:2} extends this result by showing that the task and value function algebras are in fact homomorphic, which implies zero-shot composition of arbitrary combinations of negations, disjunctions, and conjunctions.

\section{Zero-shot Transfer Through Composition} \label{sec:rooms}

We can use the theory developed in the previous sections to perform zero-shot transfer by first learning extended value functions for a set of base tasks, and then composing them to solve new tasks expressible under the Boolean algebra. 
To demonstrate this, we conduct a series of experiments in the Four Rooms domain \citep{sutton99}, where an agent must navigate a grid world to a particular location.
The agent can move in any of the four cardinal directions at each timestep, but colliding with a wall leaves the agent in the same location. We add a 5th action for ``stay'' that the agent chooses to achieve goals. A goal position only becomes terminal if the agent chooses to stay in it. 
The transition dynamics are deterministic, and rewards are $-0.1$ for all non-terminal states, and $2$ at the goal.

\subsection{Learning Base Tasks}

We use a modified version of Q-learning \citep{watkins89} to learn the extended Q-value functions described previously.
Our algorithm differs in a number of ways from standard Q-learning: we keep track of the set of terminating states seen so far, and at each timestep we update the extended Q-value function with respect to both the current state and action, as well as all goals encountered so far.
We also use the definition of the extended reward function, and so if the agent encounters a terminal state of a different task, it receives reward $\rbarmin$.  
The full pseudocode is listed in the supplementary material.

If we know the set of goals (and hence potential base tasks) upfront, then it is easy to select a minimal set of base tasks that can be composed to produce the largest number of composite tasks.
We first assign a Boolean label to each goal in a table, and then use the columns of the table as base tasks. The goals for each base task are then those goals with value $1$ according to the table.
In this domain, the two base tasks we select are $M_{\text{T}}$, which requires that the agent visit either of the top two rooms, and $M_{\text{L}}$, which requires visiting the two left rooms. 
We illustrate this selection procedure in the supplementary material.

\subsection{Boolean Composition}

Having learned the optimal extended value functions for our base tasks, we can now leverage Theorems~\ref{thm:taskalgebra}--\ref{thm:3} to solve new tasks with no further learning. 
Figure~\ref{fig:composition} illustrates this composition, where an agent is able to immediately solve complex tasks such as exclusive-or.
We illustrate a few composite tasks here, but note that in general, if we have $K$ base tasks, then a Boolean algebra allows for $2^{2^K}$  new tasks to be constructed.
Thus having trained on only two tasks, our agent has enough information to solve a total of 16 composite tasks.

\begin{figure*}[h!]
    \centering
    \begin{subfigure}[t]{0.15\textwidth}
        \centering
        \includegraphics[height=0.85in]{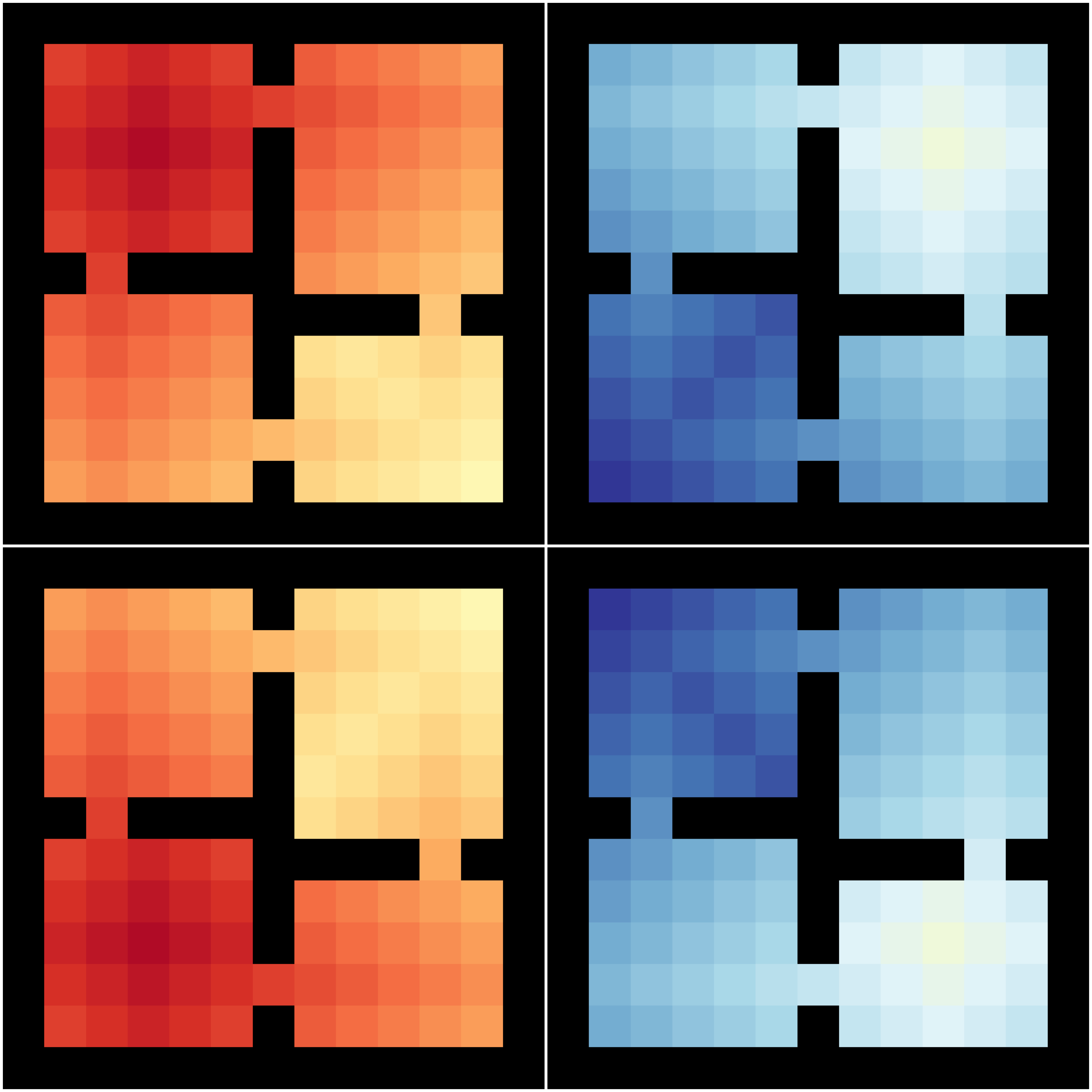}
        \includegraphics[height=0.85in]{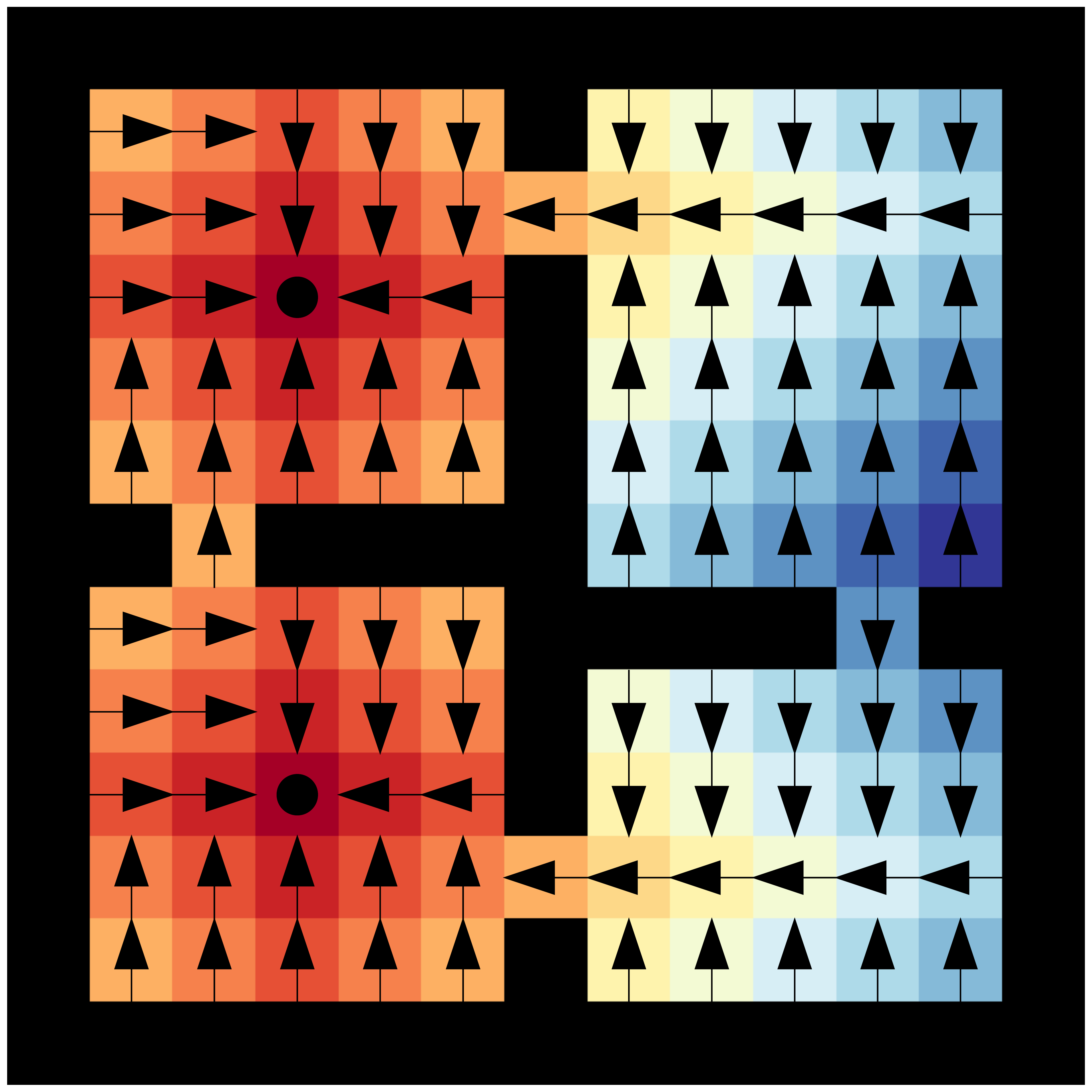}
        \caption{$M_{\text{L}}$}
        \label{fig:a}
    \end{subfigure}%
    ~ 
        \begin{subfigure}[t]{0.15\textwidth}
        \centering
        \includegraphics[height=0.85in]{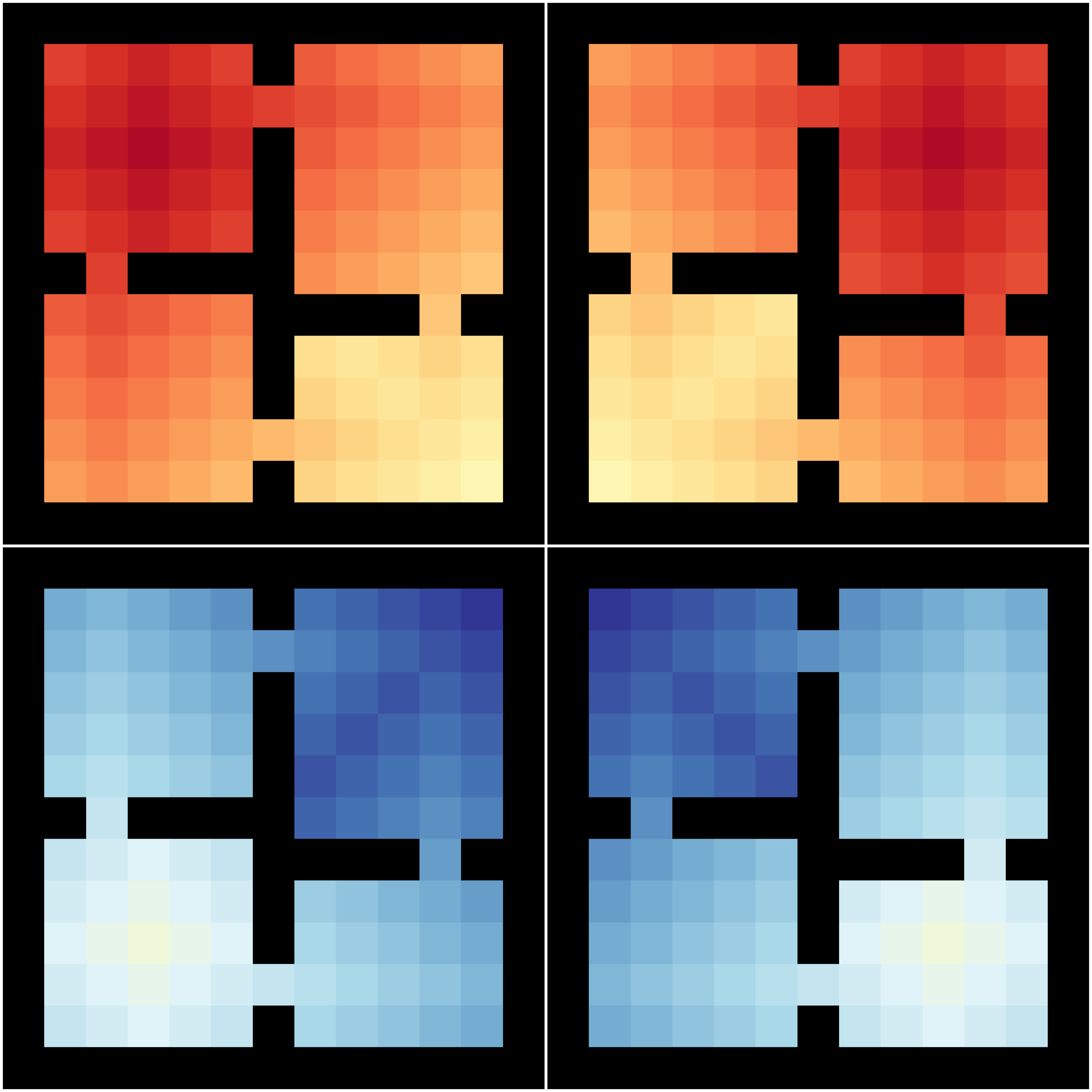}
        \includegraphics[height=0.85in]{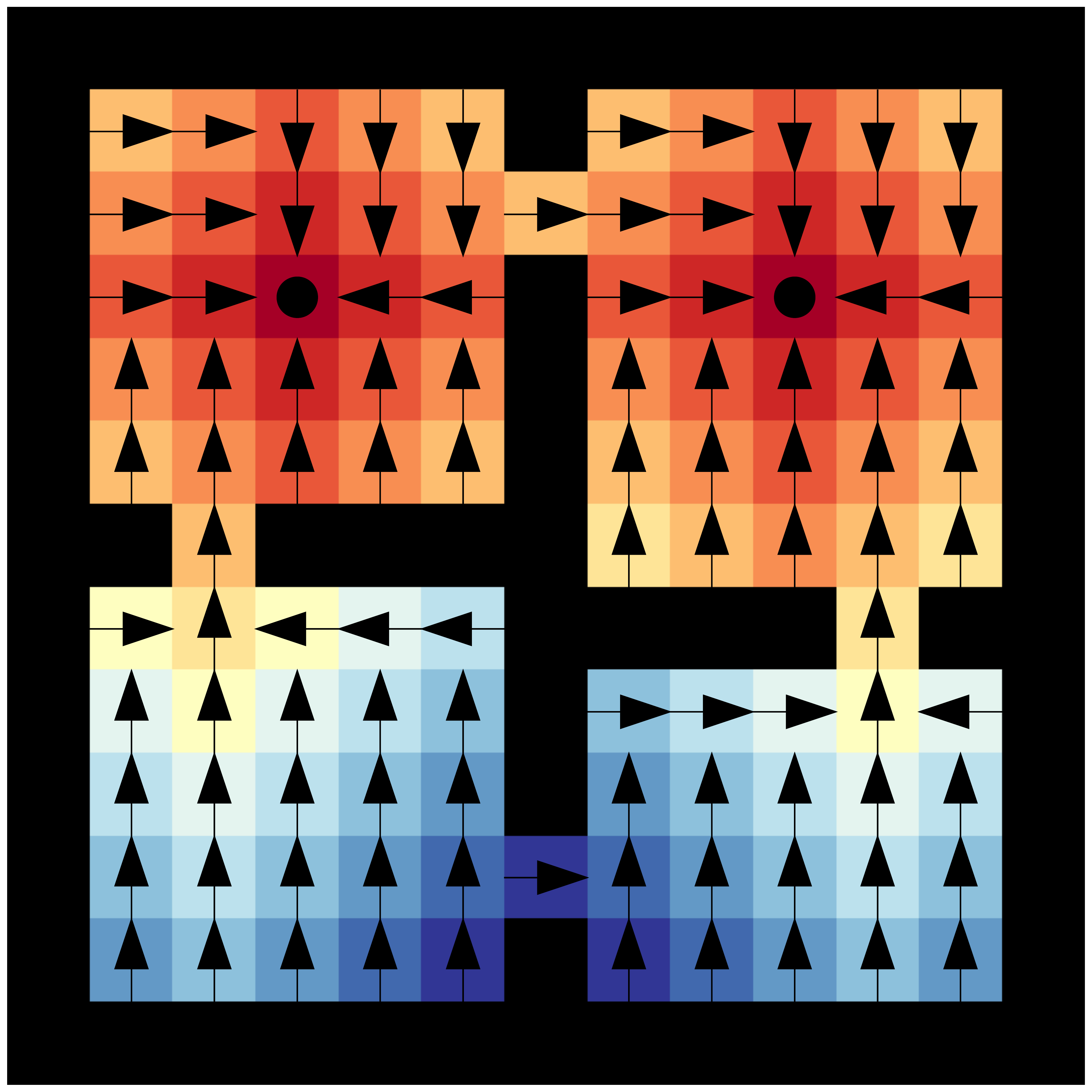}
        \caption{$M_{\text{T}}$}
        \label{fig:b}
    \end{subfigure}%
    ~ 
    \begin{subfigure}[t]{0.15\textwidth}
        \centering
        \includegraphics[height=0.85in]{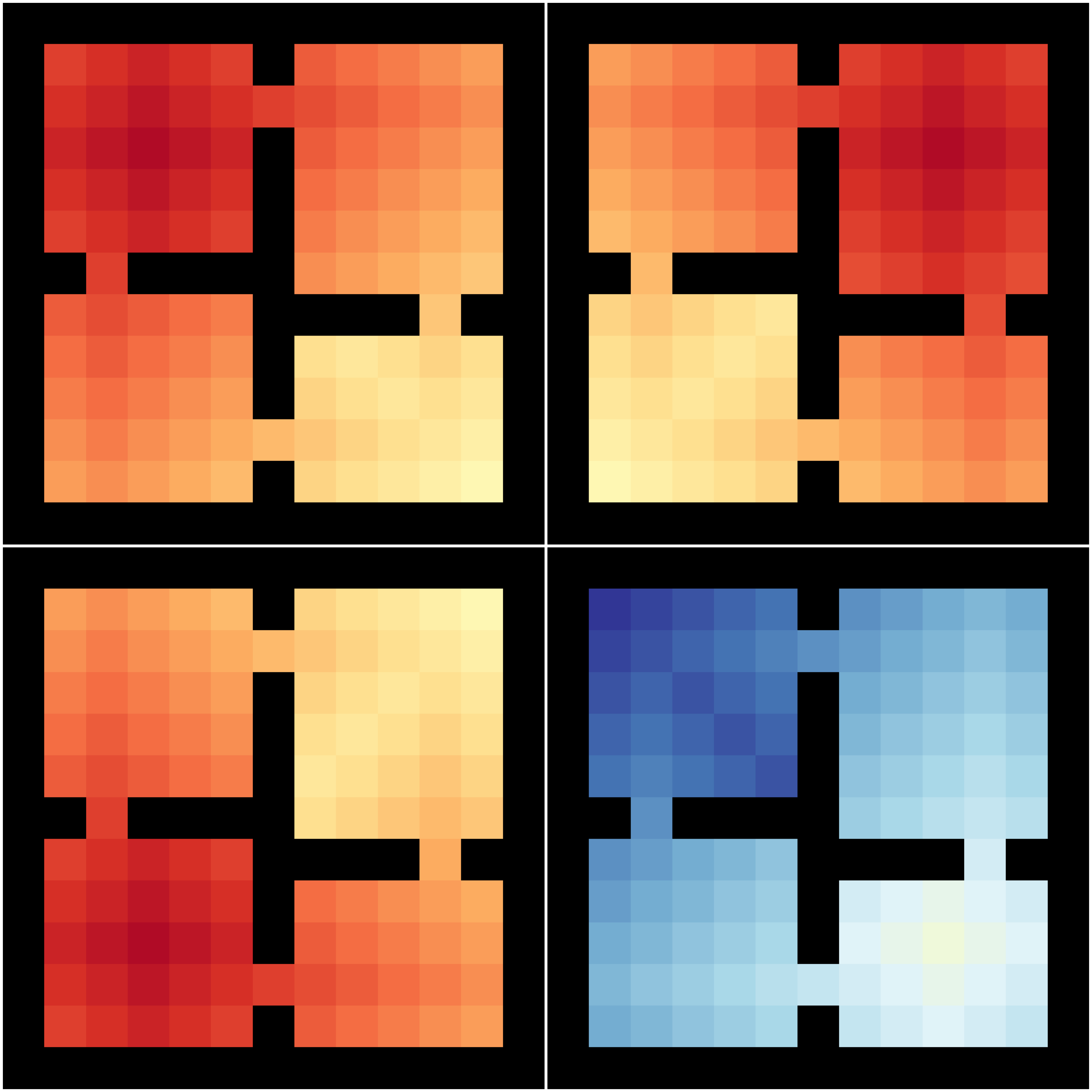}
        \includegraphics[height=0.85in]{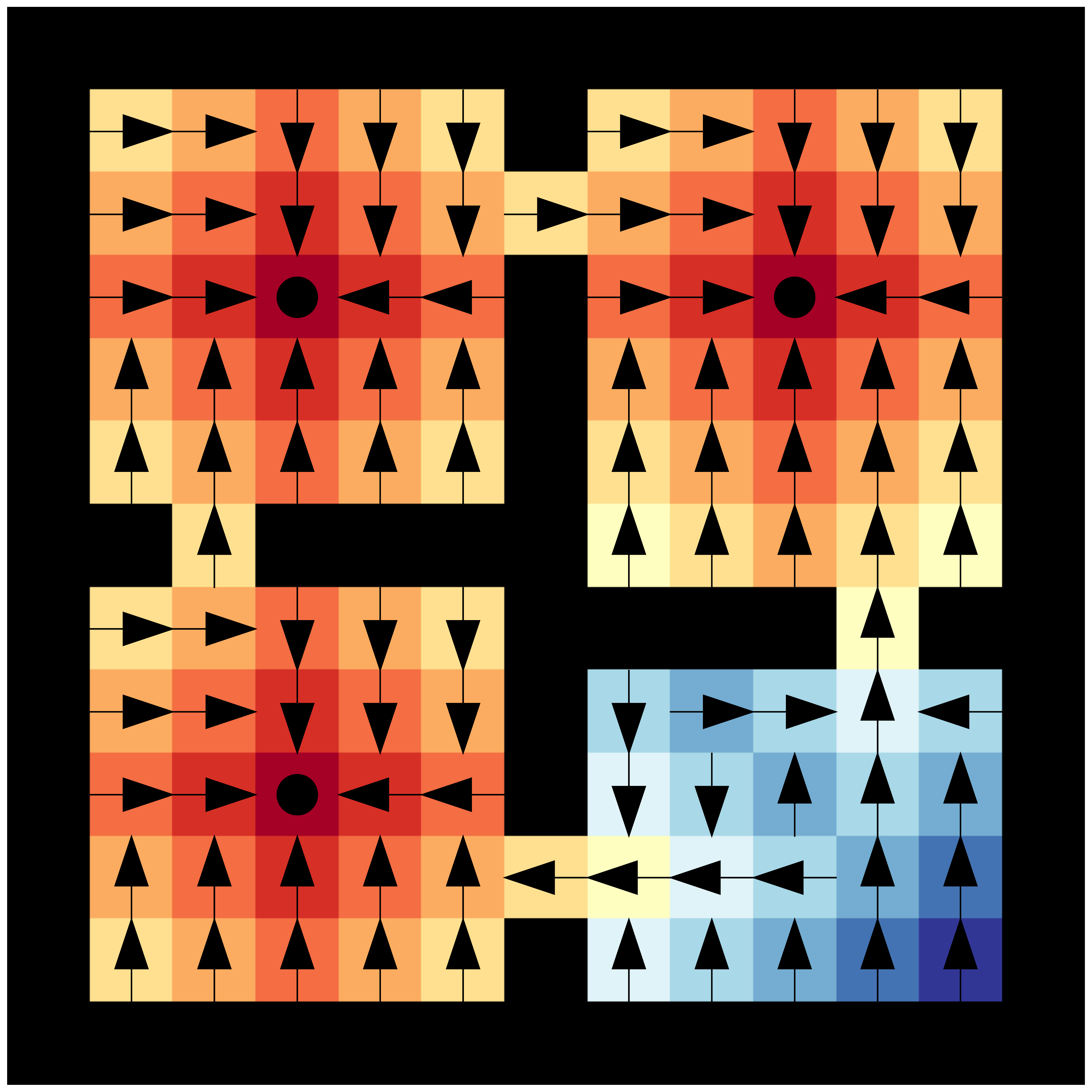}
        \caption{$M_{\text{L}} \vee M_{\text{T}}$}
        \label{fig:c}
    \end{subfigure}%
    ~
    \begin{subfigure}[t]{0.15\textwidth}
        \centering
        \includegraphics[height=0.85in]{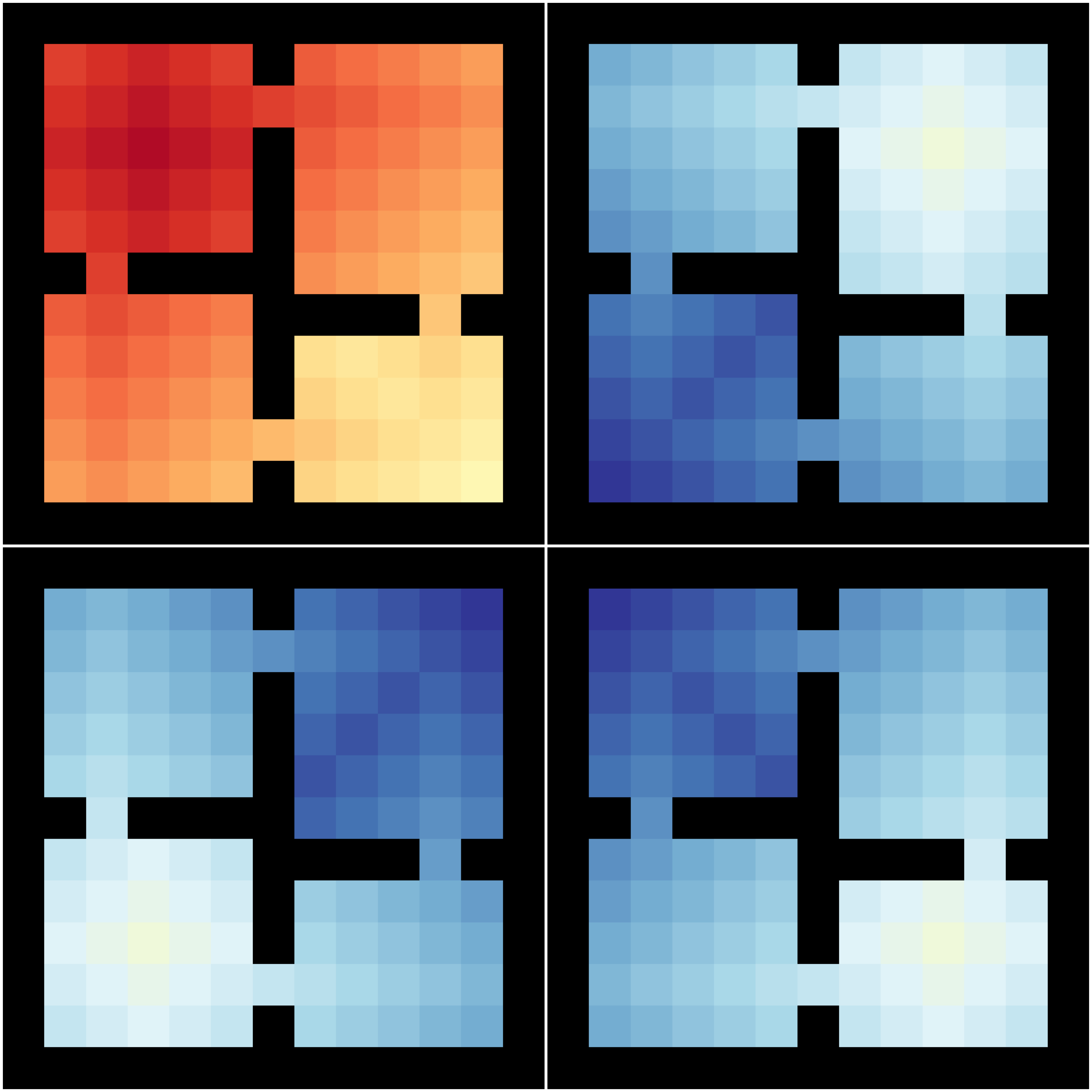}
        \includegraphics[height=0.85in]{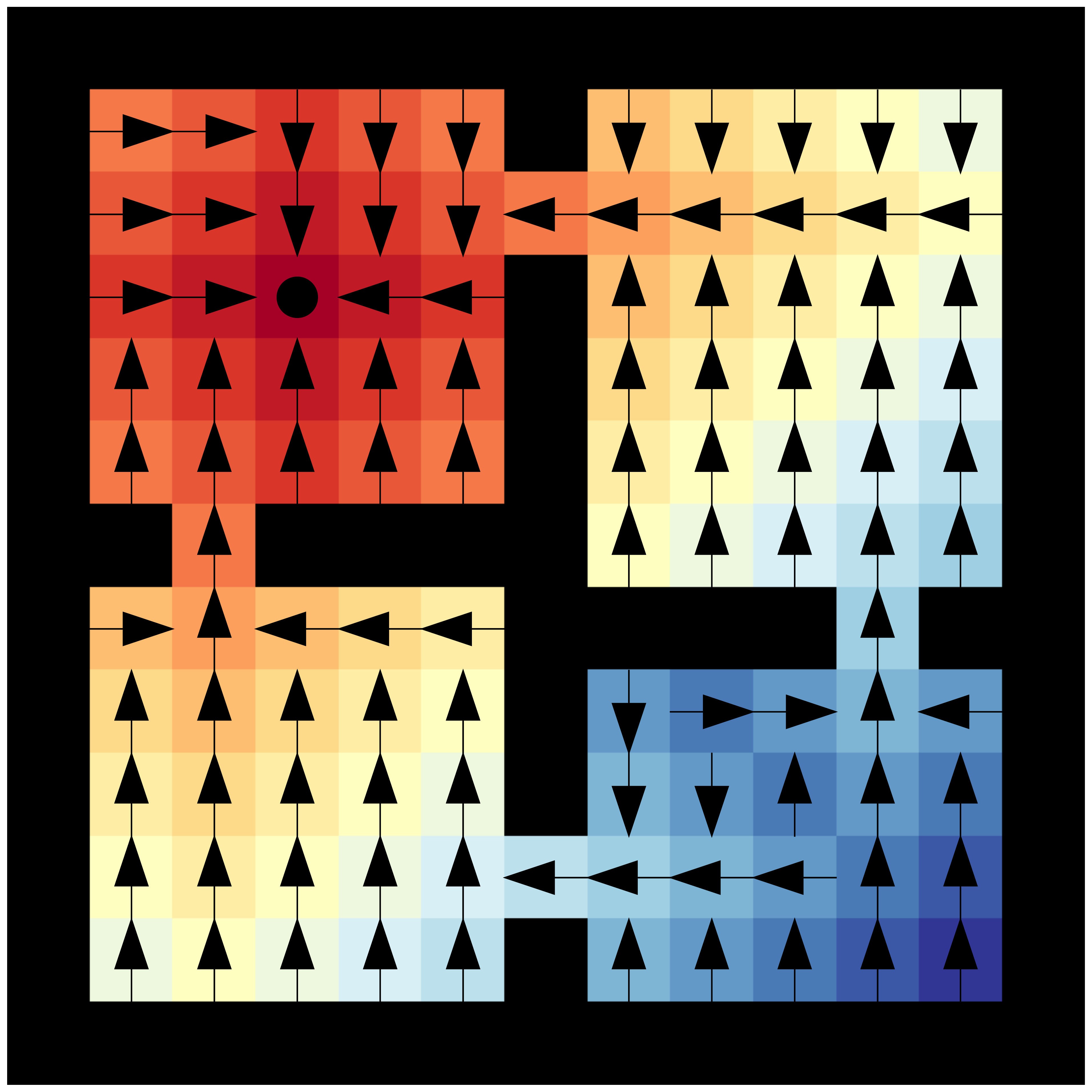}
        \caption{$M_{\text{L}} \wedge M_{\text{T}}$}
        \label{fig:d}
    \end{subfigure}%
    ~ 
    \begin{subfigure}[t]{0.15\textwidth}
        \centering
        \includegraphics[height=0.85in]{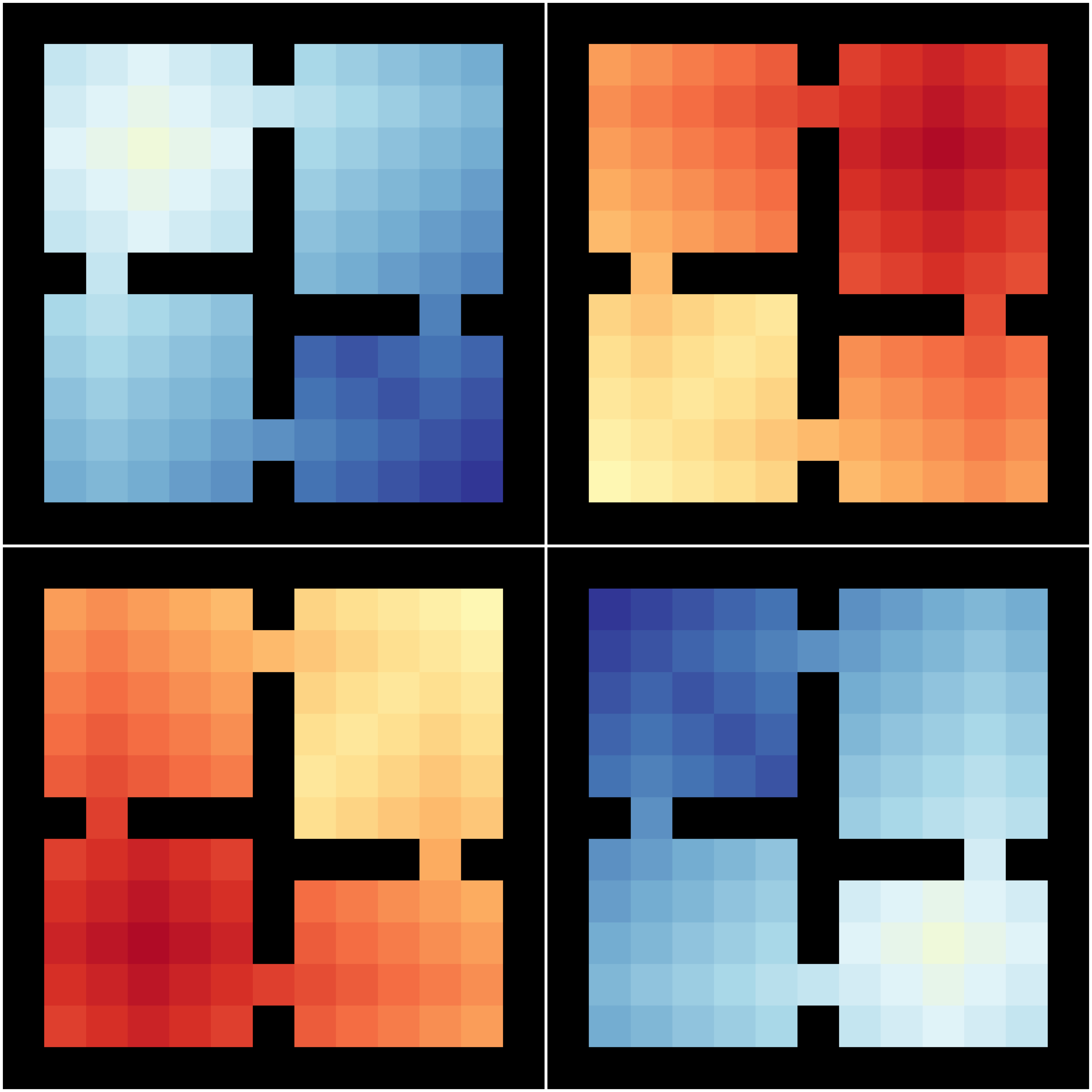}
        \includegraphics[height=0.85in]{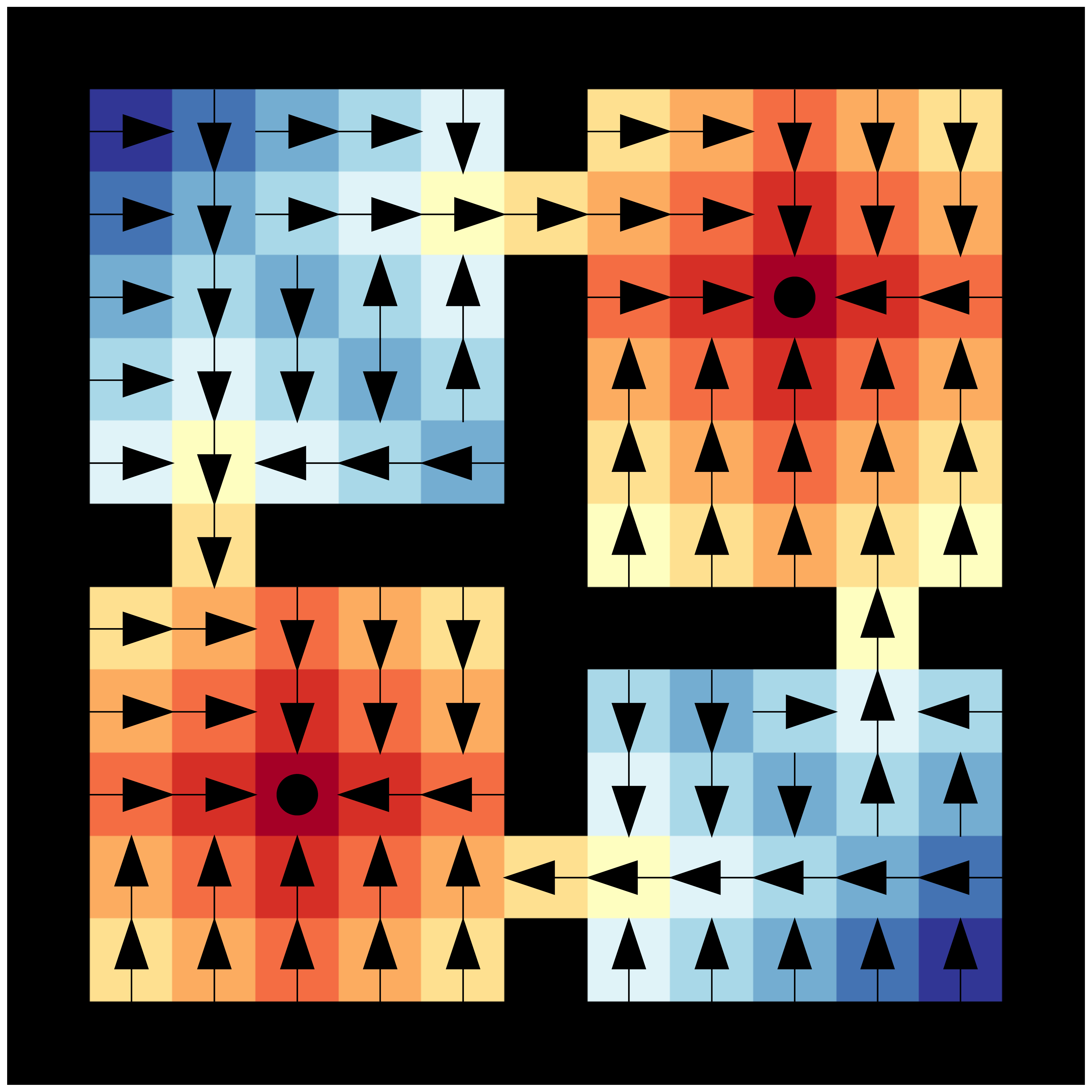}
        \caption{$M_{\text{L}} \veebar M_{\text{T}}$}
        \label{fig:e}
    \end{subfigure}%
    ~ 
    \begin{subfigure}[t]{0.15\textwidth}
        \centering
        \includegraphics[height=0.85in]{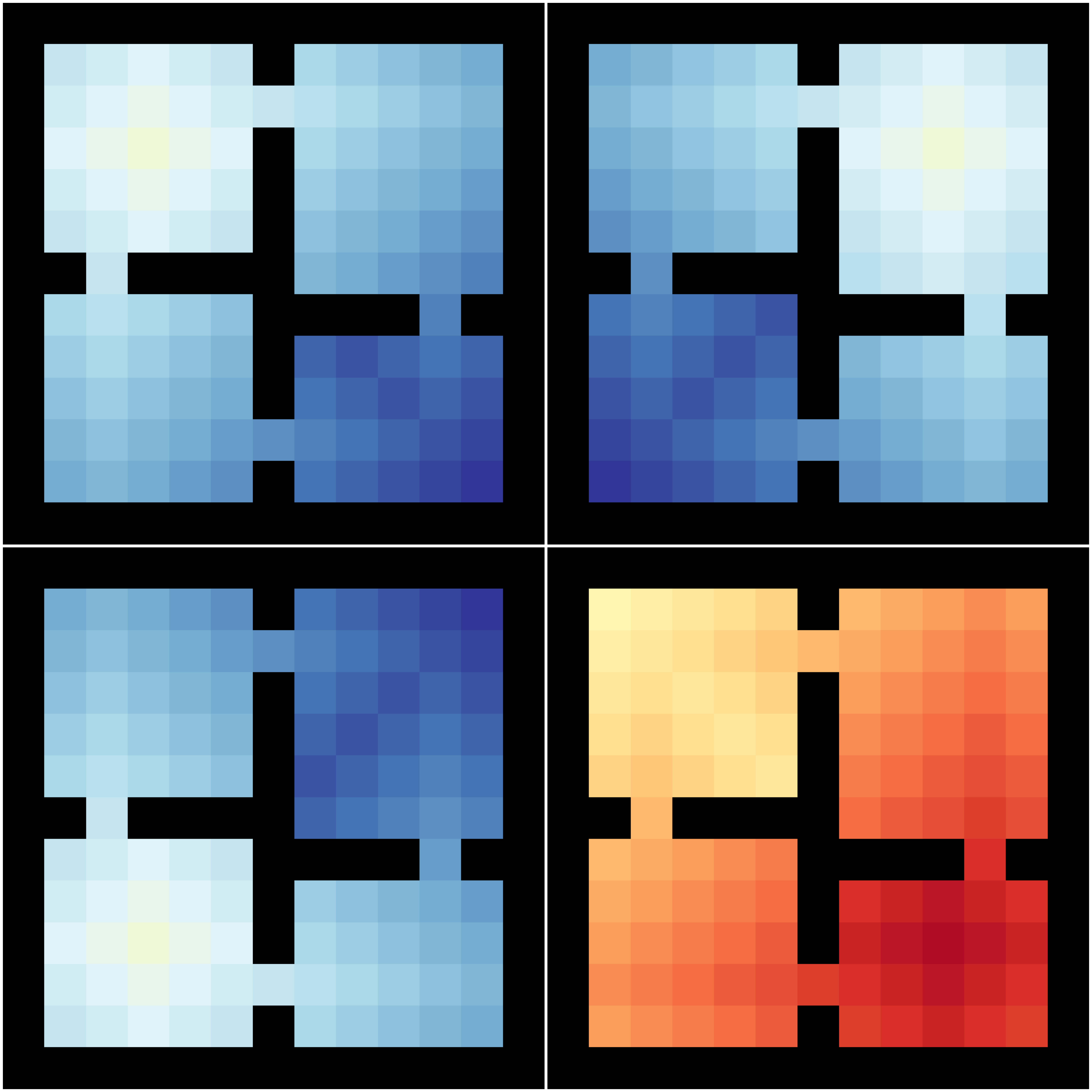}
        \includegraphics[height=0.85in]{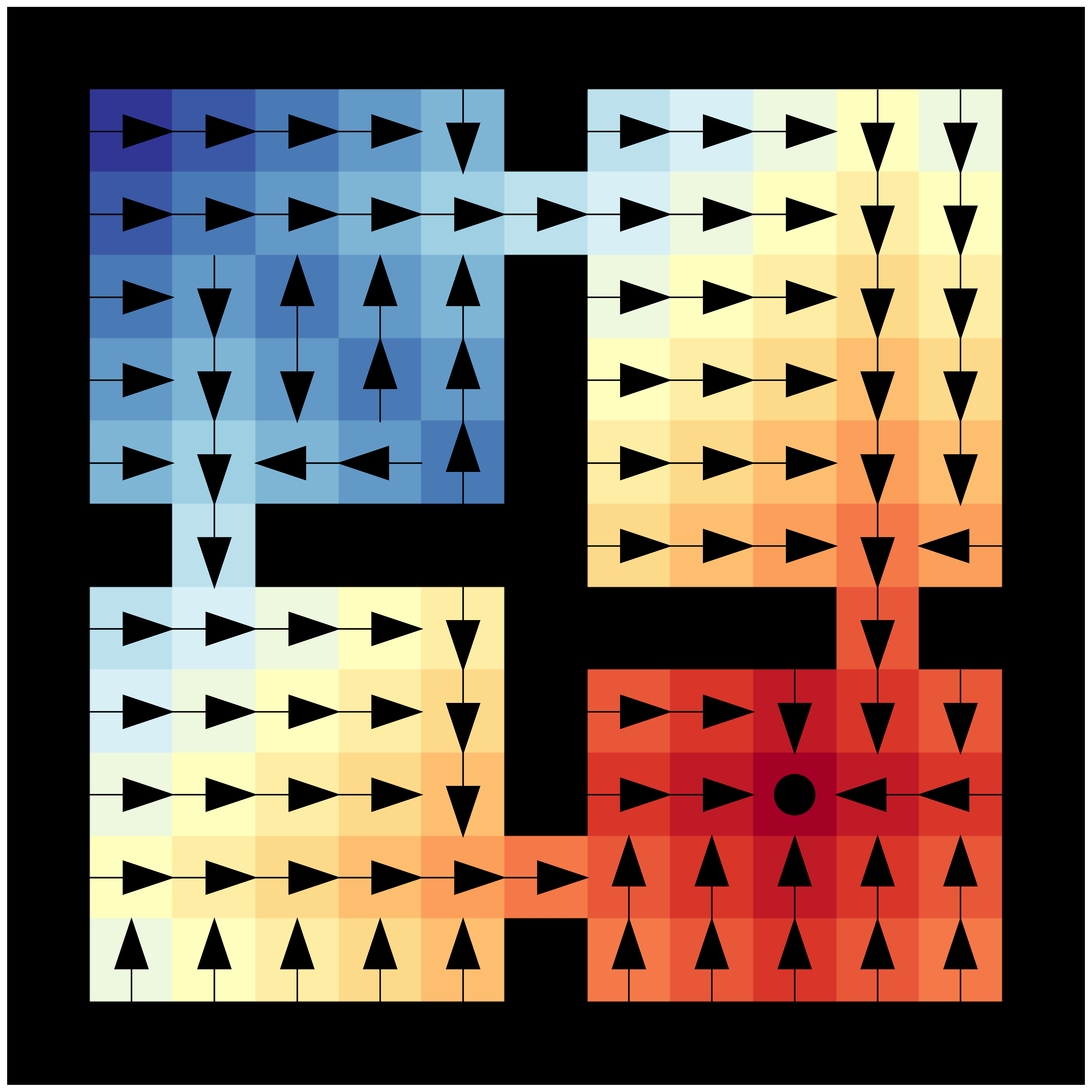}
        \caption{$M_{\text{L}} \barvee M_{\text{T}}$}
        \label{fig:f}
    \end{subfigure}%
    \caption{An example of zero-shot Boolean algebraic composition using the learned extended value functions. The top row shows the extended value functions. For each, the plots show the value of each state with respect to the four goals (the centre of each room). The bottom row shows the recovered regular value functions obtained by maximising over goals. Arrows represent the optimal action in a given state. (\subref{fig:a}--\subref{fig:b}) The learned optimal extended value functions for the base tasks. (\subref{fig:c}) Zero-shot disjunctive composition.  (\subref{fig:d}) Zero-shot conjunctive composition.  (\subref{fig:e}) Combining operators to model exclusive-or composition.  (\subref{fig:f}) Composition that produces logical nor. Note that the resulting optimal value function can attain a goal not explicitly represented by the base tasks. }
    \label{fig:composition}
\end{figure*}

By learning extended value functions, an agent can subsequently solve a massive number of tasks; however, the upfront cost of learning is likely to be higher. 
We investigate the trade-off between the two approaches by quantifying how the sample complexity scales with the number of tasks.
We compare to \citet{vanniekerk19}, who use regular value functions to demonstrate optimal disjunctive composition.
We note that while the upfront learning cost is therefore lower, the number of tasks expressible using only disjunction is $2^K-1$, which is significantly less than the full Boolean algebra. 
We also conduct  a test using an extended version of the Four Rooms domain, where additional goals are placed along the sides of all walls, resulting in a total of 40 goals.
Empirical results are illustrated by Figure~\ref{fig:comparison}.

\begin{figure*}[h!]
    \centering
    \begin{subfigure}[t]{0.3\textwidth}
        \centering
        \includegraphics[height=1.3in]{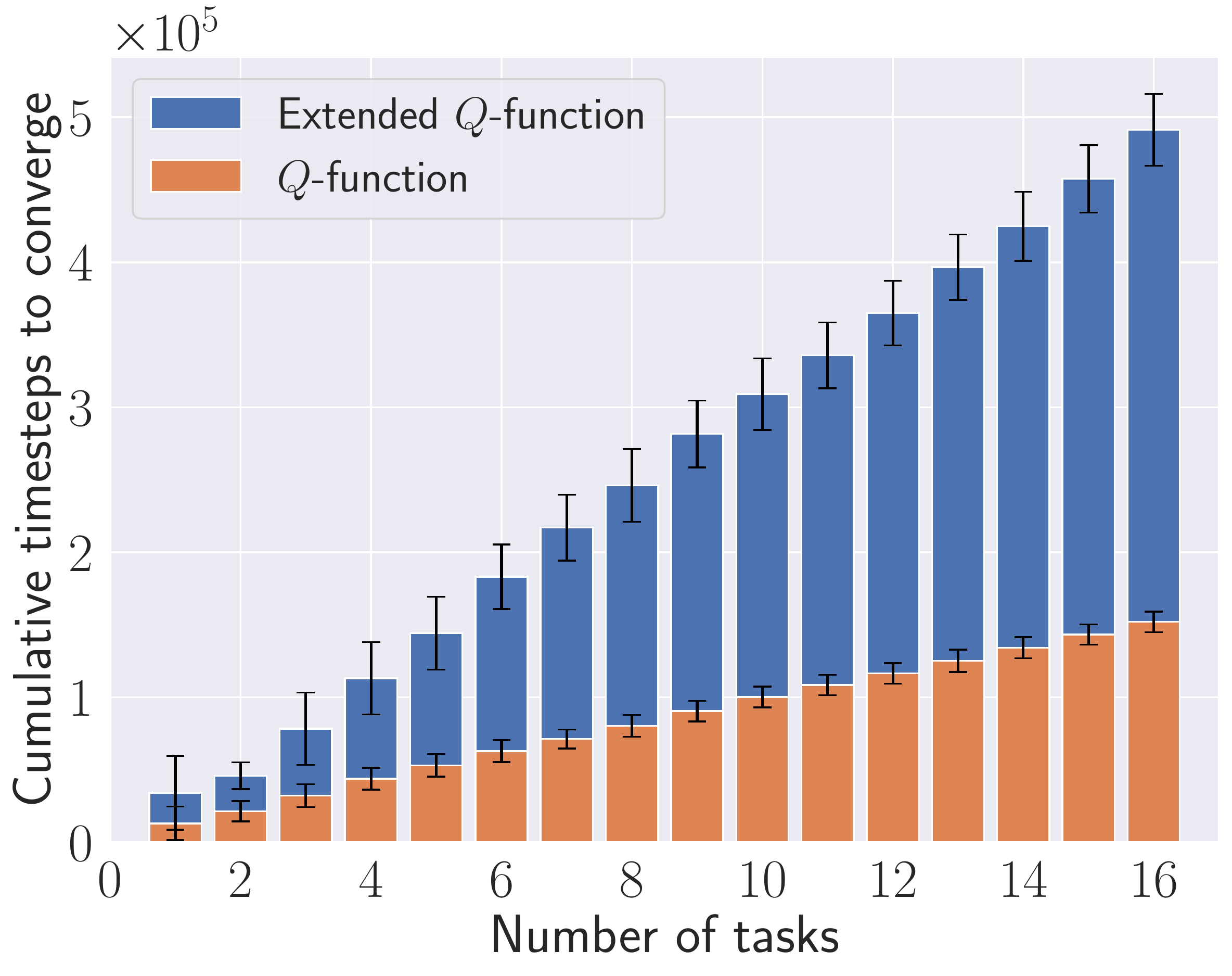}
        \caption{Cumulative number of samples required to learn optimal extended and regular value functions. Error bars represent standard deviations over 100 runs.}
        \label{fig:p}
    \end{subfigure}%
    \quad
    \begin{subfigure}[t]{0.3\textwidth}
        \centering
        \includegraphics[height=1.3in]{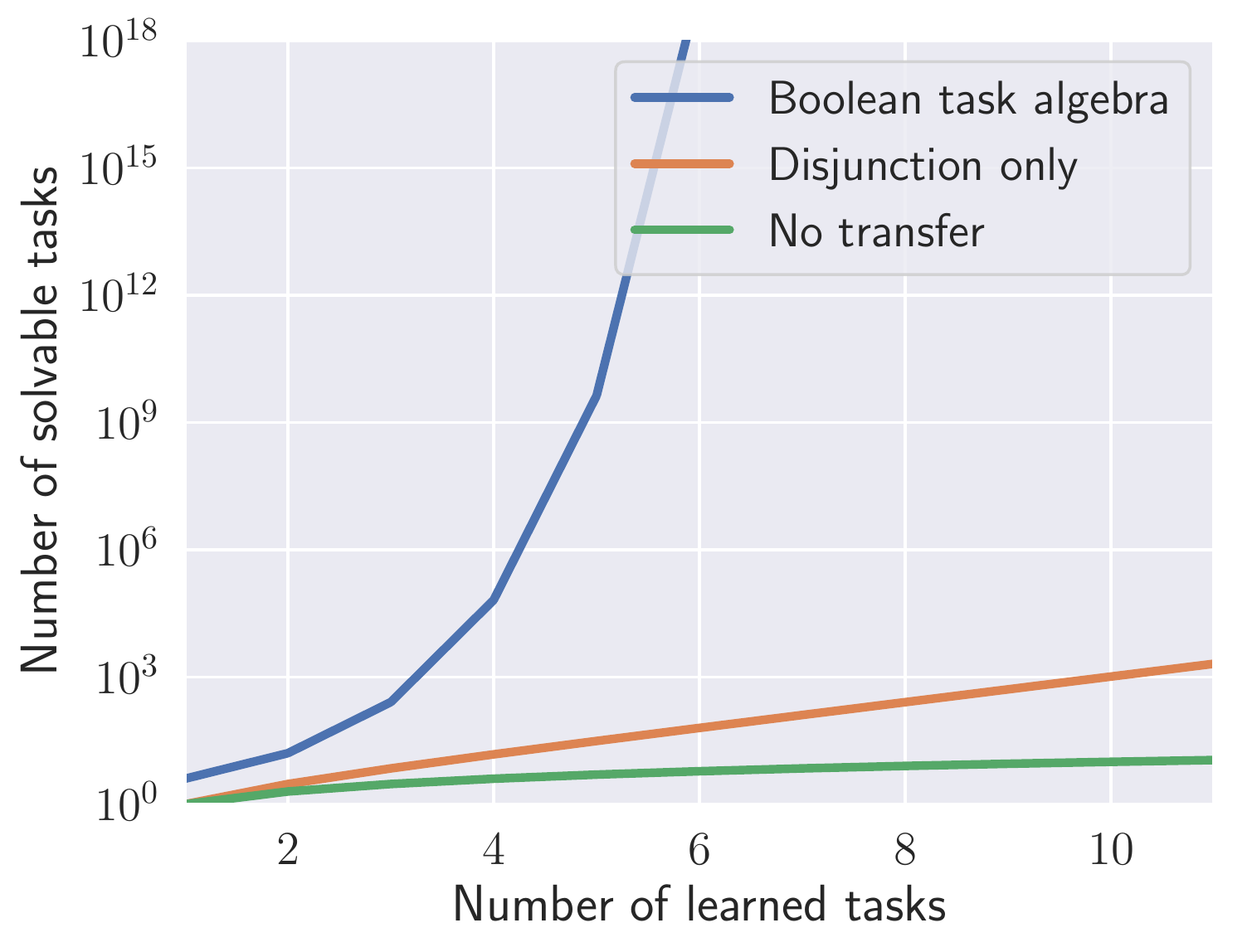}
        \caption{Number of tasks that can be solved as a function of the number of existing tasks solved. Results are plotted on a log-scale.}
        \label{fig:q}
    \end{subfigure}%
    \quad
    \begin{subfigure}[t]{0.3\textwidth}
        \centering
        \includegraphics[height=1.3in]{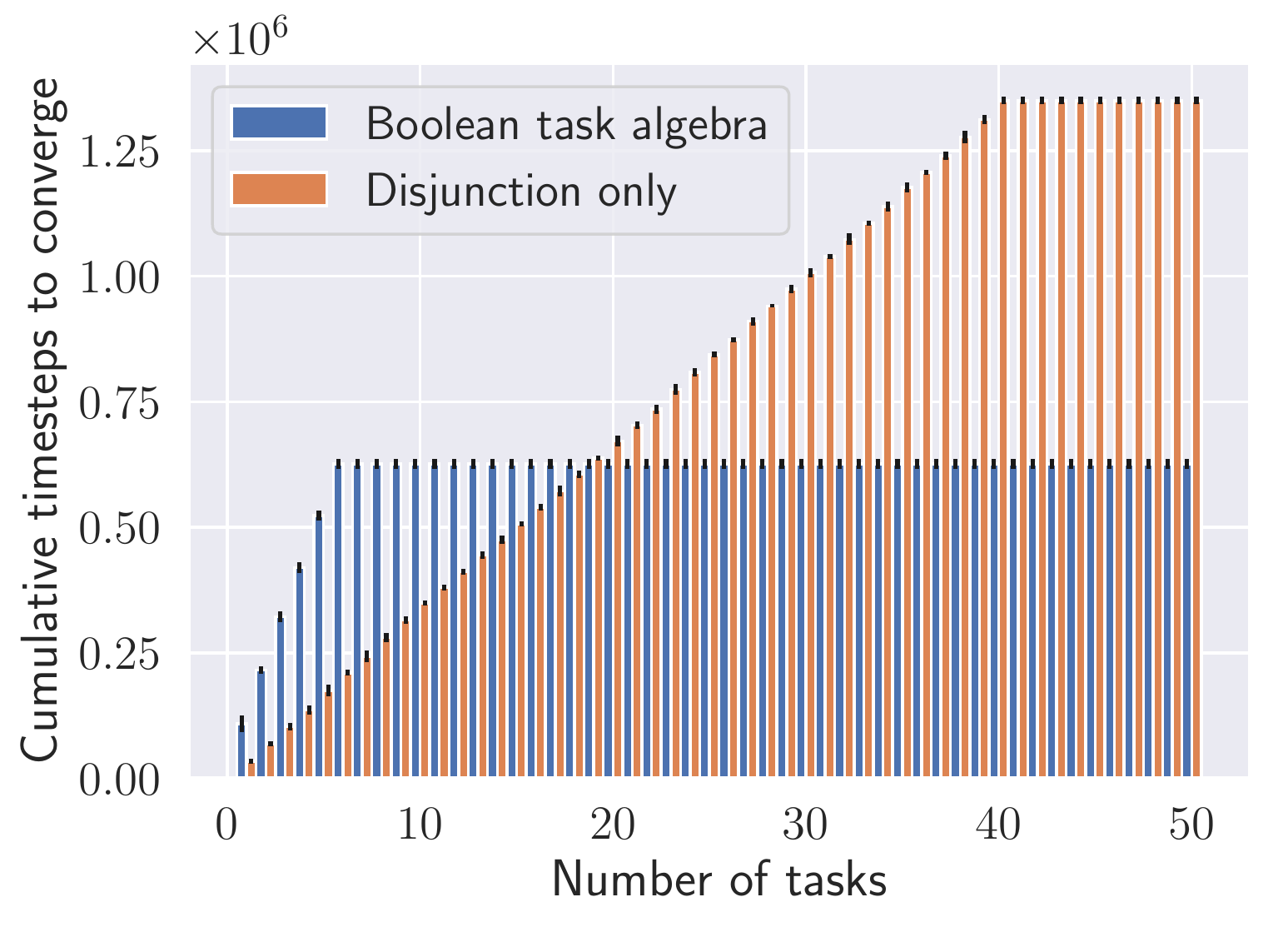}
        \caption{Cumulative number of samples required to solve tasks in a 40-goal Four Rooms domain. Error bars represent standard deviations over 100 runs.}
        \label{fig:r}
    \end{subfigure}%
    \caption{Results in comparison to the disjunctive composition of \citet{vanniekerk19}. (\subref{fig:p})  The number of samples required to learn the extended value function is greater than learning a standard value function. However, both scale linearly and differ only by a constant factor. (\subref{fig:q}) The extended value functions allow us to solve exponentially more tasks than the disjunctive approach without further learning.   (\subref{fig:r}) In the modified task with 40 goals, we need to learn only 7 base tasks, as opposed to 40 for the disjunctive case.}
    \label{fig:comparison}
\end{figure*}

Our results show that while additional samples are needed to learn an extended value function, the agent is able to expand the tasks it can solve super-exponentially. 
Furthermore, the number of base tasks we need to solve is only logarithmic in the number of goal states. 
For an environment with $K$ goals, we need to learn only $\floor{\log_2 K} + 1$ base tasks, as opposed to the disjunctive approach which requires $K$ base tasks.   
Thus by sacrificing sample efficiency initially, we achieve an exponential increase in abilities compared to previous work \citep{vanniekerk19}.

\section{Composition with Function Approximation} \label{sec:boxman}


Finally, we demonstrate that our compositional approach can also be used to tackle high-dimensional domains where function approximation is required.
We use the same video game environment as \citet{vanniekerk19}, where an agent must navigate a 2D world and collect objects of different shapes and colours from any initial position.
The state space is an $84 \times 84$ RGB image, and the agent is able to move in any of the four cardinal directions.
The agent also possesses a \texttt{pick-up} action, which allows it to collect an object when standing on top of it.
There are two shapes (squares and circles) and three colours (blue, beige and purple) for a total of six unique objects.

To learn the extended action-value functions, we modify deep Q-learning \citep{mnih15} similarly to the many-goals update method of \citet{veeriah2018many}. Here, a universal value function approximator (UVFA) \citep{schaul15} is used to represent the action values for each state and goal (both specified as RGB images).\footnote{The hyperparameters and network architecture are listed in the supplementary material}
Additionally, when a terminal state is encountered, it is added to the collection of goals seen so far, and when learning updates occur, these goals are sampled randomly from a replay buffer. 
We first learn to solve two base tasks: collecting blue objects and collecting squares.
As shown in Figure~\ref{fig:collect-evf}, by training the UVFA for each task using the extended rewards definition, the agent learns not only how to achieve all goals, but also how desirable each of those goals are for the current task.
These UVFAs can now be composed to solve new tasks with no further learning.

\begin{figure*}[h!]
    \centering
    \begin{subfigure}[t]{0.35\textwidth}
        \centering
        \includegraphics[height=1.4in]{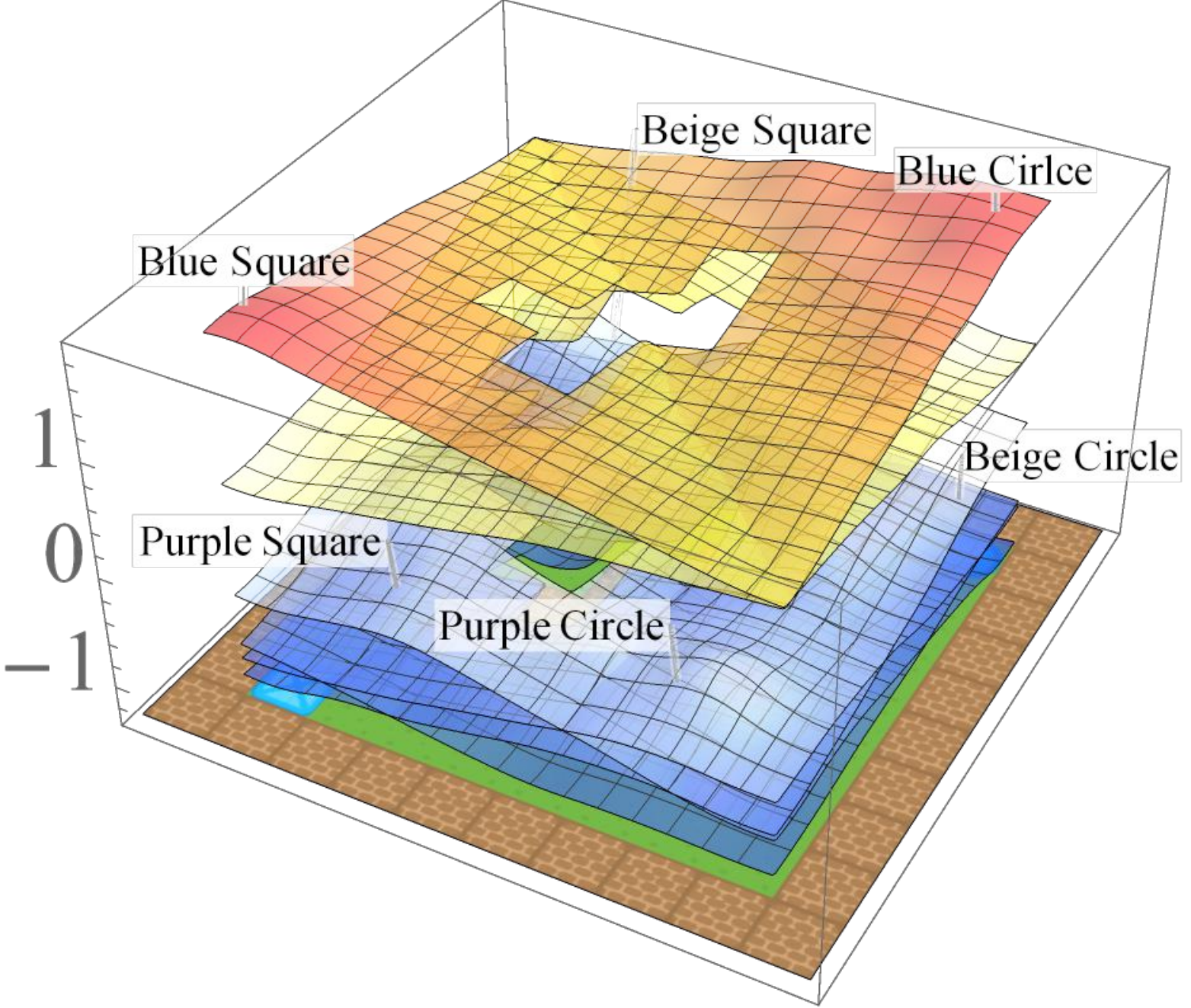}
    \end{subfigure}%
    \quad
    \begin{subfigure}[t]{0.35\textwidth}
        \centering
        \includegraphics[height=1.4in]{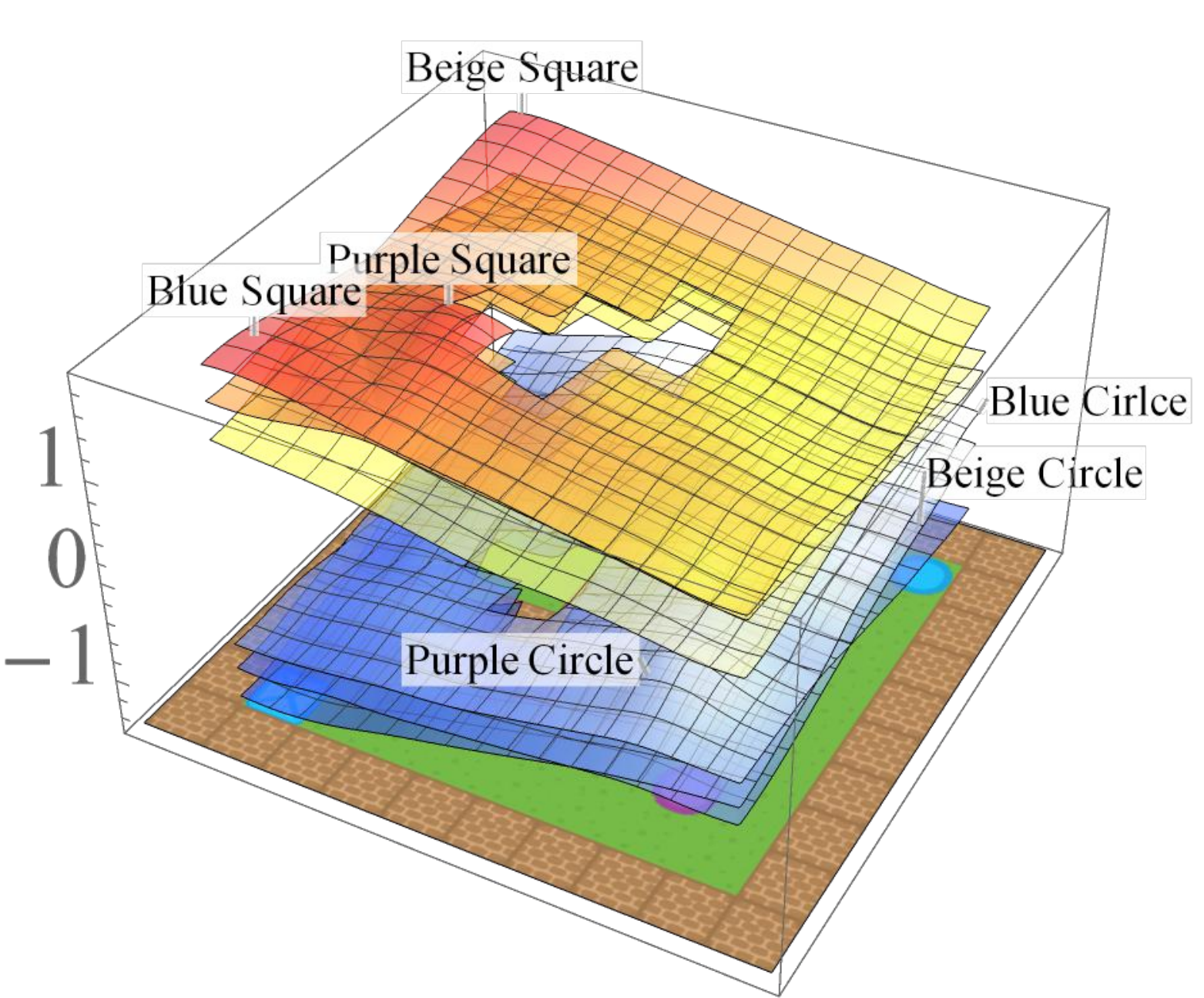}
    \end{subfigure}%
    \caption{ Extended value function for collecting blue objects (left) and squares (right). To generate the value functions, we place the agent at every location and compute the maximum output of the network over all goals and actions.
We then interpolate between the points to smooth the graph. Any error in the visualisation is due to the use of non-linear function approximation.}
\label{fig:collect-evf}
\end{figure*}


We demonstrate composition characterised by disjunction, conjunction and exclusive-or.
This corresponds to tasks where the target items are: 
\begin{enumerate*}[label=(\roman*)]
  \item blue or square,
  \item blue squares, and
  \item blue or squares, but not blue squares.
\end{enumerate*}
Figure~\ref{fig:repoman} illustrates the composed value functions and samples of the subsequent trajectories for the respective tasks. Figure~\ref{fig:Boxman_returns_1} shows the average returns across random  initial positions of the agent.\footnote{Experiments involving randomised object positions are included in the supplementary material.}

\begin{figure*}[h!]
        \centering
    \begin{subfigure}[t]{0.3\textwidth}
        \centering
        \includegraphics[height=1.4in]{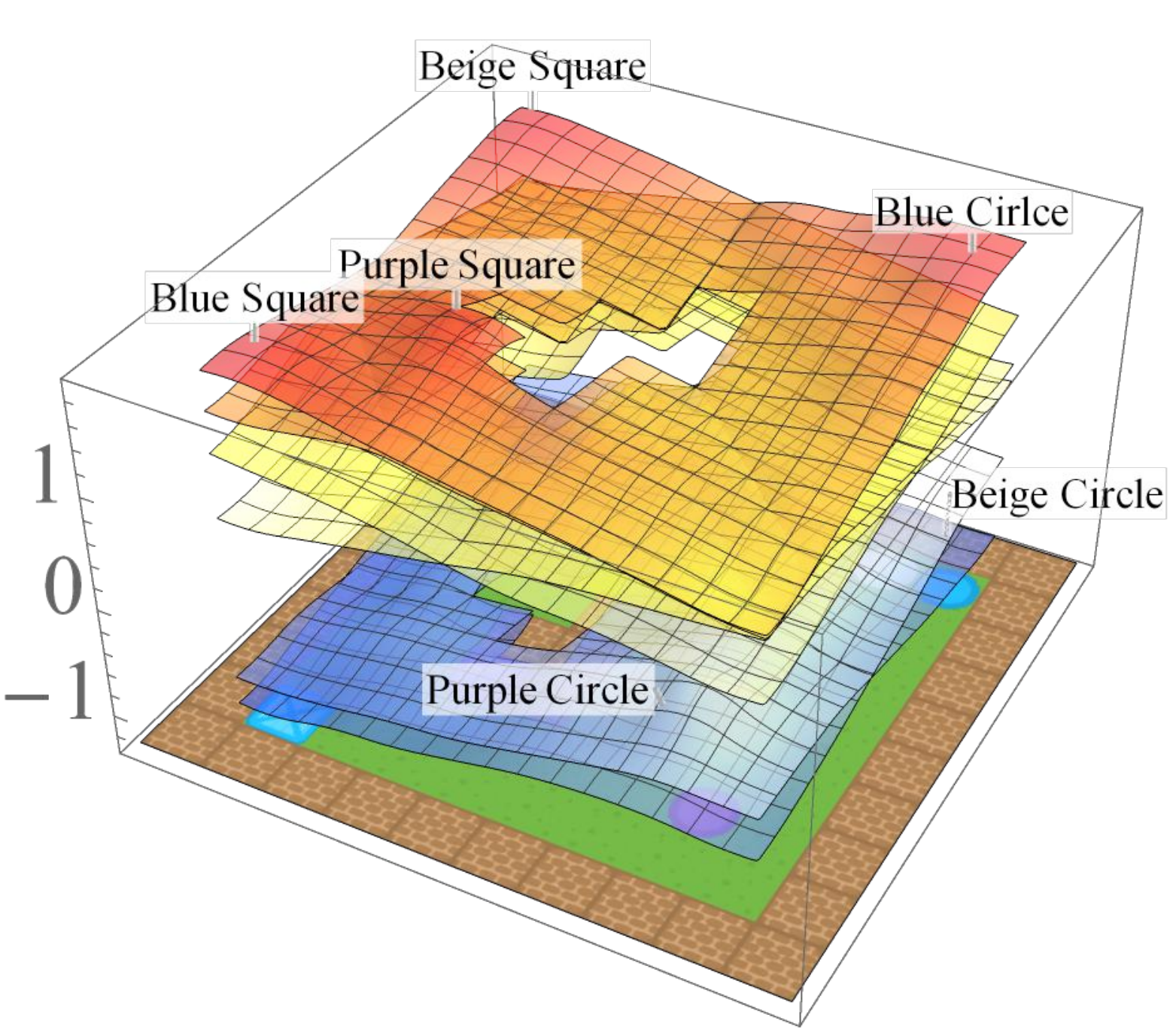}
        \caption{Extended value function for disjunctive composition.}
        \label{fig:u_}
    \end{subfigure}%
    \quad
    \begin{subfigure}[t]{0.3\textwidth}
        \centering
        \includegraphics[height=1.4in]{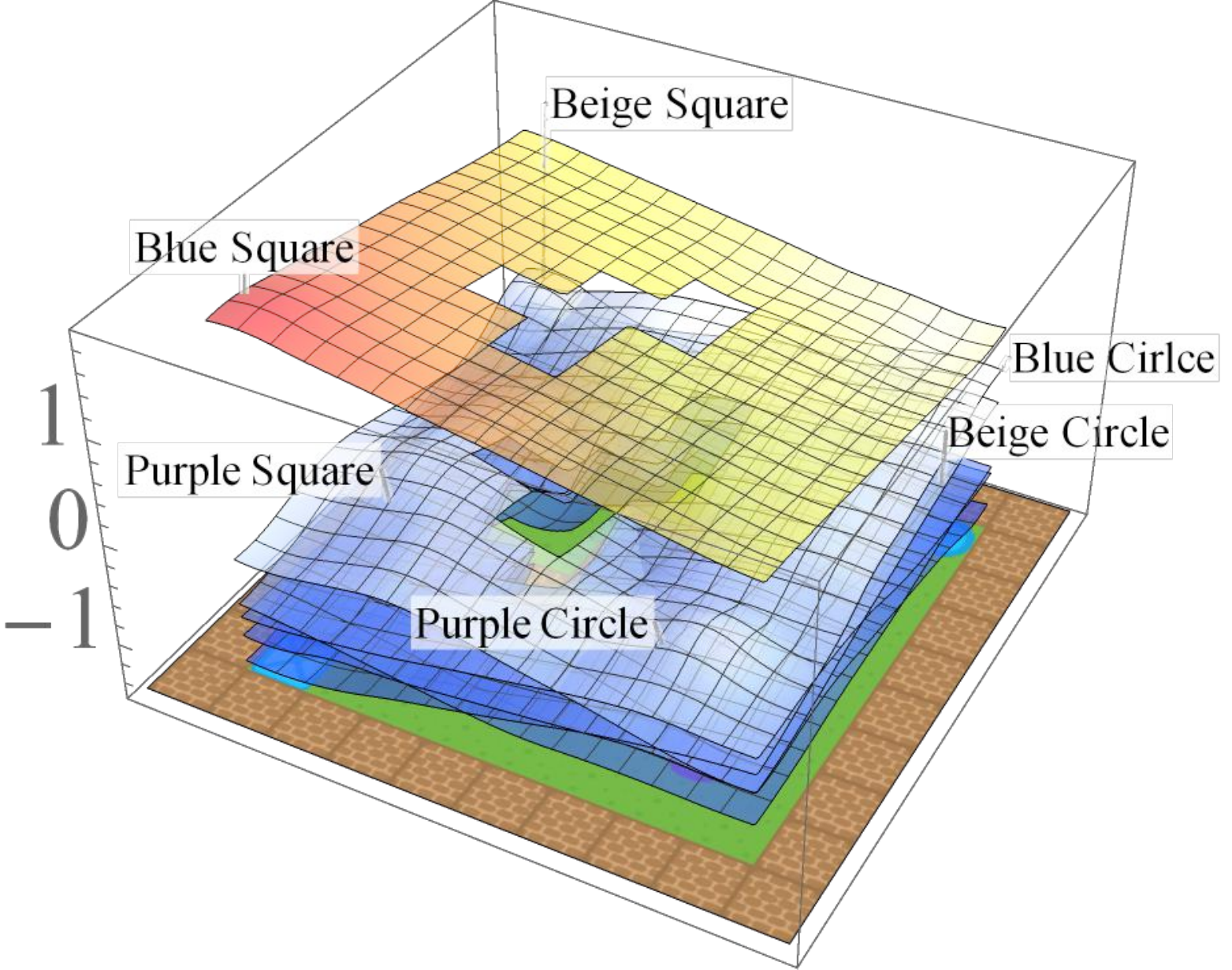}
        \caption{Extended value function for conjunctive composition.}
        \label{fig:v_}
    \end{subfigure}%
    \quad
    \begin{subfigure}[t]{0.3\textwidth}
        \centering
        \includegraphics[height=1.4in]{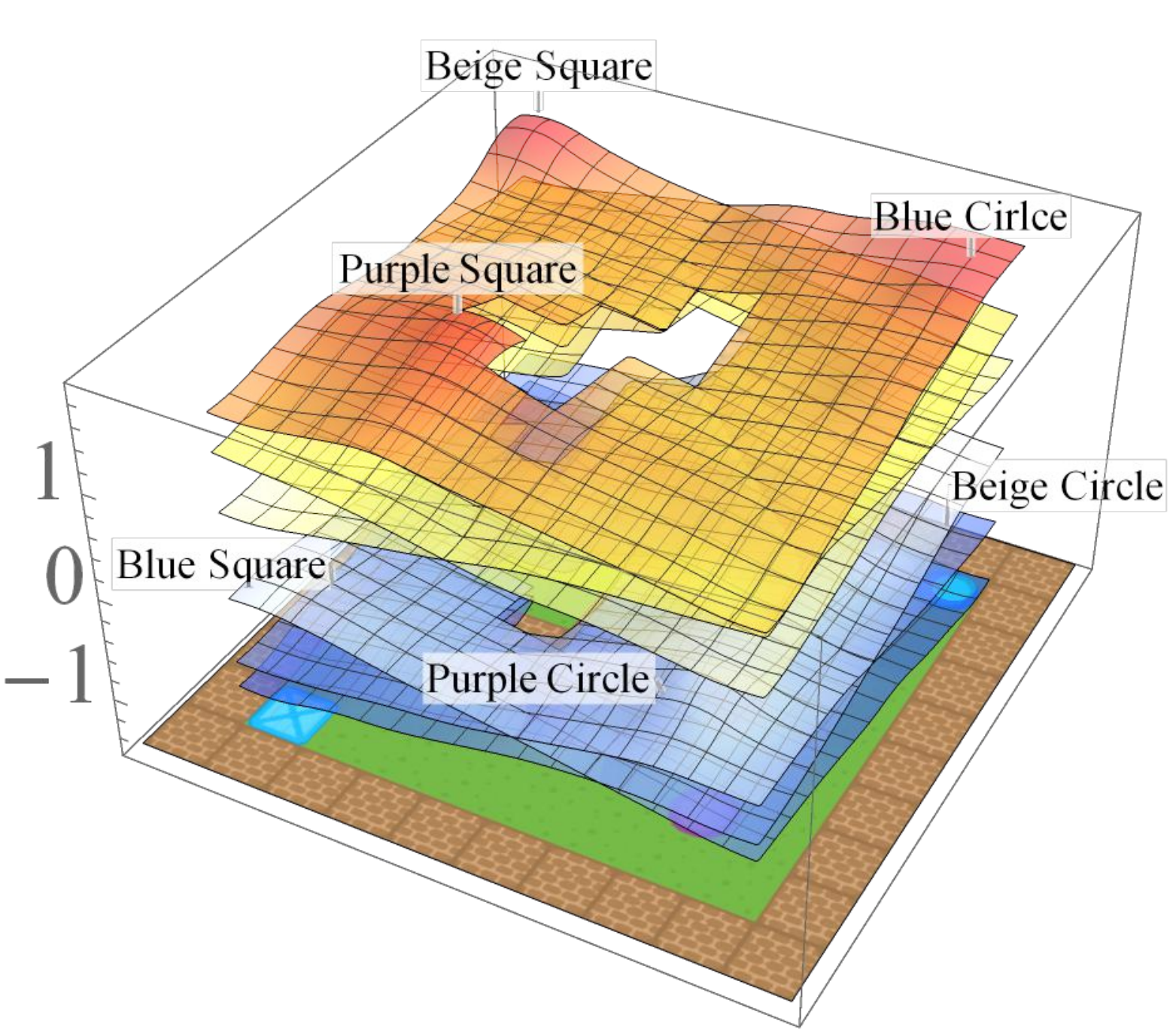}
        \caption{Extended value function for exclusive-or composition.}
        \label{fig:w_}
    \end{subfigure}%
    \\
        \centering
    \begin{subfigure}[t]{0.3\textwidth}
        \centering
        \includegraphics[height=1.3in]{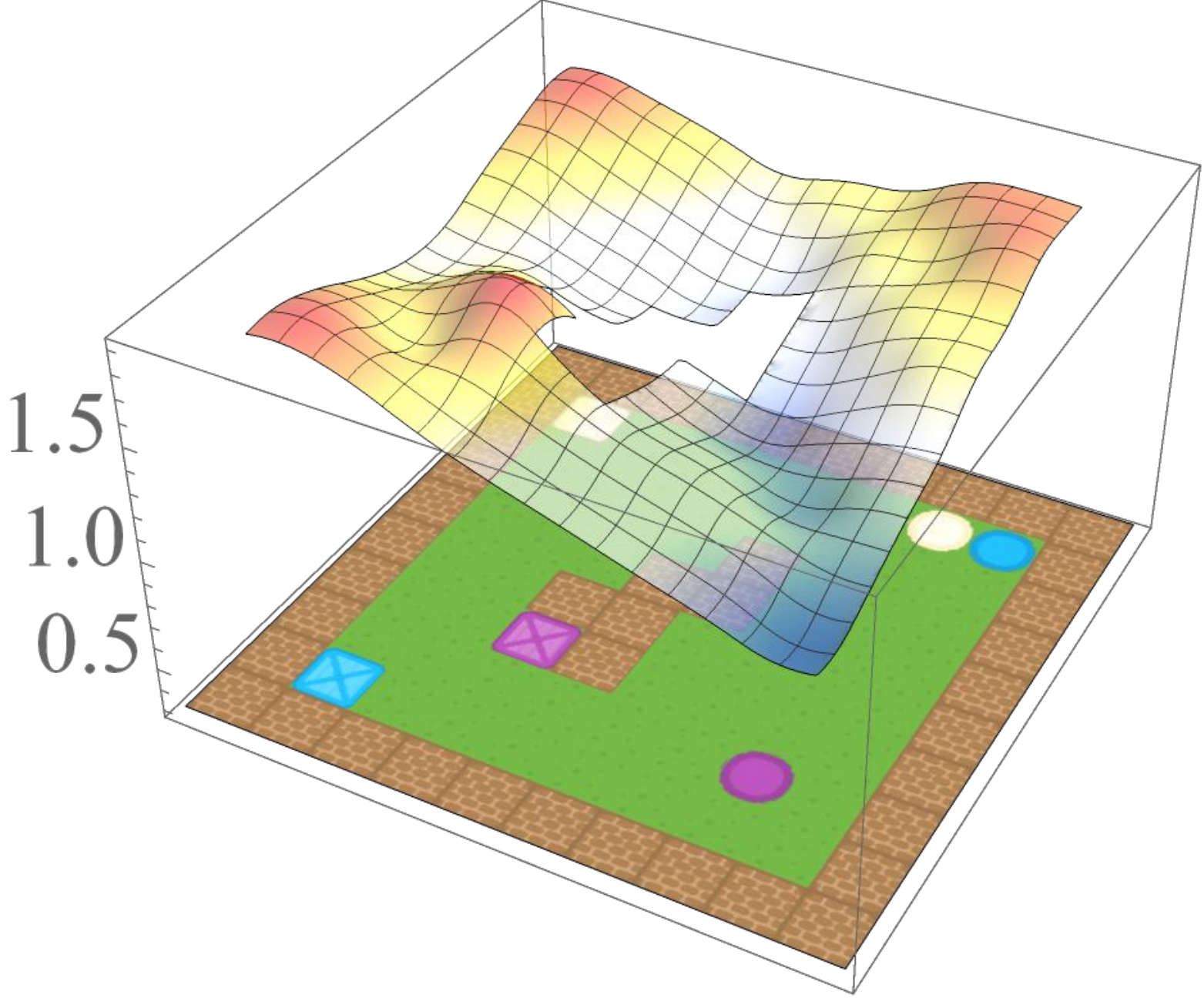}
        \caption{Value function for disjunctive composition.}
        \label{fig:u_}
    \end{subfigure}%
    \quad
    \begin{subfigure}[t]{0.3\textwidth}
        \centering
        \includegraphics[height=1.3in]{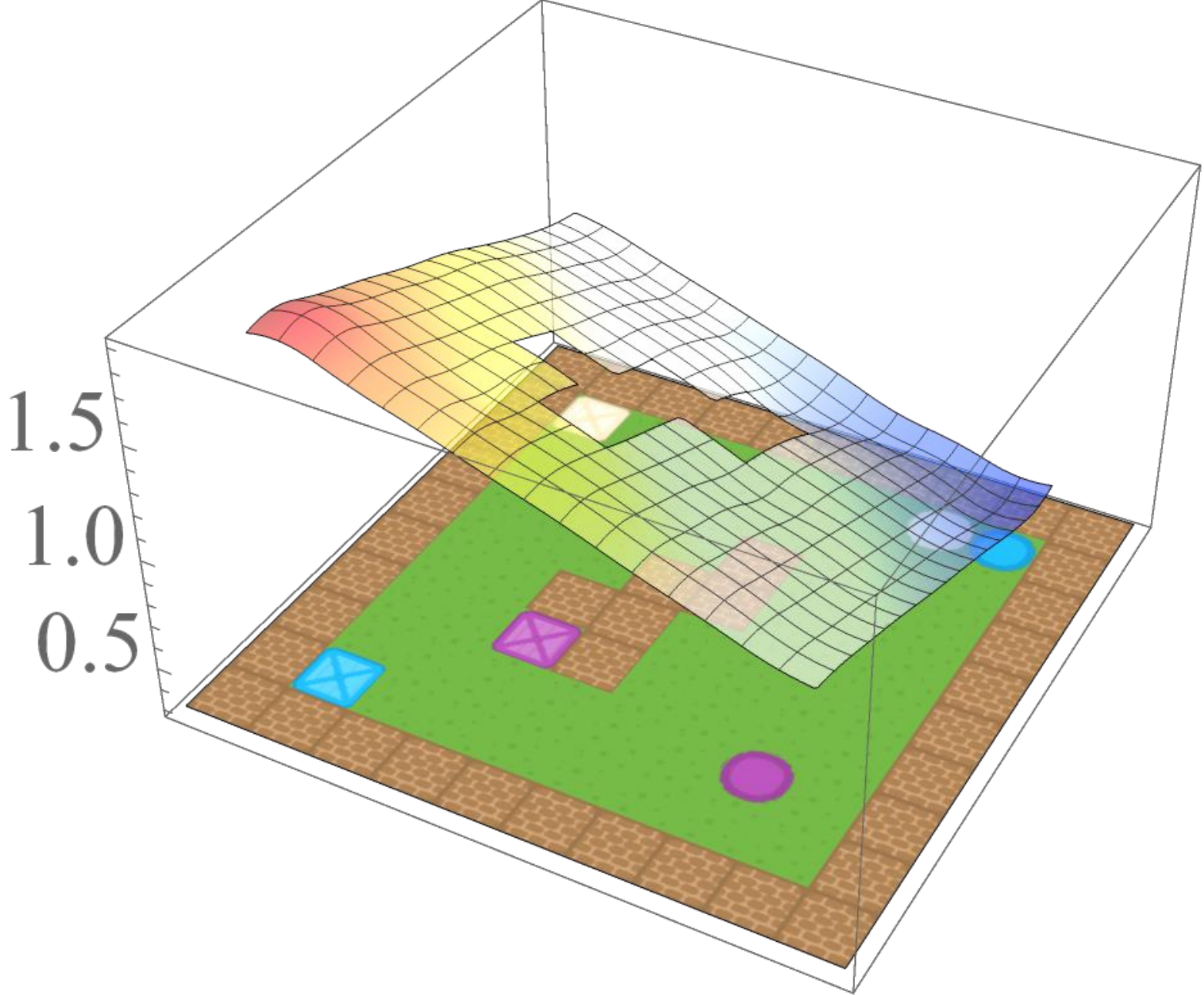}
        \caption{Value function for conjunctive composition.}
        \label{fig:v_}
    \end{subfigure}%
    \quad
    \begin{subfigure}[t]{0.3\textwidth}
        \centering
        \includegraphics[height=1.3in]{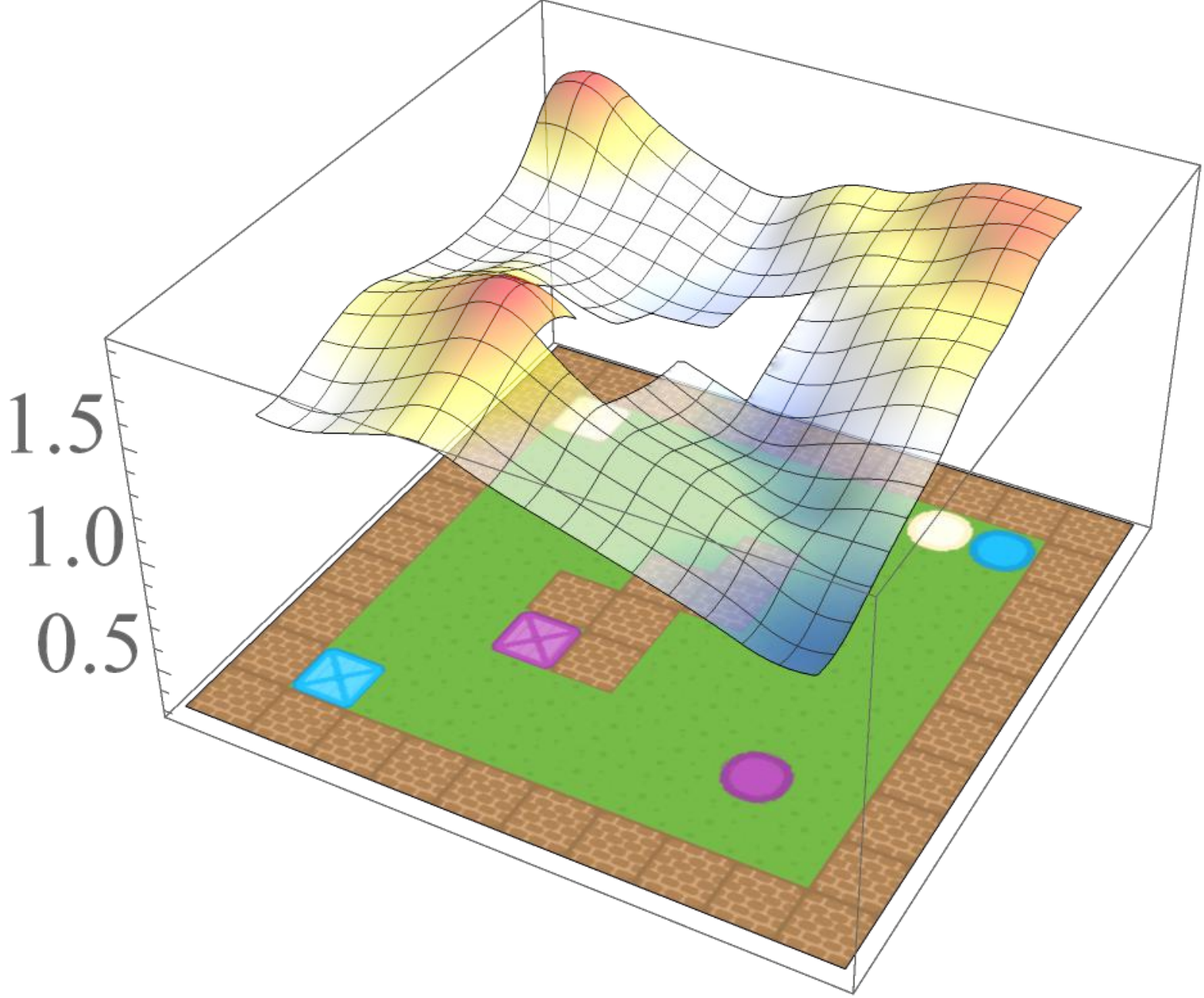}
        \caption{Value function for exclusive-or composition.}
        \label{fig:w_}
    \end{subfigure}%
    \\
    \centering
    \begin{subfigure}[t]{0.3\textwidth}
        \centering
        \includegraphics[height=1.3in]{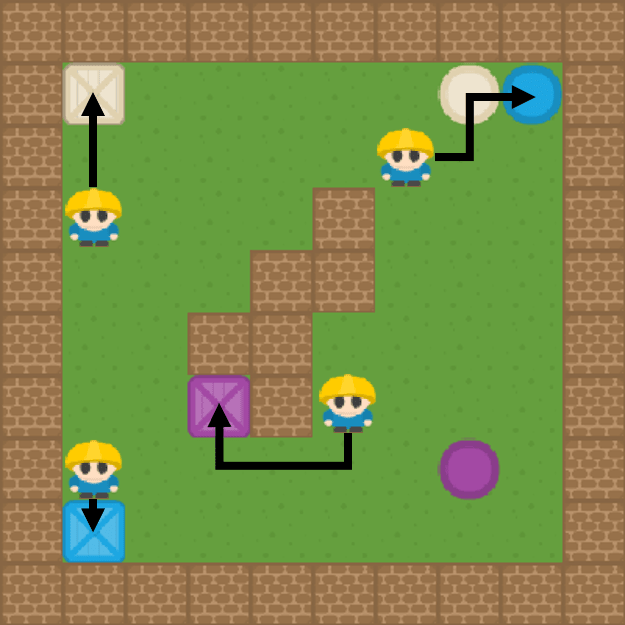}
        \caption{Trajectories for disjunctive composition.}
        \label{fig:u}
    \end{subfigure}%
    \quad
    \begin{subfigure}[t]{0.3\textwidth}
        \centering
        \includegraphics[height=1.3in]{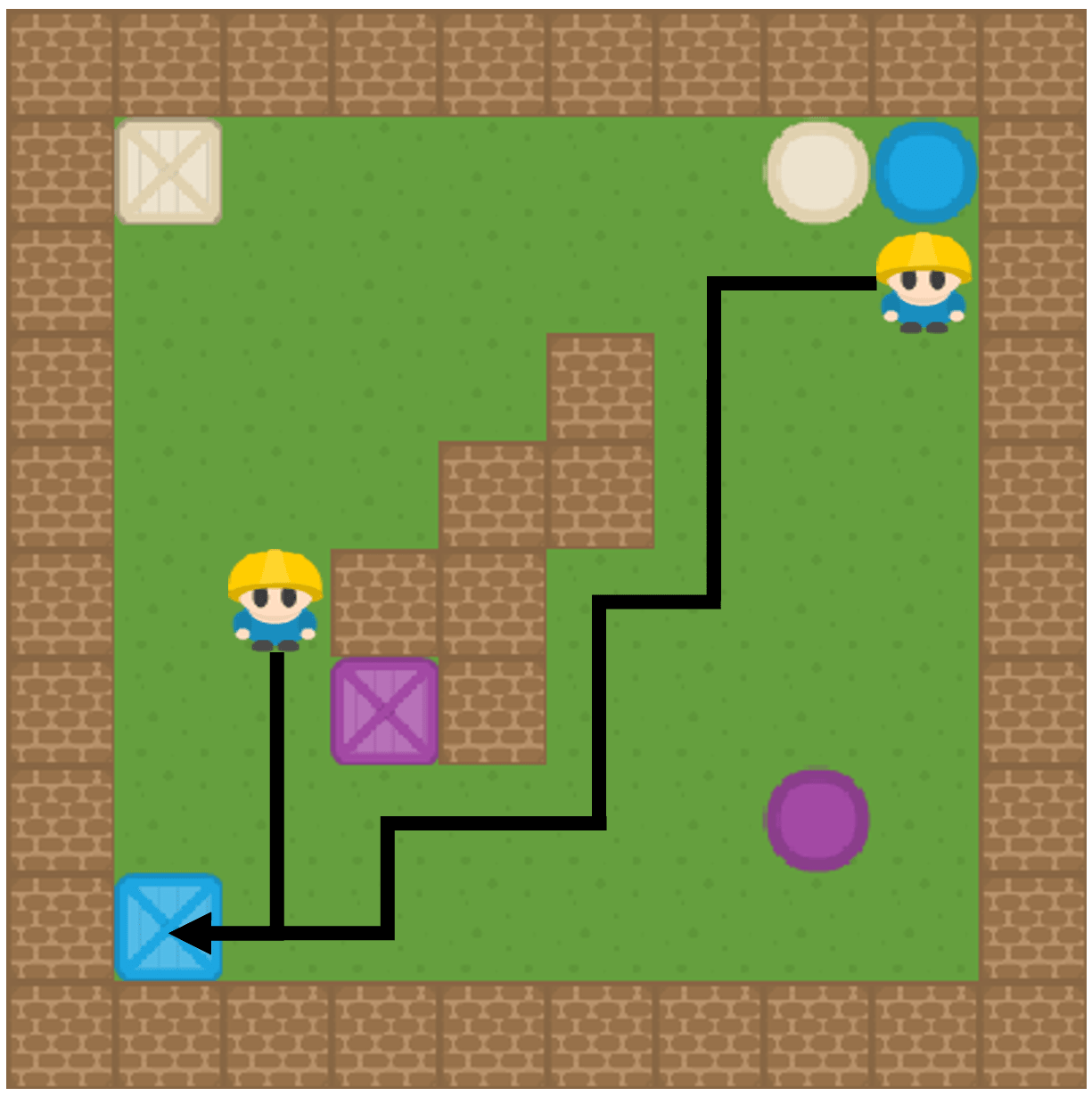}
        \caption{Trajectories for conjunctive composition.}
        \label{fig:v}
    \end{subfigure}%
    \quad
    \begin{subfigure}[t]{0.3\textwidth}
        \centering
        \includegraphics[height=1.3in]{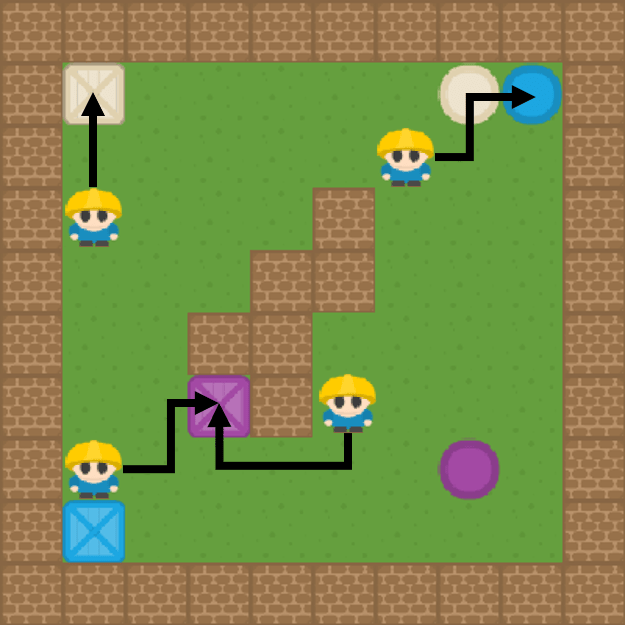}
        \caption{Trajectories for exclusive-or composition.}
        \label{fig:w}
    \end{subfigure}%
    \caption{By composing extended value functions from the base tasks (collecting blue objects, and collecting squares), we can act optimally in new tasks with no further learning.}
    \label{fig:repoman}
\end{figure*}

\begin{figure}[h!]
    \centering
    \includegraphics[width=0.5\linewidth]{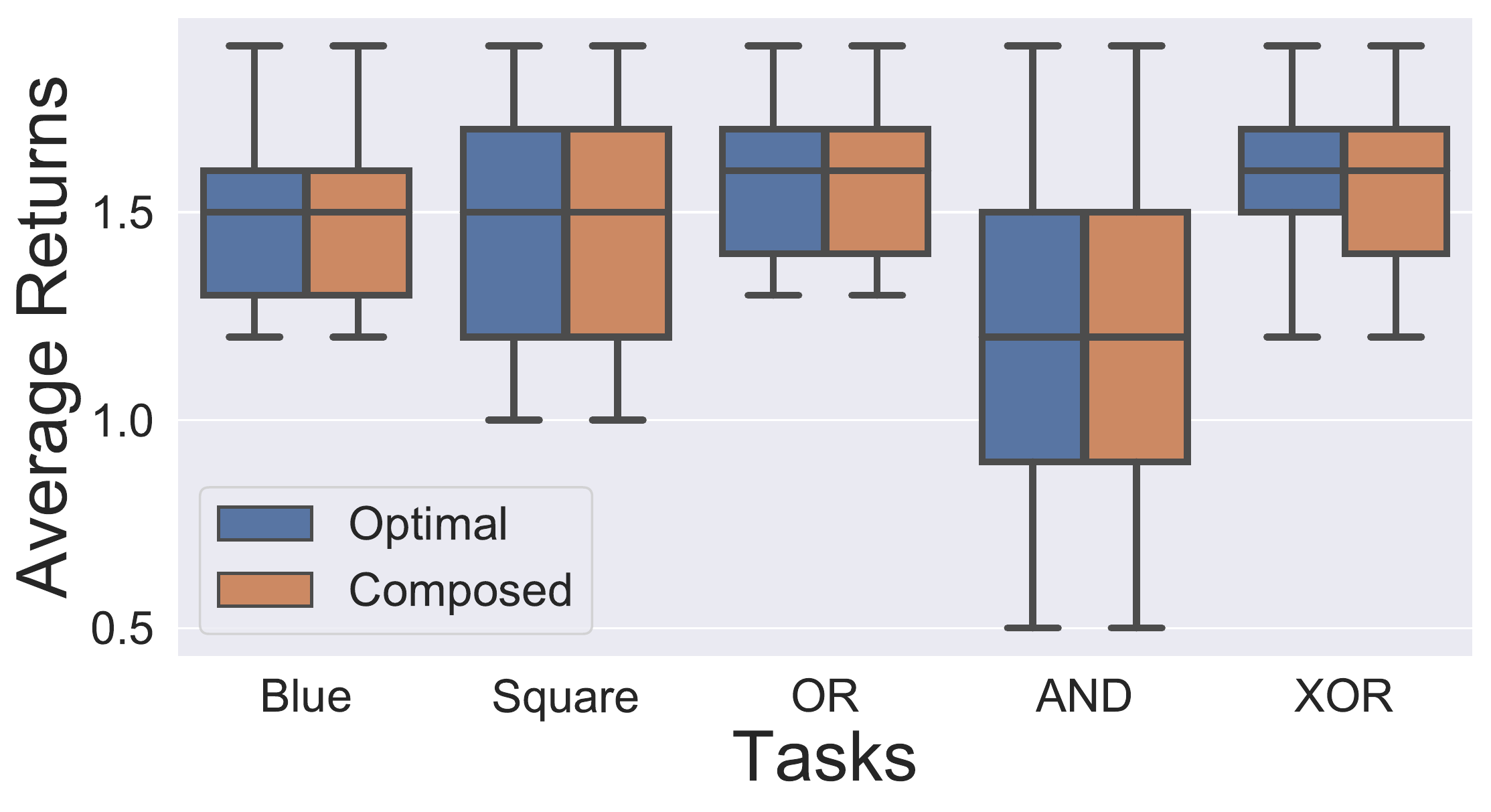}
    \caption{Average returns over 1000 episodes for the \textit{Blue} and \textit{Square} tasks, and their disjunction (\textit{OR}), conjunction (\textit{AND}) and exlusive-or (\textit{XOR}).}
    \label{fig:Boxman_returns_1}
\end{figure}

\section{Related Work}

The ability to compose value functions was first demonstrated using the linearly-solvable MDP framework \citep{todorov07}, where value functions could be composed to solve tasks similar to the disjunctive case \citep{todorov09}.
\citet{vanniekerk19} show that the same kind of composition can be achieved using entropy-regularised RL \citep{fox15}, and extend the results to the standard RL setting, where agents can optimally solve the disjunctive case.
Using entropy-regularised RL, \citet{haarnoja18} approximates the conjunction of tasks by averaging their reward functions, and demonstrates that by averaging the optimal value functions of the respective tasks, the agent can achieve performance close to optimal. 
\citet{hunt19} extends this result by composing value functions to solve the average reward task exactly, which approximates the true conjunctive case.
More recently, \citet{peng19} introduce a few-shot learning approach to compose policies multiplicatively. 
Although lacking theoretical foundations, their results show that an agent can learn a weighted composition of existing base skills to solve a new complex task.  
By contrast, we show that zero-shot optimal composition can be achieved for all Boolean operators.  

\section{Conclusion}

We have shown how to compose tasks using the standard Boolean algebra operators. 
These composite tasks can be solved without further learning by first learning goal-oriented value functions, and then composing them in a similar manner. 
Finally, we note that there is much room for improvement in learning the extended value functions for the base tasks. 
In our experiments, we learned each extended value function from scratch, but it is likely that having learned one for the first task, we could use it to initialise the extended value function for the second task to improve convergence times.
One area for improvement lies in efficiently learning the extended value functions, as well as developing better algorithms for solving tasks with sparse rewards.  
For example, it is likely that approaches such as hindsight experience replay \citep{andrychowicz2017hindsight} could reduce the number of samples required to learn extended value functions, while \citet{mirowski2016learning} provides a method for learning complex tasks with sparse rewards using auxiliary tasks. 
We leave incorporating these approaches to future work, but note that our framework is agnostic to the value-function learning algorithm.
Our proposed approach is a step towards both interpretable RL---since both the tasks and optimal value functions can be specified using Boolean operators---and the ultimate goal of lifelong learning agents, which are able to solve combinatorially many tasks in a sample-efficient manner.

\newpage

\section*{Broader Impact}

Our work is mainly theoretical, but is a step towards creating agents that can solve tasks specified using human-understandable Boolean expressions, which could one day be deployed in practical RL systems. 
We envisage this as an avenue for overcoming the problem of reward misspecification, and for developing safer agents whose goals are readily interpretable by humans.

\begin{ack}

The authors wish to thank the anonymous reviewers for their helpful comments, and Pieter Abbeel, Marc Deisenroth and Shakir Mohamed for their assistance in reviewing a final draft of this paper. 
This work is based on the research supported in part by the National Research Foundation of South Africa (Grant Number: 17808).

\end{ack}

\bibliography{neurips2020}
\bibliographystyle{neurips2020}

\end{document}


\maketitle

\section{Boolean Algebra Definition}

\begin{mydef}

A Boolean algebra is a set $\mathcal{B}$ equipped with the binary operators $\vee$ (disjunction) and $\wedge$ (conjunction), and the unary operator $\neg$ (negation), which satisfies the following \textit{Boolean algebra axioms} for $a, b, c$ in $\mathcal{B}$:

\begin{enumerate}[label=(\roman*)]
    \item Idempotence: $a \wedge a = a \vee a = a$.
    \item Commutativity: $a \wedge b = b \wedge a $ and $a \vee b = b \vee a $.
    \item Associativity: $a \wedge (b \wedge c) = (a \wedge b) \wedge c$ and $a \wedge (b \vee c) = (a \vee b) \vee c$.
    \item Absorption: $a \wedge (a \vee b) = a \vee (a  \wedge b) = a $.
    \item Distributivity: $a \wedge (b \vee c) = (a \wedge b) \vee (a \wedge c) $ and $a \vee (b \wedge c) = (a \vee b) \wedge (a \vee c)$.
    \item Identity: there exists $\mathbf{0}, \mathbf{1}$ in $\mathcal{B}$ such that
    \begin{align*}
        \mathbf{0} \wedge a &= \mathbf{0} \\
        \mathbf{0} \vee a &= a \\
        \mathbf{1} \wedge a &= a \\
        \mathbf{1} \vee a &= \mathbf{1}
    \end{align*}
    \item Complements: for every $a$ in $\mathcal{B}$, there exists an element $a^\prime$ in $\mathcal{B}$ such that $a \wedge a^\prime = \mathbf{0}$ and $a \vee a^\prime = \mathbf{1}$.

\end{enumerate}

\end{mydef}

\section{Proof for Boolean Task Algebra}

\begin{theorem}
Let $\tasks$ be a set of tasks which adhere to Assumption 2. Then $(\tasks,\vee,\wedge,\neg,\mbig,\msmall)$ is a Boolean algebra.

\end{theorem}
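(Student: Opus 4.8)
The plan is to verify each of the seven Boolean-algebra axioms of Definition~1 for $(\tasks,\vee,\wedge,\neg,\mbig,\msmall)$, working throughout at the level of the underlying reward functions. The structural facts I would use are: (a) by Assumption~2 all tasks in $\tasks$ share the same states, actions and dynamics, so a task is determined by its reward function and $M_1 = M_2$ iff $\reward_{M_1} = \reward_{M_2}$ pointwise; (b) by Assumption~2 the reward functions of all tasks coincide off the absorbing set $\goals$, while for $g \in \goals$ they take values in the two-element set $\{\rmin,\rmax\}$, and $\mbig,\msmall \in \tasks$ are the tasks with $\reward_{\mbig}(g,a) = \rmax$ and $\reward_{\msmall}(g,a) = \rmin$ for all such $g$; and (c) the operators act pointwise on rewards, with $\reward_{M_1 \vee M_2} = \max(\reward_{M_1},\reward_{M_2})$, $\reward_{M_1 \wedge M_2} = \min(\reward_{M_1},\reward_{M_2})$, and $\reward_{\neg M}(g,a) = \rmin+\rmax-\reward_M(g,a)$ on $\goals$ (the other element of $\{\rmin,\rmax\}$, by Assumption~2), with $\reward_{\neg M} = \reward_M$ elsewhere. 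The first step is to check these operations are well defined, i.e.\ that $\vee$, $\wedge$, $\neg$ return elements of $\tasks$: $\max$ and $\min$ of two values in $\{\rmin,\rmax\}$ again lie in $\{\rmin,\rmax\}$, the map $x \mapsto \rmin+\rmax-x$ permutes $\{\rmin,\rmax\}$, and in all three cases the common off-$\goals$ reward is preserved, so the resulting reward functions still satisfy Assumption~2.

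With well-definedness in place, axioms (i)--(v) --- idempotence, commutativity, associativity, absorption and distributivity --- reduce by fact (a) to the corresponding pointwise identities for $\min$ and $\max$ on the totally ordered set $\R$. These are the standard lattice identities of a chain: $\max(x,x)=x$; $\min$ and $\max$ are symmetric and associative; $\min(x,\max(x,y)) = x = \max(x,\min(x,y))$; and a totally ordered lattice is distributive, so $\min(x,\max(y,z)) = \max(\min(x,y),\min(x,z))$ together with its dual. Applying each identity coordinatewise to the reward functions and invoking fact (a) gives the corresponding equality of tasks.

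For the identity axiom (vi) I would set $\mathbf{0} := \msmall$ and $\mathbf{1} := \mbig$. Using $\rmin \le \reward_M(g,a) \le \rmax$ for every $M$ and every $g \in \goals$, together with the fact that all rewards agree off $\goals$, the pointwise facts $\min(\rmin,\cdot)=\rmin$, $\max(\rmin,\cdot)=\cdot$, $\min(\rmax,\cdot)=\cdot$, $\max(\rmax,\cdot)=\rmax$ yield $\msmall \wedge M = \msmall$, $\msmall \vee M = M$, $\mbig \wedge M = M$ and $\mbig \vee M = \mbig$. For the complement axiom (vii), given $M$ I would take $M' := \neg M$: on $\goals$, $\reward_M(g,a) \in \{\rmin,\rmax\}$, so $\reward_{\neg M}(g,a)$ is the other element of $\{\rmin,\rmax\}$, whence $\min(\reward_M,\reward_{\neg M}) = \rmin$ and $\max(\reward_M,\reward_{\neg M}) = \rmax$ on $\goals$, while both equal the common reward off $\goals$. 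These are exactly the reward functions of $\msmall$ and $\mbig$, so $M \wedge \neg M = \msmall = \mathbf{0}$ and $M \vee \neg M = \mbig = \mathbf{1}$.

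Almost everything above is a routine consequence of elementary properties of $\min$ and $\max$; the one place where the hypotheses genuinely bite is the complement axiom, and that is where I expect the only real (if small) difficulty: it is precisely the two-valued restriction $\reward_M(g,a) \in \{\rmin,\rmax\}$ supplied by Assumption~2 that forces $M \wedge \neg M$ down to $\msmall$ and $M \vee \neg M$ up to $\mbig$; with intermediate reward values $\neg M$ would fail to be a complement. The remaining care is bookkeeping rather than insight --- verifying closure in Step~1 and keeping track, via fact (a), that pointwise scalar identities lift to equalities of tasks. Equivalently, and as the conceptual reason the theorem holds, one may note that $M \mapsto \{(g,a) : g \in \goals,\ \reward_M(g,a) = \rmax\}$ is a bijection from $\tasks$ onto the power set of $\goals \times \action$ carrying $(\vee,\wedge,\neg,\mbig,\msmall)$ to $(\cup,\cap,{}^{c},\goals\times\action,\varnothing)$, i.e.\ onto the prototypical Boolean algebra.
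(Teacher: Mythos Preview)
Your proposal is correct and follows essentially the same route as the paper: axioms (i)--(v) are reduced to the pointwise lattice identities of $\min$/$\max$, axiom (vi) to the fact that $\rsmall$ and $\rbig$ bound the goal-state rewards, and axiom (vii) to the two-valued restriction on $\goals$ supplied by Assumption~2. Your explicit closure check and the closing power-set bijection remark are pleasant additions the paper omits, but they do not change the underlying argument.
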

\begin{proof}
Let $M_1, M_2 \in \tasks$. We show that $\neg, \vee, \wedge$ satisfy the Boolean properties (i) -- (vii).

\begin{description}[align=right,leftmargin=*,labelindent=\widthof{(i)--(v):}]
    \item [(i)--(v):] These easily follow from the fact that the $\min$ and $\max$ functions satisfy the idempotent, commutative, associative, absorption and distributive laws.
    \item [(vi):] Let $\rand{\mbig}{M_1}$ and $\reward_{M_1}$ be the reward functions for $\mbig \wedge M_1$ and $M_1$ respectively. Then for all $(s, a)$ in $\state \times \action$, 
    \begin{align*}
        \rand{\mbig}{M_1}(s, a) &= 
            \begin{cases}
                    \min\{\rbig, \reward_{M_1}(s, a)\}, & \text{if } s \in \goals \\
                    \min \{\constantr, \constantr \}, & \text{otherwise.}
            \end{cases} \\
        &= 
            \begin{cases}
                    \reward_{M_1}(s, a), & \text{if } s \in \goals \\
                    \constantr, & \text{otherwise.}
            \end{cases} && \text{(}\reward_{M_1}(s, a) \in \{ \rsmall, \rbig \} \text{ for } s \in \goals \text{)}  \\
        &= \reward_{M_1}(s, a).
    \end{align*}
Thus $\mbig \wedge M_1 = M_1$.
Similarly $\mbig \vee M_1 = \mbig$, $\msmall \wedge M_1 = \msmall$, and $\msmall \vee M_1 = M_1$ . 
Hence $\msmall$ and $\mbig$ are the universal bounds of $\tasks$.
\item [(vii):] Let $\rand{M_1}{\neg M_1}$ be the reward function for $M_1 \wedge \neg M_1$. Then for all $(s,a)$ in $\state \times \action$,
    \begin{align*}
        \rand{M_1}{\neg M_1}(s, a) &= 
            \begin{cases}
                    \min\{\reward_{M_1}(s,a), (\rbig + \rsmall) - r_{M_1}(s,a) \}, & \text{if } s \in \goals \\
                    \min \{\constantr, \left(\constantr + \constantr \right) - \constantr\}, & \text{otherwise.}
            \end{cases} \\
        &= 
            \begin{cases}
                    \rsmall, & \text{if } s \in \goals \text{ and } \reward_{M_1}(s, a) = \rbig \\
                    \rsmall, & \text{if } s \in \goals \text{ and } \reward_{M_1}(s, a) = \rsmall \\
                    \constantr, & \text{otherwise.}
            \end{cases}  \\
        &= \reward_{\msmall}(s, a).
    \end{align*}
Thus $M_1 \wedge \neg M_1 = \msmall$, and similarly $M_1 \vee \neg M_1 = \mbig$.
\end{description}
\end{proof}

\section{Proofs of Properties of Extended Value Functions}

\begin{lemma}
Let $\reward_{M}, \rbar_{M}, \qstar_M, \qstarbar_M$ be the reward function, extended reward function, optimal Q-value function, and optimal extended Q-value function for a task $M$ in $\tasks$. 
Then for all $(s, a)$ in $\state \times \action$, we have
\begin{enumerate*}[label=(\roman*)]
    \item $\reward_M(s, a) = \max\limits_{g \in \goals} \rbar_M(s, g, a)$, and
    \item $\qstar_M(s, a) = \max\limits_{g \in \goals} \qstarbar_M(s, g, a)$.
\end{enumerate*}
\label{lem:1}
\end{lemma}
\begin{proof}
~
\begin{description}[align=right,leftmargin=*,labelindent=\widthof{(ii):}]
    \item [(i):]
        \begin{align*}
        \max\limits_{g \in \goals} \rbar_M(s, g, a) &= 
            \begin{cases}
                    \max\{\rbarmin, \reward_{M}(s, a)\}, & \text{if } s \in \goals \\
                    \max\limits_{g \in \goals} \reward_M(s,a), & \text{otherwise.}
            \end{cases} \\
        &= \reward_{M}(s, a) && \text{(} \rbarmin \leq \rmin \leq \reward_{M}(s, a) \text{ by definition)}.
    \end{align*}
    \item [(ii):] Each $g$ in $\goals$ can be thought of as defining an MDP $M_g \vcentcolon = (\state, \action, \dynamics, \reward_{M_g})$ with reward function $\reward_{M_g}(s, a) \vcentcolon = \rbar_M(s, g, a)$ and optimal Q-value function $\qstar_{M_g}(s, a) = \qstarbar_M(s, g, a)$. Then using (i) we have $\reward_M(s, a) =  \max\limits_{g \in \goals} \reward_{M_g}(s, a)$ and from \citet[Corollary 1]{vanniekerk19}, we have that $\qstar_M(s, a) =  \max\limits_{g \in \goals} \qstar_{M_g}(s, a) =  \max\limits_{g \in \goals} \qstarbar_M(s, g, a)$.
\end{description}
\end{proof}

\begin{lemma}
Denote $\state^{-} = \state \setminus \goals  $ as the non-terminal states of $\tasks$.  Let $M_1, M_2 \in \tasks$, and let each $g$ in $\goals$ define MDPs $M_{1,g}$ and $M_{2,g}$ with reward functions 
\[
r_{M_{1,g}} \vcentcolon = \rbar_{M_1}(s,g,a) \text{ and } r_{M_{2,g}} \vcentcolon = \rbar_{M_2}(s,g,a) \text{ for all } (s,a) \text{ in } \state \times \action.
\]
Then for all $g$ in $\goals$ and $s$ in $\state^{-}$,
\[
\pistar_g(s) \in \argmax\limits_{a \in \action}\qstar_{M_{1, g}}(s, a) \text{ iff } \pistar_g(s) \in \argmax\limits_{a \in \action}\qstar_{M_{2, g}}(s, a).
\]
\end{lemma}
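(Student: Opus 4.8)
The plan is to prove the stronger fact that for every $g$ in $\goals$ and every $s$ in $\state^{-}$ the two sets $\argmax_{a \in \action} \qstar_{M_{1,g}}(s,a)$ and $\argmax_{a \in \action} \qstar_{M_{2,g}}(s,a)$ coincide; the claimed equivalence is then immediate, regardless of what the fixed policy $\pistar_g$ is.

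First I would extract from Assumption~2 the precise relationship between $M_{1,g}$ and $M_{2,g}$: they share the state space $\state$, the action space $\action$ and the dynamics $\dynamics$, and their reward functions $\rbar_{M_1}(\cdot,g,\cdot)$ and $\rbar_{M_2}(\cdot,g,\cdot)$ agree on $\state^{-} \times \action$ (both equal the task-independent non-terminal reward $\constantr$) and on $(\goals \setminus \{g\}) \times \action$ (both equal $\rbarmin$); they may differ only at the single absorbing state $g$, where each takes some value $v_i$ with $\rsmall \le v_i \le \rbig$. Since $g$ is absorbing, reaching it contributes a one-off return $v_i$, so for any stationary policy $\pi$, any $s \in \state^{-}$ and any $a \in \action$ the return splits as
\[
Q^{\pi}_{M_{i,g}}(s,a) = A^{\pi}(s,a) + B^{\pi}(s,a)\, v_i ,
\]
where, with $\tau$ the first hitting time of $\goals$, $B^{\pi}(s,a) = \E\big[\gamma^{\tau}\mathbf{1}(s_\tau = g) \mid s_0 = s,\, a_0 = a,\, \pi\big] \ge 0$ and $A^{\pi}(s,a)$ gathers the discounted step rewards up to $\tau$ together with the discounted $\rbarmin$-penalty should $s_\tau \in \goals \setminus \{g\}$. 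Both $A^{\pi}$ and $B^{\pi}$ depend only on the parts of the MDP on which $M_{1,g}$ and $M_{2,g}$ agree, hence not on $i$.

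Next I would invoke the quantitative content of Assumption~2 --- that $\rsmall$, and therefore every admissible $v_i$, exceeds the ``never terminate'' value $\constantr/(1-\gamma)$, while $\rbarmin$ is chosen so negative that no optimal policy of $M_{i,g}$ ever plans to absorb at a state other than $g$ --- to conclude that from each $s \in \state^{-}$ every optimal policy of $M_{i,g}$ reaches $g$ almost surely. For such a policy the accumulated step reward equals $\constantr\,\E\big[\tfrac{1-\gamma^{\tau}}{1-\gamma}\big] = \tfrac{\constantr}{1-\gamma}\big(1 - B^{\pi}(s,a)\big)$, so evaluating along optimal play gives
\[
\qstar_{M_{i,g}}(s,a) = \frac{\constantr}{1-\gamma} + \psi_g(s,a)\Big(v_i - \frac{\constantr}{1-\gamma}\Big), \qquad \psi_g(s,a) \vcentcolon= \max_{\pi\ g\text{-proper}} B^{\pi}(s,a),
\]
where $\psi_g \ge 0$ is task-independent. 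Because $v_i - \constantr/(1-\gamma) > 0$, the map $a \mapsto \qstar_{M_{i,g}}(s,a)$ is an increasing affine function of the task-independent map $a \mapsto \psi_g(s,a)$, so $\argmax_{a} \qstar_{M_{i,g}}(s,a) = \argmax_{a} \psi_g(s,a)$, which does not depend on $i$. This gives the desired set equality, and hence the lemma.

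The step I expect to be the main obstacle is precisely the reduction that makes the optimal ``route to $g$'' task-independent: one must exclude the degenerate regime in which a smaller terminal reward $v_i$ would make dawdling, or steering toward a different absorbing state, preferable in one MDP but not the other. This is exactly where the quantitative hypotheses of Assumption~2 ($\rsmall$ large enough, $\rbarmin$ small enough) and the constancy over $\state^{-}\times\action$ of the non-terminal reward $\constantr$ are used; once almost-sure arrival at $g$ under every optimal policy is secured, the remainder is the elementary affine computation above.
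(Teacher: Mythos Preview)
Your plan is broadly sound and, in one respect, actually cleaner than the paper's argument: you aim to show the two $\argmax$ sets coincide via an affine decomposition $\qstar_{M_{i,g}}(s,a)=c+\psi_g(s,a)(v_i-c)$ with task-independent $\psi_g$ and positive slope, which immediately gives the set equality. The paper instead fixes a policy $\pistar_g$ optimal for $M_{1,g}$, argues by contradiction---using the explicit choice $\rbarmin=(\rmin-\rmax)D$ together with the diameter bound $D$---that $\pistar_g$ cannot terminate at any $g'\neq g$, and then concludes (somewhat tersely) that $\pistar_g$ is also optimal for $M_{2,g}$. Your affine computation is exactly the missing justification for that last step in the paper, so the two approaches are complementary rather than redundant.

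There are, however, two mismatches you should be aware of. First, the paper works in the undiscounted, deterministic, proper-policy regime (no $\gamma$ appears in the proof; returns are finite sums along trajectories), so your expressions $\constantr/(1-\gamma)$ and $\E[\gamma^\tau]$ do not apply as written. The undiscounted analogue is simply that a $g$-proper trajectory of length $T$ has return $T\constantr+v_i$, hence the $\argmax$ over $a$ is governed by the task-independent shortest-path structure; you should recast your decomposition in those terms. Second, the step you flag as ``the main obstacle''---ruling out that an optimal policy absorbs at some $g'\neq g$---is precisely what the paper proves in detail, and it does so not via an inequality like $\rsmall>\constantr/(1-\gamma)$ but via the concrete definition of $\rbarmin$ and the finite diameter $D$. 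Your sketch of that step appeals to quantitative content of Assumption~2 that does not match what the paper actually assumes, so if you carry out your plan you will need to substitute the paper's contradiction argument (or an equivalent one phrased for the undiscounted deterministic setting) at that point.
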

\begin{proof}
Let $g \in \goals, s \in \state^-$ and let $\pistar_g$ be defined by
\[ 
\pistar_g(s^\prime) \in \argmax\limits_{a \in \action} \qstar_{M_1, g}(s,a) \text{ for all } s^\prime \in \state. 
\]
If $g$ is unreachable from $s$, then we are done since for all $(s^\prime, a)$ in $\state \times \action$ we have 
\begin{align*}
    g \neq s^\prime &\implies \reward_{M_{1,g}}(s^\prime, a) = \begin{cases}
    \rbarmin, &\text{ if } s^\prime \in \goals \\
    r_{s^\prime,a}, &\text{ otherwise}
    \end{cases}
    = \reward_{M_{2,g}}(s^\prime, a) \\
    &\implies M_{1,g} = M_{2,g}.
\end{align*}
If $g$ \textit{is} reachable from $s$, then we show that following $\pistar_g$ must reach $g$.
Since $\pistar_g$ is proper, it must reach a terminal state $g^\prime \in \goals$.
Assume $g^\prime \neq g$. 
Let $\pi_g$ be a policy that produces the shortest trajectory to $g$. 
Let $G^{\pistar_g}$ and $G^{\pi_g}$ be the returns for the respective policies. Then,
\begin{align*}
    &G^{\pistar_g} \geq G^{\pi_g} \\
    &\implies G^{\pistar_g}_{T-1} + r_{M_{1,g}}(g^\prime, \pistar_g(g^\prime)) \geq G^{\pi_g}, \\ 
    &\text{ where } G^{\pistar_g}_{T-1} = \sum_{t=0}^{T-1}\reward_{M_{1,g}}(s_t,\pistar_g(s_t)) \text{ and } T \text{ is the time at which } g^\prime \text{ is reached.}     \\
    &\implies G^{\pistar_g}_{T-1} + \rbarmin \geq G^{\pi_g}, \text{ since } g \neq g^\prime \in \goals \\
    &\implies \rbarmin \geq G^{\pi_g} - G^{\pistar_g}_{T-1} \\
    &\implies (\rmin - \rmax)D \geq G^{\pi_g} - G^{\pistar_g}_{T-1} , \text{ by definition of } \rbarmin \\
    &\implies G^{\pistar_g}_{T-1} - \rmax D \geq G^{\pi_g} - \rmin D, \text{ since } G^{\pi_g} \geq \rmin D \\
    &\implies G^{\pistar_g}_{T-1} - \rmax D \geq 0 \\
    &\implies G^{\pistar_g}_{T-1} \geq \rmax D.
\end{align*}
But this is a contradiction since the result obtained by following an optimal trajectory up to a terminal state without the reward for entering the terminal state must be strictly less that receiving $\rmax$ for every step of the longest possible optimal trajectory. 
Hence we must have $g^\prime = g$. 
Similarly, all optimal policies of $M_{2,g}$ must reach $g$.
Hence $\pistar_g(s) \in \argmax\limits_{a \in \action} \qstar_{M_{2,g}}(s,a)$.
Since $M_1$ and $M_2$ are arbitrary elements of $\tasks$, the reverse implication holds too. 

\end{proof}

\begin{corollary}
Denote $\gstar$ as the sum of rewards starting from $s$ and taking action $a$ up until, but not including, $g$. Then let $M \in \tasks$ and $\qstarbar_M$ be the extended Q-value function. Then for all $s \in \state, g \in \goals, a \in \action$, there exists a $\gstar \in \R$ such that
\begin{equation*}
    \qstarbar_M(s, g, a) = \gstar + \rbar_M(s^\prime,g, a^\prime), \text{ where } s^\prime \in \goals \text{ and } a^\prime = \argmax_{b \in \action} \rbar_M(s^\prime,g, b).
\end{equation*}
\label{cor:1}
\end{corollary}
\begin{proof}
This follows directly from Lemma 2. Since all tasks $M \in \tasks$ share the same optimal policy $\pistar_g$ up to (but not including) the goal state $g \in \goals$, their return $G^{\pistar_g}_{T-1} = \sum_{t=0}^{T-1}\reward_M(s_t,\pistar_g(s_t))$ is the same up to (but not including) $g$. 
\end{proof}

\section{Proof for Boolean Extendend Value Functions Algebra}

\begin{theorem}
Let $\goalq$ be the set of optimal extended $\bar{Q}$-value functions for tasks in $\tasks$ which adhere to Assumption 2. Then $(\goalq,\vee,\wedge,\neg,\qstarbarbig,\qstarbarsmall)$ is a Boolean Algebra.

\end{theorem}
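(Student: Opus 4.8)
## Proof Proposal

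The plan is to transfer the Boolean algebra structure from the task space $\tasks$ (Theorem 1) to the space $\goalq$ of optimal extended $\bar{Q}$-value functions by exhibiting a structure-preserving correspondence between the two. The key observation is that the Boolean operators on $\goalq$ should be defined pointwise via $\min$ and $\max$ (with negation being the reflection $a \mapsto (\qstarbarbig + \qstarbarsmall) - a$, mirroring the definition on reward functions), so most of the axioms (i)--(v) will again follow immediately from the lattice identities satisfied by $\min$ and $\max$ applied pointwise in $(s,g,a)$. The real content is (a) that $\goalq$ is \emph{closed} under these operations, i.e., that combining optimal extended $\bar{Q}$-functions with $\vee,\wedge,\neg$ again yields the optimal extended $\bar{Q}$-function of some task in $\tasks$, and (b) that the identity and complement laws hold with $\qstarbarbig$ and $\qstarbarsmall$ as the bounds.

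First I would set up the map $\Phi : \tasks \to \goalq$ sending a task $M$ to its optimal extended $\bar{Q}$-value function $\qstarbar_M$, and argue it is a bijection onto $\goalq$ (surjectivity is by definition of $\goalq$; injectivity is not strictly needed, but a homomorphism argument suffices). Then the core lemma to prove is that $\Phi$ is a homomorphism: for $M_1, M_2 \in \tasks$,
\[
\qstarbar_{M_1 \wedge M_2} = \qstarbar_{M_1} \wedge \qstarbar_{M_2}, \quad
\qstarbar_{M_1 \vee M_2} = \qstarbar_{M_1} \vee \qstarbar_{M_2}, \quad
\qstarbar_{\neg M_1} = \neg\, \qstarbar_{M_1},
\]
where the left-hand sides use the already-defined task operations and the right-hand sides are the pointwise $\min$/$\max$/reflection. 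To prove the $\wedge$ case, I would fix $(s,g,a)$ and apply Corollary 1 to each of $M_1$, $M_2$, and $M_1 \wedge M_2$: since all three tasks share the same optimal policy $\pistar_g$ up to (but not including) $g$, the prefix return $\gstar$ is common to all three, so $\qstarbar_{M_i}(s,g,a) = \gstar + \rbar_{M_i}(s', g, a')$ for the appropriate terminal $s'$ and greedy $a'$. The problem then reduces to a statement purely about extended reward functions, namely $\rbar_{M_1 \wedge M_2}(s',g,a') = \min\{\rbar_{M_1}(s',g,a'), \rbar_{M_2}(s',g,a')\}$, which is immediate from the definition of $\wedge$ on tasks (it takes the $\min$ of reward functions) together with the definition of the extended reward function. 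The $\vee$ and $\neg$ cases are analogous, using that $\neg$ on tasks is the reflection $r \mapsto (\rbig + \rsmall) - r$ on reward functions. A subtlety to check carefully is that the greedy action $a'$ at the terminal state is consistent across the combined task — but since at terminal states the extended reward is constant in $a$ (or the relevant comparison is over a set on which all tasks agree in ordering), this is fine; alternatively one can push the $\max$ over $a'$ outside and use that $\min$/$\max$ commute appropriately.

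Once the homomorphism property is established, the Boolean axioms for $(\goalq,\vee,\wedge,\neg,\qstarbarbig,\qstarbarsmall)$ follow by transport: for any identity among $\vee,\wedge,\neg$ that holds in $\tasks$ (which it does, by Theorem 1), applying $\Phi$ to both sides and using the homomorphism property yields the corresponding identity in $\goalq$; in particular the identity element $\msmall$ maps to $\qstarbar_{\msmall} = \qstarbarsmall$ and $\mbig$ maps to $\qstarbarbig$, so the identity and complement laws transfer directly. I expect the main obstacle to be item (b)/the closure-and-homomorphism step — specifically, handling the $\max$ over goals and over actions cleanly so that the pointwise combination of \emph{optimal} extended $Q$-functions really does coincide with the optimal extended $Q$-function of the combined task, rather than merely some suboptimal value function. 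This is exactly where Lemma 2 and Corollary 1 do the heavy lifting (they guarantee the shared optimal sub-goal policies, hence the shared prefix returns), so the proof should reduce the dynamic-programming content to those earlier results and leave only the reward-level algebra, which is routine.
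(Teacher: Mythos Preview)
Your proposal is correct but organizes the argument differently from the paper. The paper verifies the Boolean axioms on $\goalq$ directly: axioms (i)--(v) are disposed of via the lattice identities for $\min$ and $\max$ (just as you say), and then (vi) and (vii) are checked by hand, invoking Corollary~1 at each step to write $\qstarbar_M(s,g,a) = \gstar + \rbar_M(s',g,a')$, factoring out the common prefix return $\gstar$, and reducing to the obvious reward-level identity (e.g.\ $\min\{\rbar_{\mbig},\rbar_{M_1}\}=\rbar_{M_1}$ for (vi), and the two-case analysis $\rbar_{M_1}\in\{\rsmall,\rbig\}$ for (vii)). The homomorphism $\qstarbar_{M_1 \circ M_2} = \qstarbar_{M_1}\circ\qstarbar_{M_2}$ is stated and proved only \emph{afterwards}, as a separate theorem, using the same Corollary~1 computation once more.

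Your route swaps the order: you establish the homomorphism first (essentially inlining the paper's later Theorem~3) and then obtain closure, identities, and complements on $\goalq$ by transport from Theorem~1 on $\tasks$. This is slightly more economical, since the Corollary~1 decomposition is invoked once rather than separately for (vi), (vii), and the homomorphism; it also makes the closure of $\goalq$ under the operations explicit, which the paper's direct verification leaves implicit. The paper's ordering, on the other hand, keeps Theorem~2 logically self-contained and presents the homomorphism as a stronger subsequent result. Both rest on the same technical core (Lemma~2/Corollary~1), and the ``subtlety'' you flag about the greedy terminal action $a'$ is harmless for the same reason in both proofs.
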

\begin{proof}
Let $\qstarbar_{M_1}, \qstarbar_{M_2} \in \goalq$ be the optimal $\bar{Q}$-value functions for tasks $M_1, M_2 \in \tasks$ with reward functions $r_{M_1}$ and $r_{M_2}$. We show that $\neg, \vee, \wedge$ satisfy the Boolean properties (i) -- (vii).

\begin{description}[align=right,leftmargin=*,labelindent=\widthof{(i)--(v):}]
    \item [(i)--(v):] These follow directly from the properties of the $\min$ and $\max$ functions.
    \item [(vi):] For all $(s, g, a)$ in $\state \times \goals \times \action$, 
    \begin{align*}
        (\qstarbarbig \wedge \qstarbar_{M_1})(s,g,a) &= \min \{ (\qstarbarbig(s,g,a),  \qstarbar_{M_1}(s,g,a)  \} \\
        &= \min \{ \gstar + \rbar_{\mbig}(s^\prime,g,a^\prime), \gstar + \rbar_{M_1}(s^\prime,g,a^\prime) \} \quad \text{(Corollary 1)} \\ 
     &= \gstar + \min \{\rbar_{\mbig}(s^\prime,g,a^\prime), \rbar_{M_1}(s^\prime,g,a^\prime) \} \\
     &= \gstar + \rbar_{M_1}(s^\prime,g,a^\prime) \quad \quad \quad \text{(since } \rbar_{M_1}(s^\prime,g, a^\prime)  \in \{ \rsmall, \rbig, \rbarmin  \} \text{)}  \\
     &= \qstarbar_{M_1}(s,g,a).
    \end{align*}
    Similarly, $\qstarbarbig \vee \qstarbar_{M_1} = \qstarbarbig, \qstarbarsmall \wedge \qstarbar_{M_1} = \qstarbarsmall$, and $\qstarbarsmall \vee \qstarbar_{M_1} = \qstarbar_{M_1}$.
    \item [(vii):] For all $(s, g, a)$ in $\state \times \goals \times \action$, 
    \begin{align*}
    (\qstarbar_{M_1} \wedge \neg \qstarbar_{M_1})(s,g,a) &= \min\{ \qstarbar_{M_1}(s,g,a), (\qstarbarbig(s,g,a) - \qstarbarsmall(s,g,a)) - \qstarbar_{M_1}(s,g,a) \} \\
    &= \gstar + \min \{ \rbar_{M_1}(s^\prime,g,a^\prime), (\rbar_{\mbig}(s^\prime,g,a^\prime) + \rbar_{\msmall}(s^\prime,g,a^\prime)) \\
    & -  \rbar_{M_1}(s^\prime,g,a^\prime) \} \\
    &= \gstar + \rbar_{\msmall}(s^\prime,g,a^\prime) \\
    &= \qstarbarsmall(s,g,a).
    \end{align*}
    Similarly, $\qstarbar_{M_1} \vee \neg \qstarbar_{M_1}  = \qstarbarbig$.
\end{description}
\end{proof}

\section{Proof for Zero-shot Composition}

\begin{theorem}
Let $\goalq$ be the set of optimal extended $\bar{Q}$-value functions for tasks in $\tasks$ which adhere to Assumption 1. Then for all $M_1, M_2 \in \tasks$, we have 
\begin{enumerate*}[label=(\roman*)]
    \item $\qstarbar_{\neg M_1} = \neg \qstarbar_{M_1}$,
    \item $\qstarbar_{M_1 \vee M_2} = \qstarbar_{M_1} \vee \qstarbar_{M_2}$, and 
    \item $\qstarbar_{M_1 \wedge M_2} = \qstarbar_{M_1} \wedge \qstarbar_{M_2}$.
\end{enumerate*}

\label{thm:3}
\end{theorem}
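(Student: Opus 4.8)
The plan is to establish all three identities pointwise in $(s,g,a)$, reducing each to an elementary statement about terminal rewards by means of Corollary~1 --- the same device used in the proof of Theorem~2. Note first that $\neg M_1$, $M_1 \vee M_2$ and $M_1 \wedge M_2$ are again tasks in $\tasks$ (their reward functions being defined exactly as in the proof of Theorem~1), so Corollary~1 may be invoked for each of them: for any task $M$ arising this way,
\[
\qstarbar_M(s,g,a) = \gstar + \rbar_M(s^\prime,g,a^\prime),
\]
and the crucial point --- which is exactly what the proof of Corollary~1 delivers via Lemma~2 --- is that the prefix return $\gstar$ and the reached terminal state $s^\prime$ are the \emph{same} for every task in $\tasks$, because all of them share the optimal policy $\pistar_g$ up to (but not including) $g$. (When $g$ is unreachable from $s$, the extended MDPs associated with $M_1$, $M_2$ and the composites simply coincide, by the argument in the proof of Lemma~2, and all three identities hold there trivially; so it suffices to treat the reachable case.)

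Fix $(s,g,a)$. By Assumption~1 the terminal reward $\rbar_M(s^\prime,g,\cdot)$ is independent of the action and takes a value $\rho_M \in \{\rsmall,\rbig,\rbarmin\}$ determined by $M$ and the pair $(s^\prime,g)$; hence $\qstarbar_M(s,g,a) = \gstar + \rho_M$ with $\gstar$ the same for every task, and in particular $\qstarbarbig(s,g,a) = \gstar + \rho_{\mbig}$ and $\qstarbarsmall(s,g,a) = \gstar + \rho_{\msmall}$.

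It then only remains to record how $\neg,\vee,\wedge$ act on these terminal values. Straight from the definitions of the task operators, $\rho_{M_1 \wedge M_2} = \min\{\rho_{M_1},\rho_{M_2}\}$, $\rho_{M_1 \vee M_2} = \max\{\rho_{M_1},\rho_{M_2}\}$ and $\rho_{\neg M_1} = (\rho_{\mbig} + \rho_{\msmall}) - \rho_{M_1}$. Since adding the task-independent constant $\gstar$ commutes with $\min$, with $\max$, and with the map $x \mapsto (\qstarbarbig(s,g,a) + \qstarbarsmall(s,g,a)) - x$, we get e.g.
\[
\qstarbar_{M_1 \wedge M_2}(s,g,a) = \gstar + \min\{\rho_{M_1},\rho_{M_2}\} = \min\{\gstar + \rho_{M_1},\, \gstar + \rho_{M_2}\} = (\qstarbar_{M_1} \wedge \qstarbar_{M_2})(s,g,a),
\]
and the same substitution proves (i) and (ii). As $(s,g,a)$ was arbitrary and the operators on extended value functions act pointwise, the identities follow.

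The main obstacle is the step asserting that $\gstar$ and $s^\prime$ are genuinely task-independent and that the terminal reward does not depend on the action: everything downstream is bookkeeping mirroring the proof of Theorem~2 for parts (vi)--(vii), but this is where Lemma~2, Corollary~1 and Assumption~1 do the real work. It is also where the negation case is most delicate, since it requires the terminal reward to range over just the two ``active'' values $\rsmall$ and $\rbig$ (plus $\rbarmin$ at wrong terminal states, disposed of above) so that the complement map $x \mapsto \rbig + \rsmall - x$ preserves the attainable values and is compatible with the reduction.
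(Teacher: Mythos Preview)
Your proposal is correct and follows essentially the same route as the paper: invoke Corollary~1 to decompose each extended $Q$-value as a task-independent prefix $\gstar$ plus a terminal reward, then push $\min$, $\max$, and the complement map through the additive constant. Your version is slightly more explicit about why the prefix and terminal state are task-independent and why the terminal action is irrelevant, but the argument is the same.
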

\begin{proof}
Let $M_1, M_2 \in \tasks$. Then for all $(s,g,a)$ in $\state \times \goals \times \action$,
~
\begin{description}[align=right,leftmargin=*,labelindent=\widthof{(ii):}]
    \item [(i):]
        \begin{align*}
            \qstarbar_{\neg M_1}(s,g,a) &= \gstar + \rbar_{\neg M_1}(s^\prime,g,a^\prime) \quad \text{(from Corollary \ref{cor:1})} \\
            &= \gstar + (\rbar_{\mbig}(s^\prime,g,a^\prime) + \rbar_{\msmall}(s^\prime,g,a^\prime) ) - \rbar_{M_1}(s^\prime,g,a^\prime) \\
            &= \left[ (\gstar + \rbar_{\mbig}(s^\prime,g,a^\prime)) + (\gstar + \rbar_{\msmall}(s^\prime,g,a^\prime)) \right] - (\gstar + \rbar_{M_1}(s^\prime,g,a^\prime)) \\
            &= \left[ \qstarbarbig(s,g,a) + \qstarbarsmall(s,g,a) \right] - \qstarbar_{M_1}(s,g,a) \\
            &= \neg \qstarbar_{M_1}(s,g,a)
        \end{align*}
    \item [(ii):] 
        \begin{align*}
            \qstarbar_{M_1 \vee M_2}(s,g,a) &= \gstar + \rbar_{M_1 \vee M_2}(s^\prime,g,a^\prime) \\
            &= \gstar + \max \{ \rbar_{M_1}(s^\prime,g,a^\prime), \rbar_{M_2}(s^\prime,g,a^{\prime\prime}) \} \\
            &= \max \{ \gstar + \rbar_{M_1}(s^\prime,g,a^\prime), \gstar + \rbar_{M_2}(s^\prime,g,a^{\prime\prime}) \} \\
            &= \max \{ \qstarbar_{M_1}(s,g,a), \qstarbar_{M_2}(s,g,a) \} \\
            &= (\qstarbar_{M_1} \vee \qstarbar_{M_2})(s,g,a)
        \end{align*}
    \item [(iii):] Follows similarly to (ii).
\end{description}
\end{proof}

\begin{corollary}
Let $\mathcal{F}~: \tasks \to \goalq$ be any map from $\tasks$ to $\goalq$ such that $\mathcal{F}(M) = \qstarbar_M$ for all $M$ in $\tasks$. Then $\mathcal{F}$ is a homomorphism between $(\tasks,\vee,\wedge,\neg,\mbig,\msmall)$ and $(\goalq,\vee,\wedge,\neg,\qstarbarbig,\qstarbarsmall)$.
\label{cor:2}
\end{corollary}
\begin{proof}
This follows from Theorem~\ref{thm:3}.
\end{proof}

\section{Goal-oriented Q-learning}

Below we list the pseudocode for the modified Q-learning algorithm used in the four-rooms domain.

\begin{figure*}[h!]
    \centering
{
\SetAlgoNoLine
\begin{algorithm}[H]
\label{alg:q}
\DontPrintSemicolon
    \SetKwInOut{Input}{Input}
 \Input{Learning rate $\alpha$, discount factor $\gamma$, exploration constant $\varepsilon$, lower-bound extended reward $\rbarmin$}

 Initialise $Q: \mathcal{S} \times \mathcal{S} \times \mathcal{A} \rightarrow \mathbb{R}$ arbitrarily\;
 $\mathcal{G} \leftarrow \varnothing$  \;
 \While{$Q$ is not converged}{
   Initialise state $s$ \;
   \While{$s$ is not terminal}{
   
    \uIf{$\mathcal{G} = \varnothing$}{
        Select random action $a$ \;
    }
    \Else{
           $a \leftarrow \begin{cases}
\argmax\limits_{b \in \mathcal{A}} \left( \max\limits_{t \in \mathcal{G}} Q(s, t, b) \right) & \mbox{with probability  } 1 - \varepsilon  \\
\text{a random action} & \mbox{with probability } \varepsilon 
\end{cases}$
    }
   Choose $a$ from $s$ according to policy derived from $Q$ \;
   Take action $a$, observe $r$ and $s^\prime$ \;
    
   \ForEach{$g \in \mathcal{G}$}{
   
       \uIf{$s^\prime$ is terminal}{
            \uIf{$s^\prime \neq g$}{
            $\delta \leftarrow \rbarmin$
            }
            \Else{
            $\delta \leftarrow r - Q(s, g, a)$
            }
       }
       \Else{
        $\delta \leftarrow  r + \gamma \max_b Q(s^\prime, g, b) - Q(s, g, a)$ 
       }
   
        $Q(s, g, a) \leftarrow Q(s, g, a) + \alpha \delta$ \;
    }

   $s \leftarrow s^\prime$ \;
   }
   $\mathcal{G} \leftarrow \mathcal{G} \cup \{s\}$ \;
 }
 \Return{$Q$}
 \caption{Goal-oriented $Q$-learning}
\end{algorithm}
}
    \caption{A $Q$-learning algorithm for learning extended value functions. Note that the greedy action selection step is equivalent to generalised policy improvement \citep{barreto17} over the set of extended value functions. }
    \label{fig:qlearn}
\end{figure*}

\newpage

\section{Investigating Practical Considerations}

The theoretical results presented in this work rely on Assumptions 1 and 2, which restrict the tasks' transition dynamics and reward functions in potentially problematic ways. 
Although this is necessary to prove that Boolean algebraic composition results in optimal value functions, in this section we investigate whether these can be practically ignored.
In particular, we investigate three restrictions: 
\begin{enumerate*}[label=(\roman*)]
    \item the requirement that tasks share the same terminal states,
    \item the impact of using dense rewards, and
    \item the requirement that tasks have deterministic transition dynamics.
\end{enumerate*}

\subsection{Four Rooms Experiments}

We use the same setup as the experiment outlined in Section 4.
We first investigate the difference between using sparse and dense rewards. 
Our sparse reward function is defined as 
\[
r_{\text{sparse}}(s, a) = \begin{cases} 
    2 &\mbox{if } s \in \goals \\
    -0.1 & \mbox{otherwise,} 
\end{cases}
\]
and we use a dense reward function similar to \citet{peng19}:

\[
r_{\text{dense}}(s, a) = \dfrac{0.1}{|\goals|} \sum\limits_{g \in \goals}  \exp(-\dfrac{|s - g|^2}{4}) + r_{\text{sparse}}(s, a)
\]

Using this dense reward function, we again learn to solve the two base task $M_T$ (reaching the centre of the top two rooms) and  $M_L$ (reaching the centre of the left two rooms).
We then compose them to solve a variety of tasks, with the resulting value functions illustrated by Figure~\ref{fig:composition_dense}.

\begin{figure*}[h!]
    \centering
    \begin{subfigure}[t]{0.15\textwidth}
        \centering
        \includegraphics[height=0.85in]{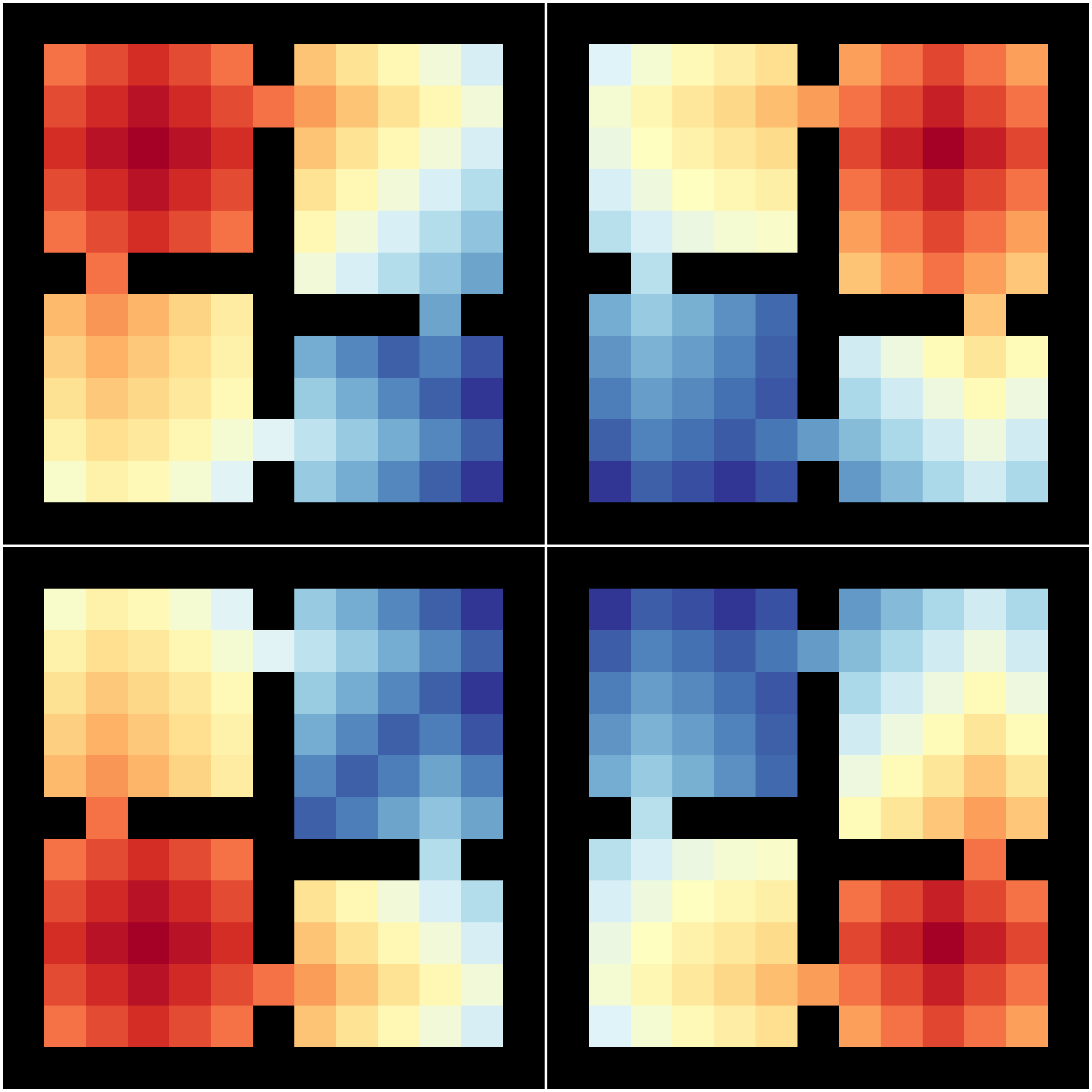}
        \includegraphics[height=0.85in]{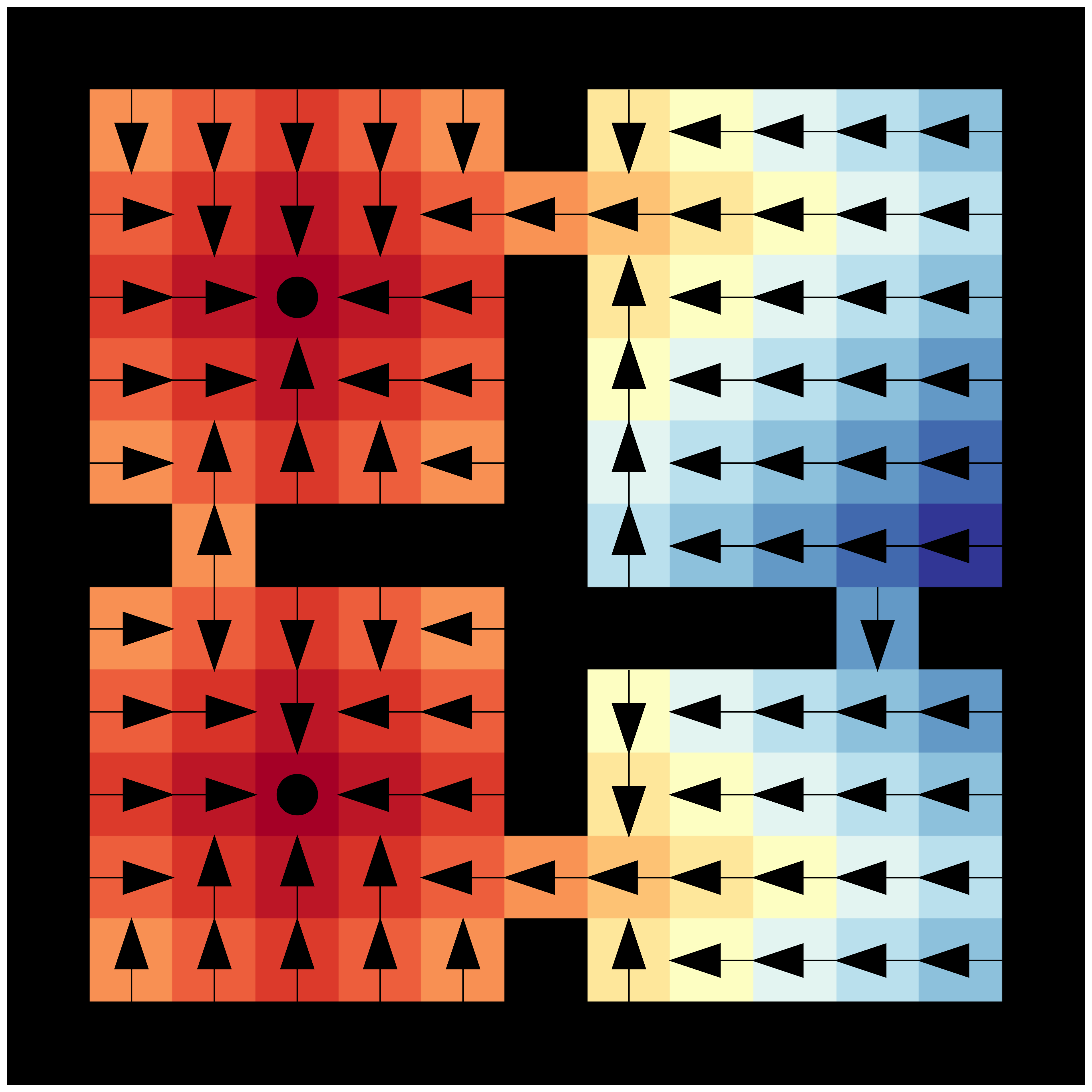}
        \caption{$M_{\text{L}}$}
        \label{fig:a-dense}
    \end{subfigure}%
    ~ 
        \begin{subfigure}[t]{0.15\textwidth}
        \centering
        \includegraphics[height=0.85in]{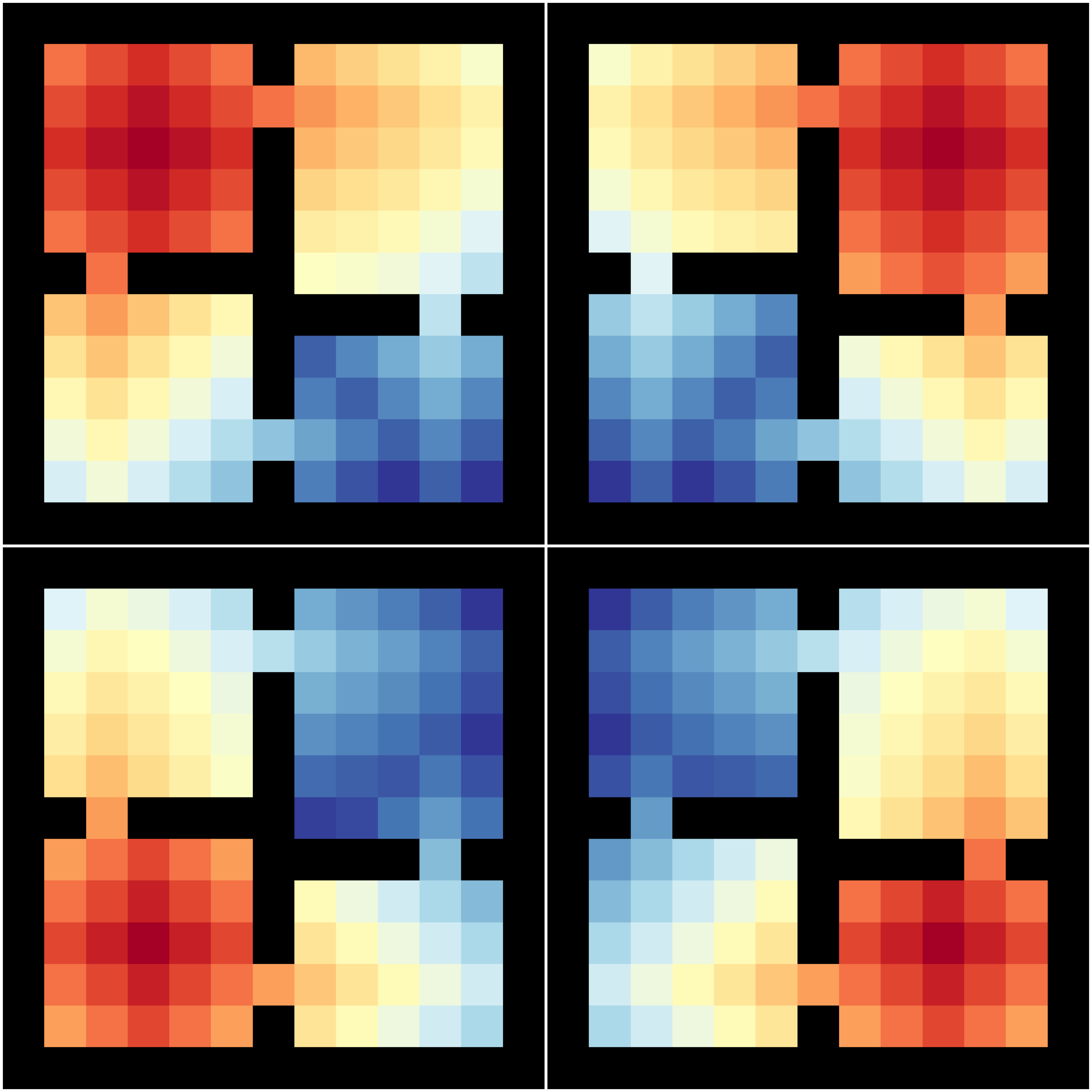}
        \includegraphics[height=0.85in]{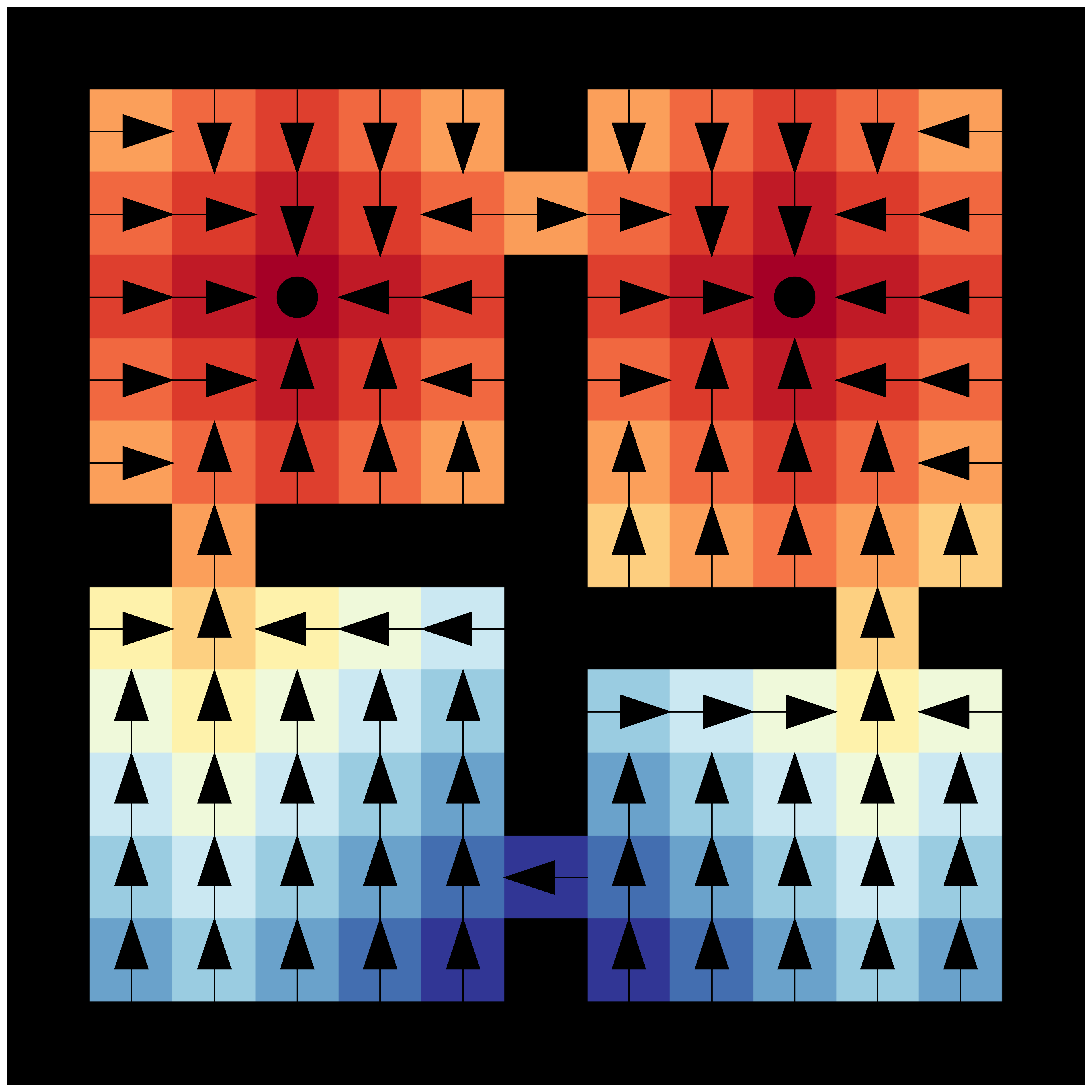}
        \caption{$M_{\text{T}}$}
        \label{fig:b-dense}
    \end{subfigure}%
    ~ 
    \begin{subfigure}[t]{0.15\textwidth}
        \centering
        \includegraphics[height=0.85in]{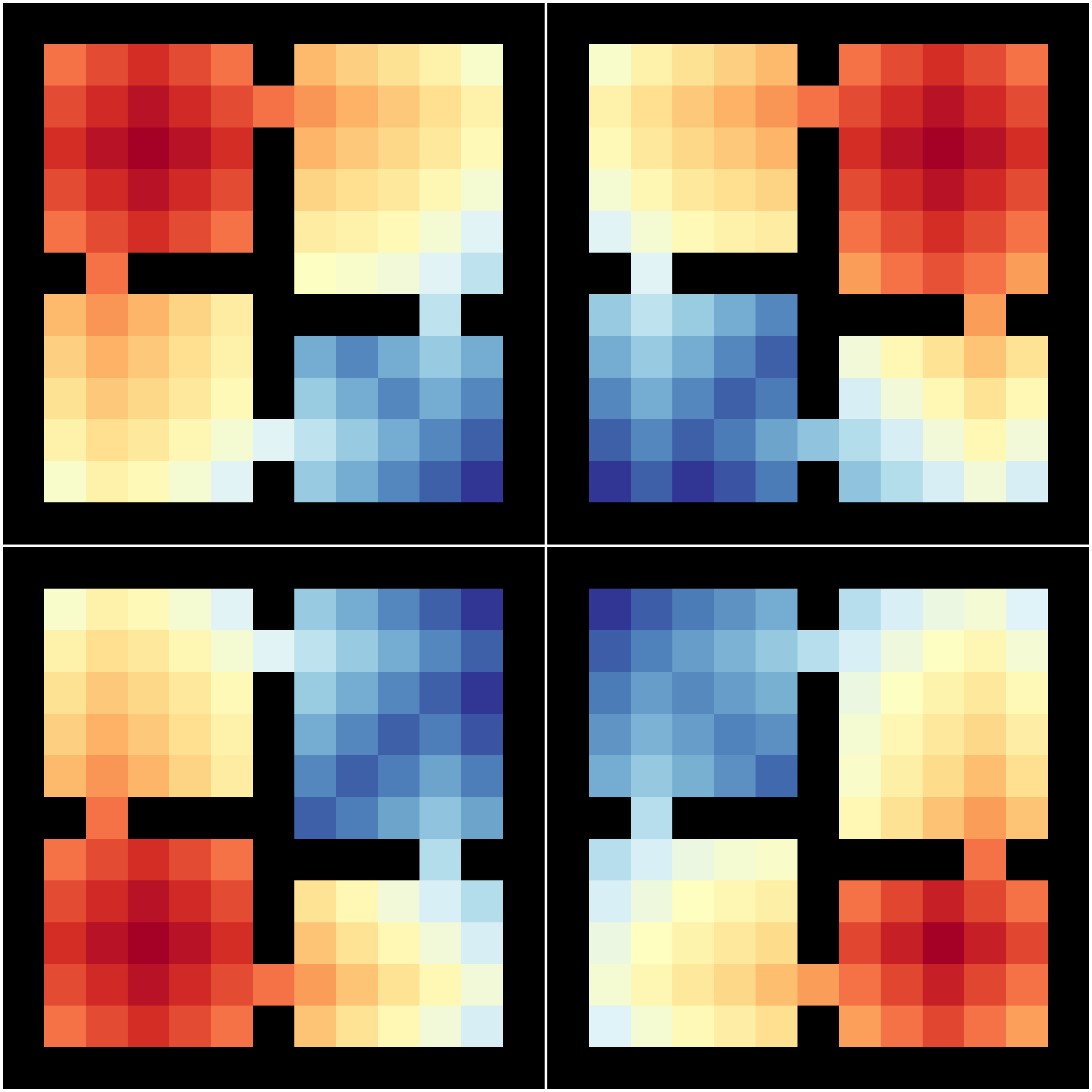}
        \includegraphics[height=0.85in]{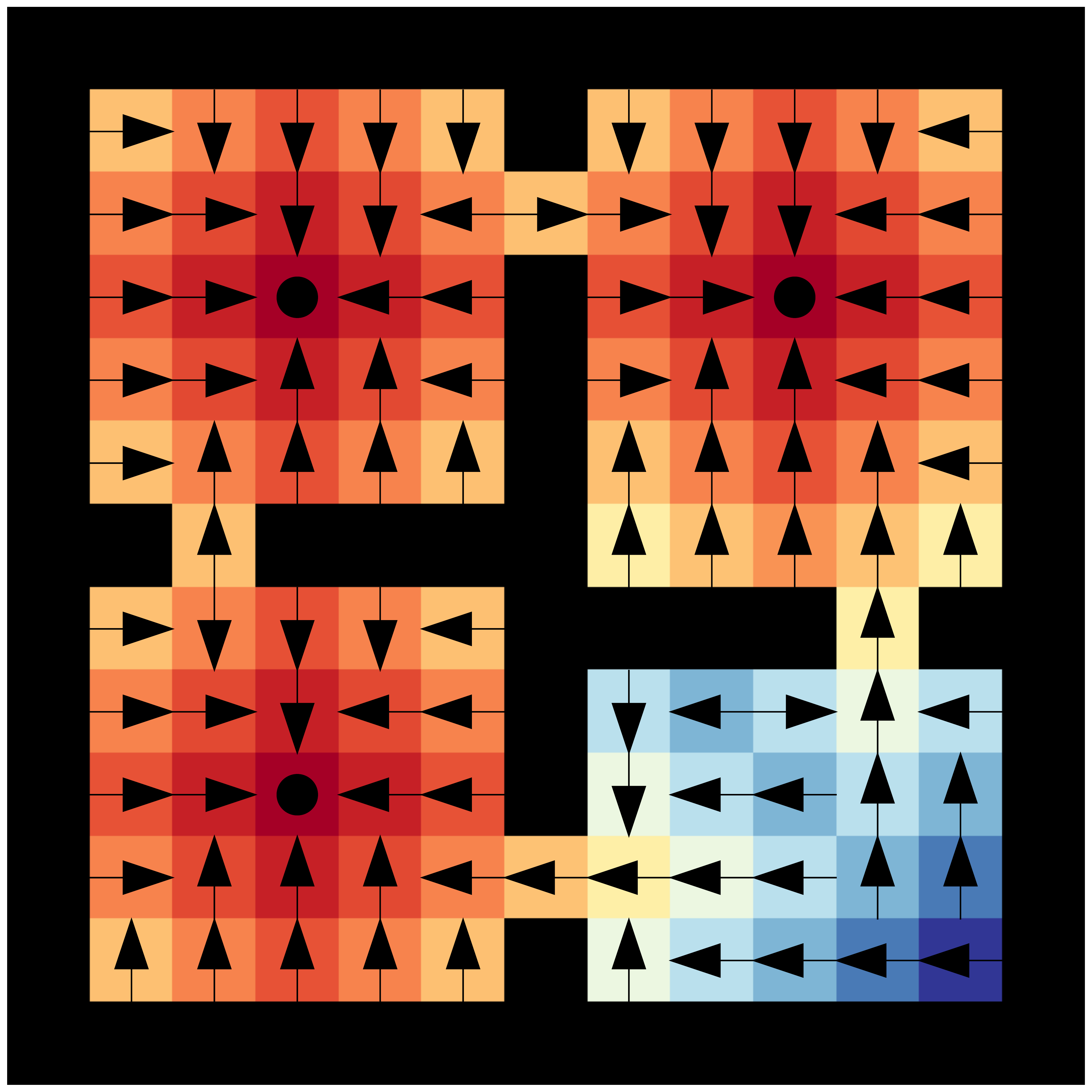}
        \caption{$M_{\text{L}} \vee M_{\text{T}}$}
        \label{fig:c-dense}
    \end{subfigure}%
    ~
    \begin{subfigure}[t]{0.15\textwidth}
        \centering
        \includegraphics[height=0.85in]{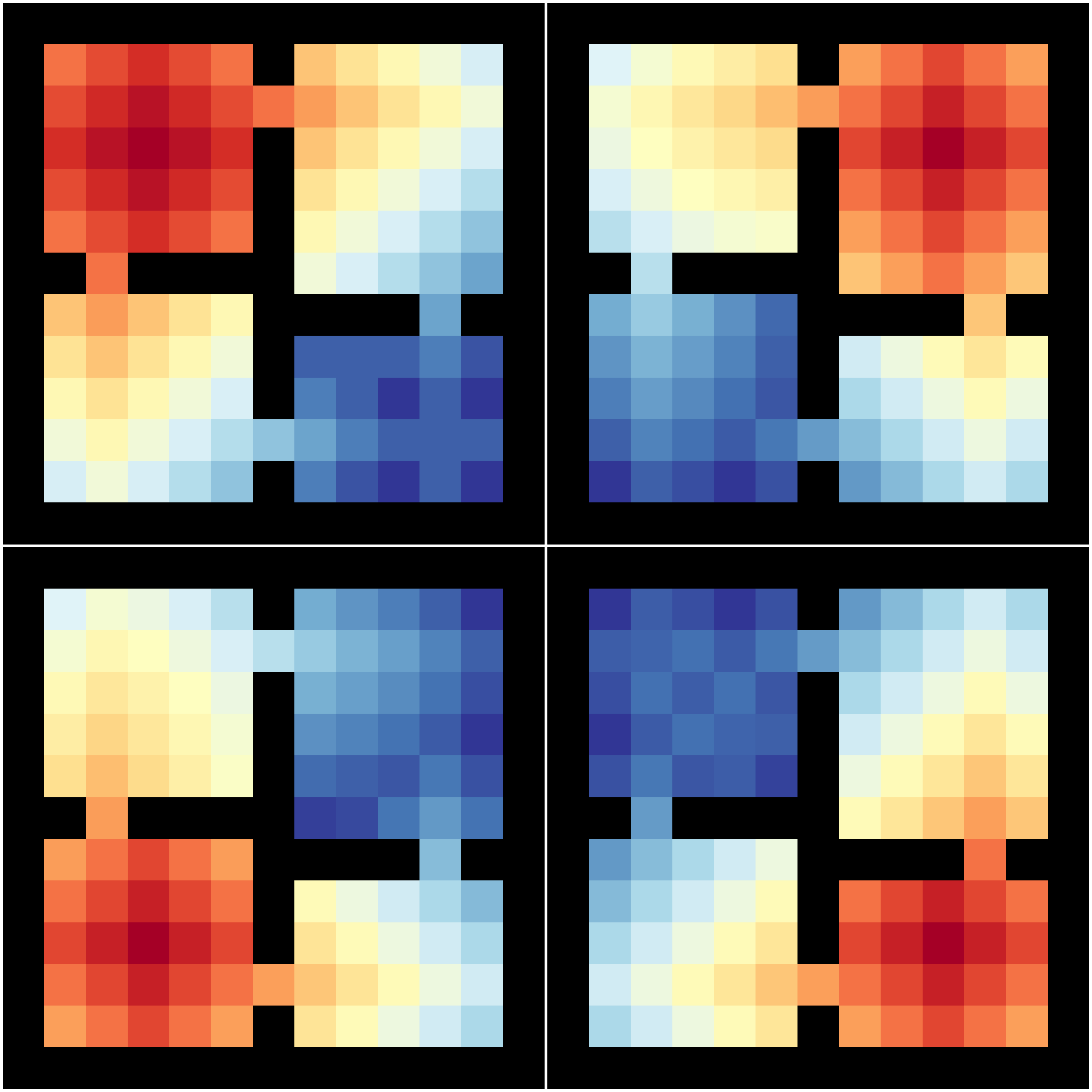}
        \includegraphics[height=0.85in]{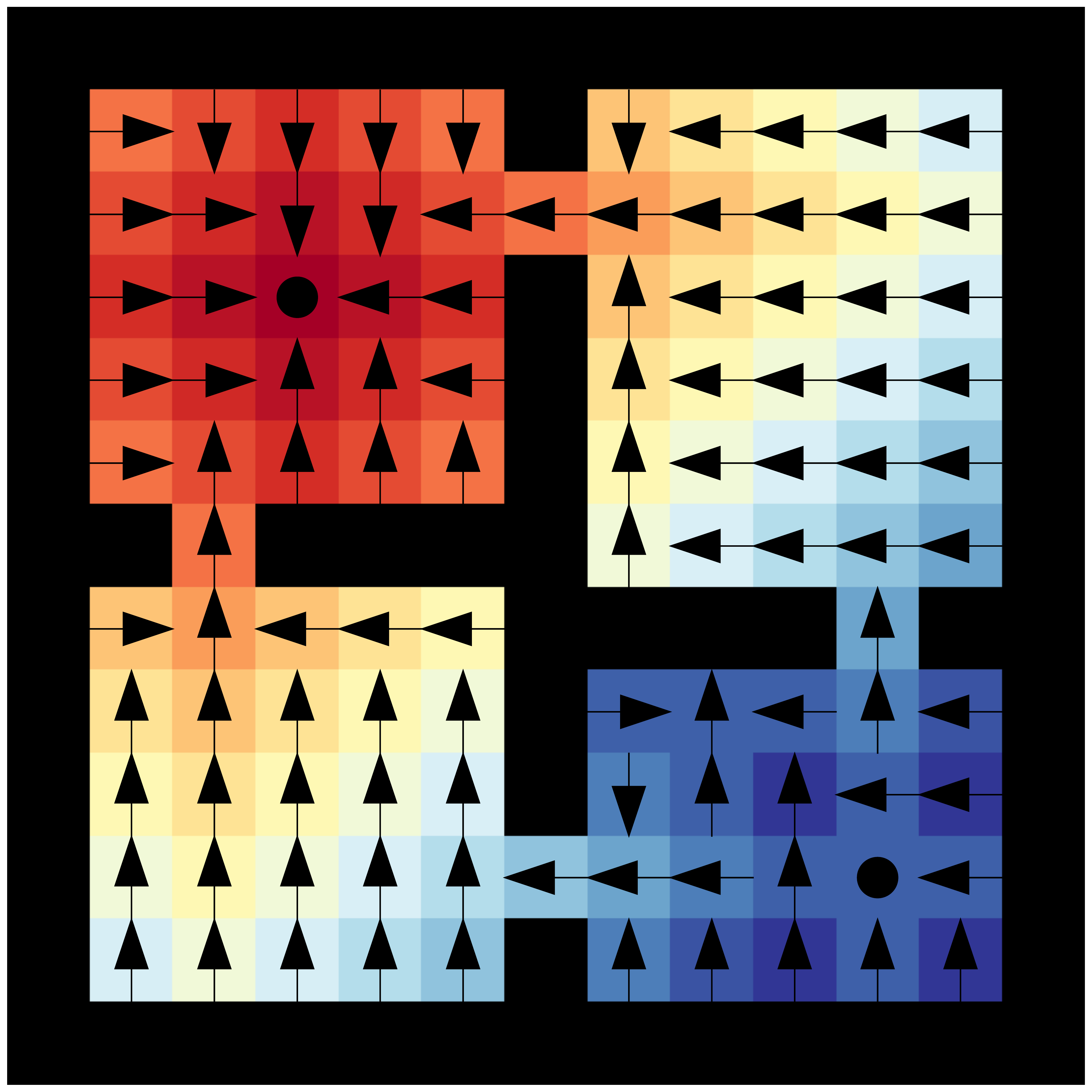}
        \caption{$M_{\text{L}} \wedge M_{\text{T}}$}
        \label{fig:d-dense}
    \end{subfigure}%
    ~ 
    \begin{subfigure}[t]{0.15\textwidth}
        \centering
        \includegraphics[height=0.85in]{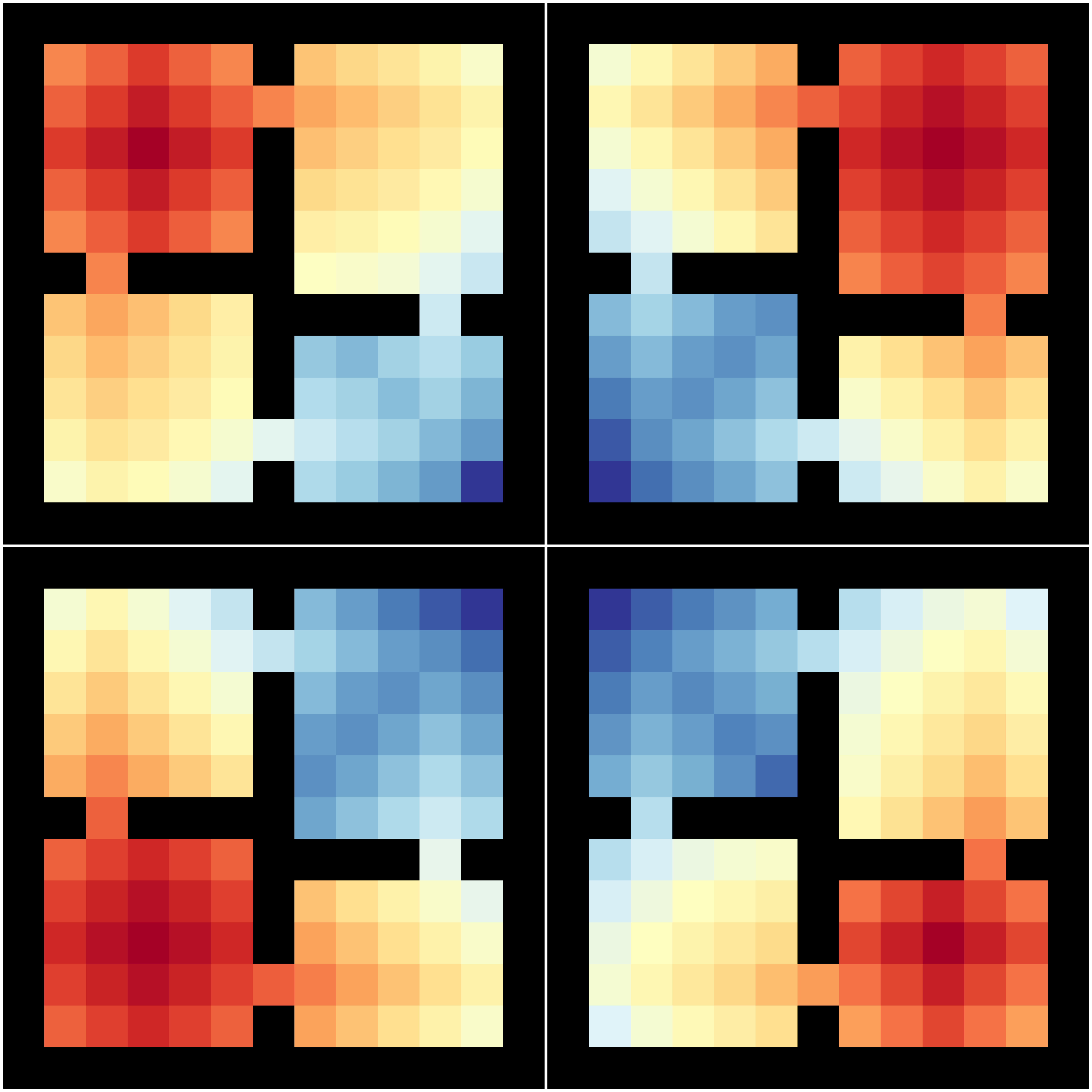}
        \includegraphics[height=0.85in]{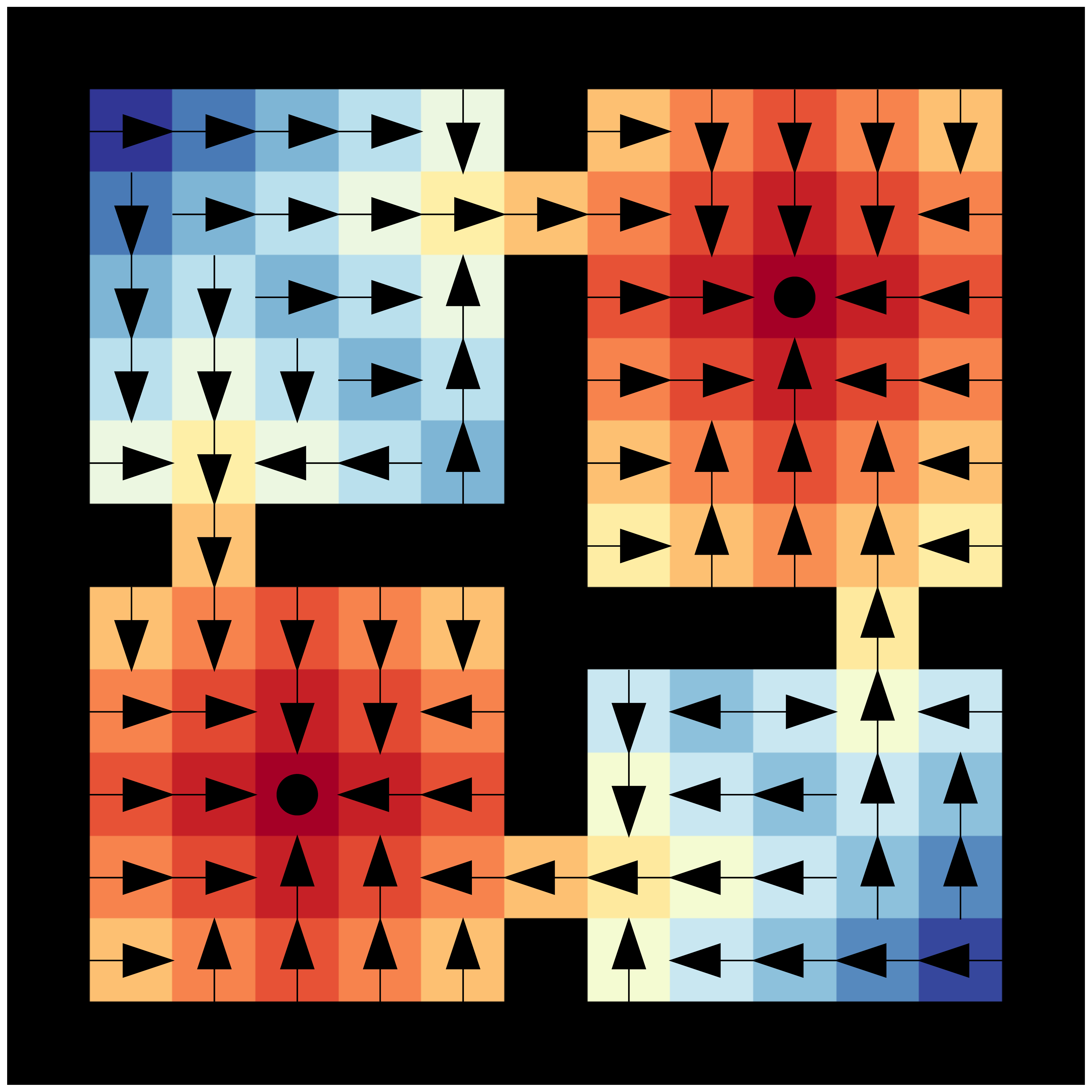}
        \caption{$M_{\text{L}} \veebar M_{\text{T}}$}
        \label{fig:e-dense}
    \end{subfigure}%
    ~ 
    \begin{subfigure}[t]{0.15\textwidth}
        \centering
        \includegraphics[height=0.85in]{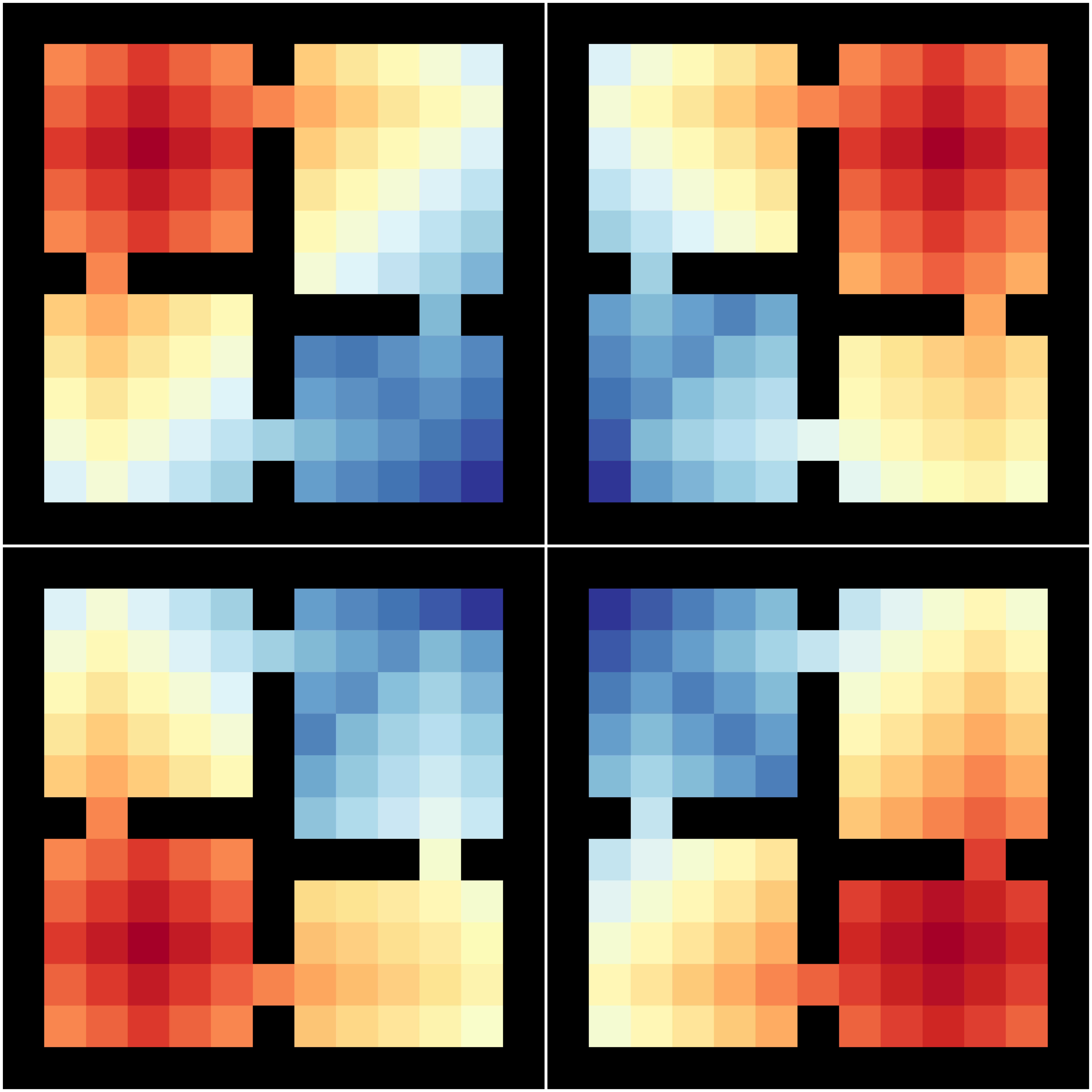}
        \includegraphics[height=0.85in]{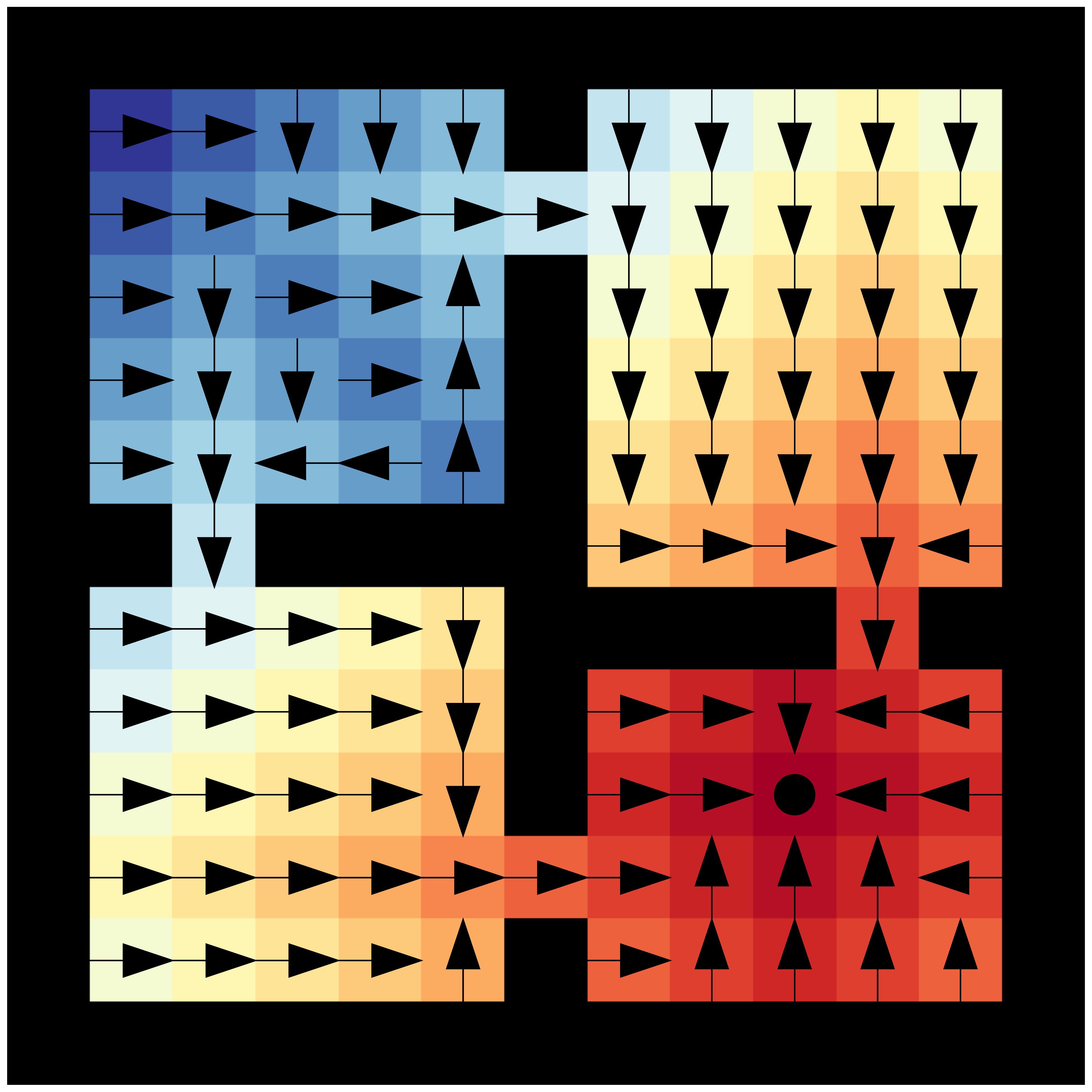}
        \caption{$M_{\text{L}} \barvee M_{\text{T}}$}
        \label{fig:f-dense}
    \end{subfigure}%
    \caption{An example of Boolean algebraic composition using the learned extended value functions with dense rewards. The top row shows the extended value functions while the bottom one shows the recovered regular value functions obtained my maximising over goals. Arrows represent the optimal action in a given state. (\subref{fig:a-dense}--\subref{fig:b-dense}) The learned optimal goal oriented value functions for the base tasks with dense rewards. (\subref{fig:c-dense}) Disjunctive composition.  (\subref{fig:d-dense}) Conjunctive composition.  (\subref{fig:e-dense}) Combining operators to model exclusive-or composition.  (\subref{fig:f-dense}) Composition that produces logical nor. We note that the resulting value functions are very similar to those produced in the sparse reward setting.}
    \label{fig:composition_dense}
\end{figure*}

We also modify the domain so that tasks need not share the same terminating states (that is, if the agent enters a terminating state for a \textit{different} task, the episode does not terminate and the agent can continue as if it were a normal state). 
This results in four versions of the experiment:

\begin{enumerate}[label=(\roman*)]
    \item \texttt{sparse reward, same absorbing set}
    \item \texttt{sparse reward, different absorbing set}
    \item \texttt{dense reward, same absorbing set}
    \item \texttt{dense reward, different absorbing set}
\end{enumerate}

We learn extended value functions for each of the above setups, and then compose them to solve each of the $2^4$ tasks representable in the Boolean algebra. 
We measure each composed value function by evaluating its policy in the sparse reward setting, averaging results over 100000 episodes.
The results are given by Figure~\ref{fig:dense_exp}.

\begin{figure}[h!]
    \centering
    \includegraphics[width=0.95\linewidth]{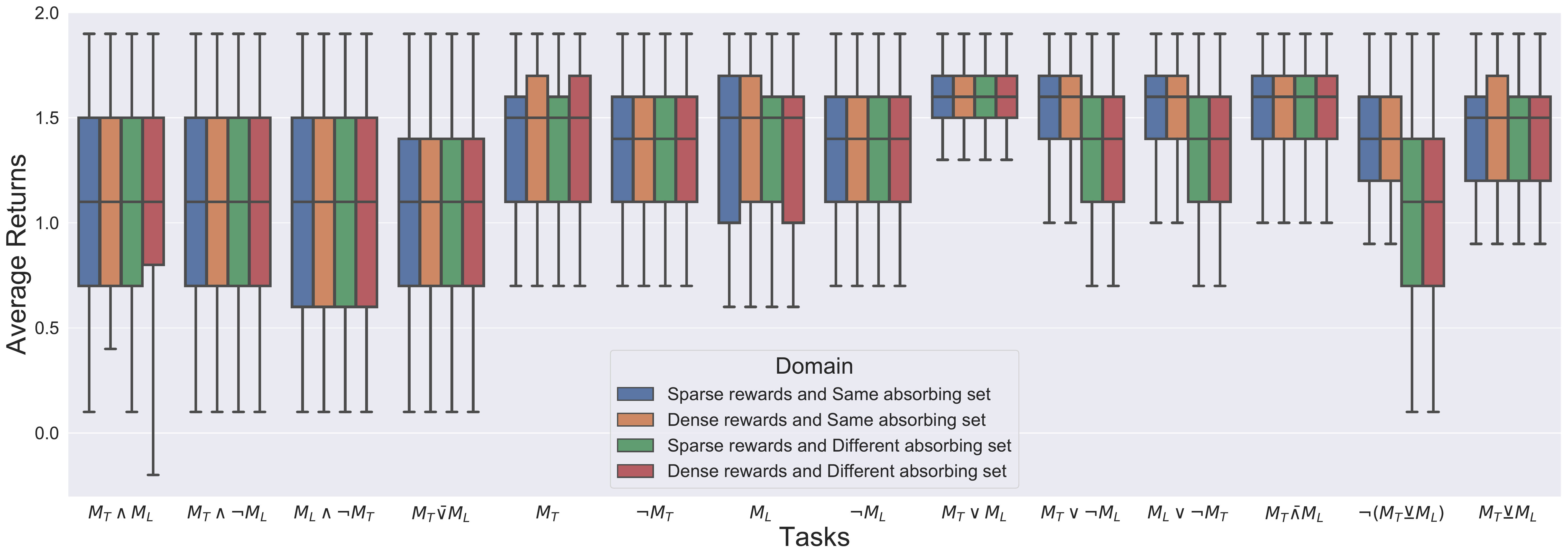}
    \caption{Box plots indicating returns for each of the 16 compositional tasks, and for each of the four variations of the domain. Results are collected over 100000 episodes with random start positions.}
    \label{fig:dense_exp}
\end{figure}

Our results indicate that extended value functions learned in the theoretically optimal manner (\texttt{sparse reward, same absorbing set}) are indeed optimal.
However, for the majority of the tasks, relaxing the restrictions on terminal states and reward functions results in policies that are either identical or very close to optimal. 

\begin{figure*}[h!]
    \centering
    \begin{subfigure}[t]{1\textwidth}
        \centering
        \includegraphics[width=0.95\linewidth]{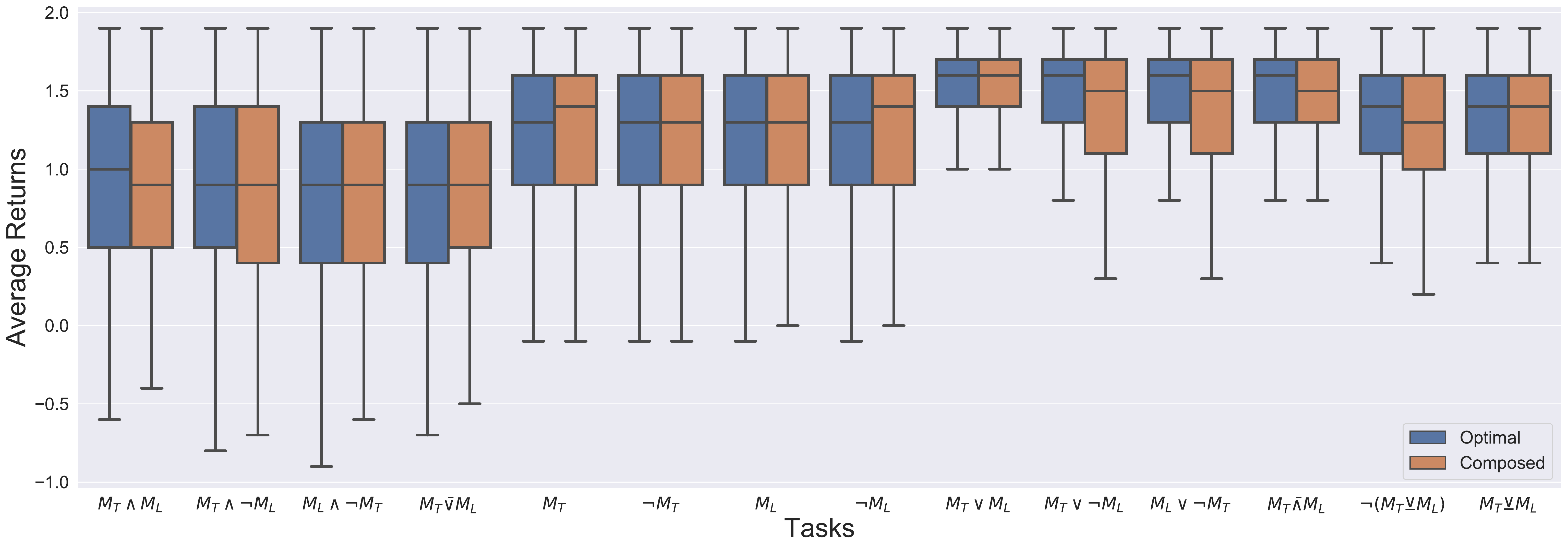}
        \caption{\textit{sp} = 0.1}
        \label{fig:w}
    \end{subfigure}%
    \\
        \centering
    \begin{subfigure}[t]{1\textwidth}
        \centering
        \includegraphics[width=0.95\linewidth]{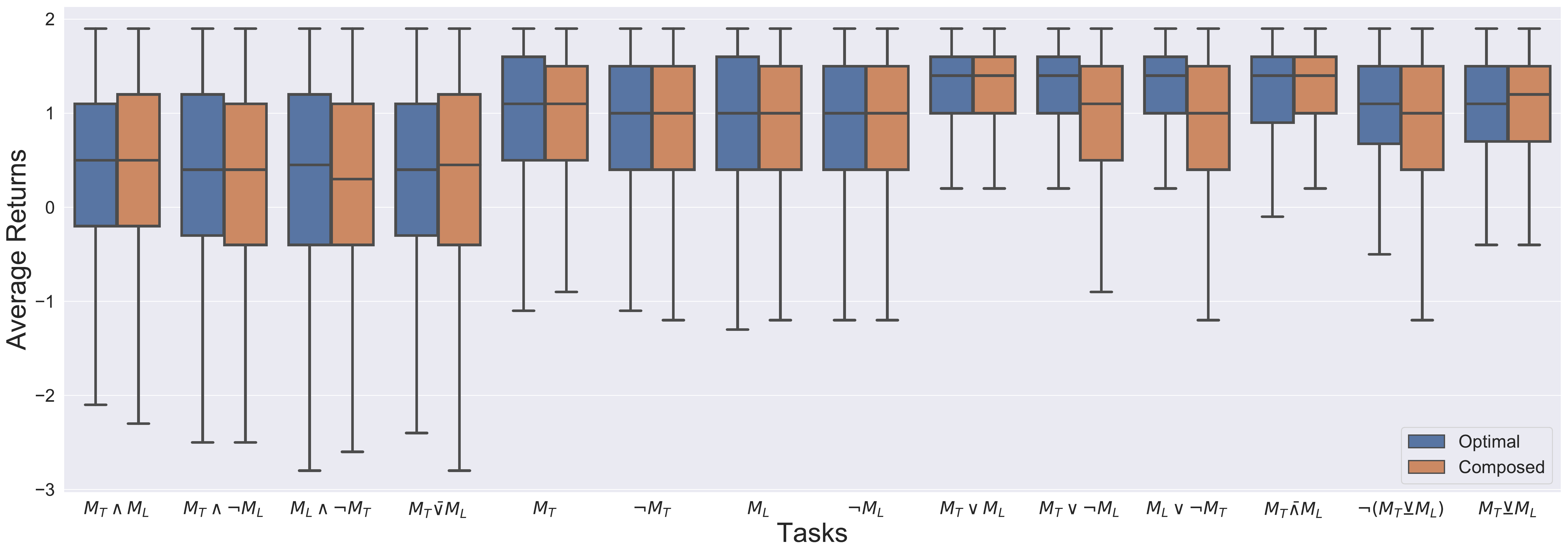}
        \caption{\textit{sp} = 0.3}
        \label{fig:u}
    \end{subfigure}%
    \caption{Box plots indicating returns for each of the 16 compositional tasks, and for each of the slip probabilities. Results are collected over 100000 episodes with random start positions.}
    \label{fig:dense_sp_exp}
\end{figure*}

Finally we investigate the effect of stochastic transition dynamics in addition to dense rewards and different absorbing sets. The domain is modified such that for all tasks there is a slip probability (\textit{sp}) when the agent takes actions in any of the cardinal directions. That is with probability \textit{1-sp} the agent goes in the direction it chooses and with probability \textit{sp} it goes in one of the other 3 chosen uniformly at random. The results are given in Figure~\ref{fig:dense_sp_exp}. Our results show that even when the transition dynamics are stochastic, the learned extended value functions can be composed to produce policies that are identical or very close to optimal.

In summary, we have shown that our compositional approach offers strong empirical performance, even when the theoretical assumptions are violated.

\subsection{Function Approximation Experiments}

In this section we investigate whether we can again loosen some of the restrictive assumptions when tackling high-dimensional environments.
In particular, we run the same experiments as those presented in Section 5, but modify the domain so that 
\begin{enumerate*}[label=(\roman*)]
  \item tasks need not share the same absorbing set,
  \item the \texttt{pickup-up} action is removed since the only terminal states are reaching the desired/goal objects (the agent immediately collects an object when reaching it), and
  \item the position of every object is randomised at the start of each episode.
\end{enumerate*}

We first learn to solve three base tasks: collecting purple objects (Figure~\ref{fig:purple_dts}) collecting blue objects (Figure~\ref{fig:blue_dts}) and collecting squares (Figure~\ref{fig:square_dts}). Notice that because the pickup action is removed, the environment terminates upon touching a desired object and the agent can no longer reach any other object. This results in the large dips in values we observe in the learned extended values. These extended values can now be composed to solve new tasks immediately. 

\begin{figure*}[h!]
        \centering
    \begin{minipage}[t]{0.3\textwidth}
        \centering
        \includegraphics[height=1.4in]{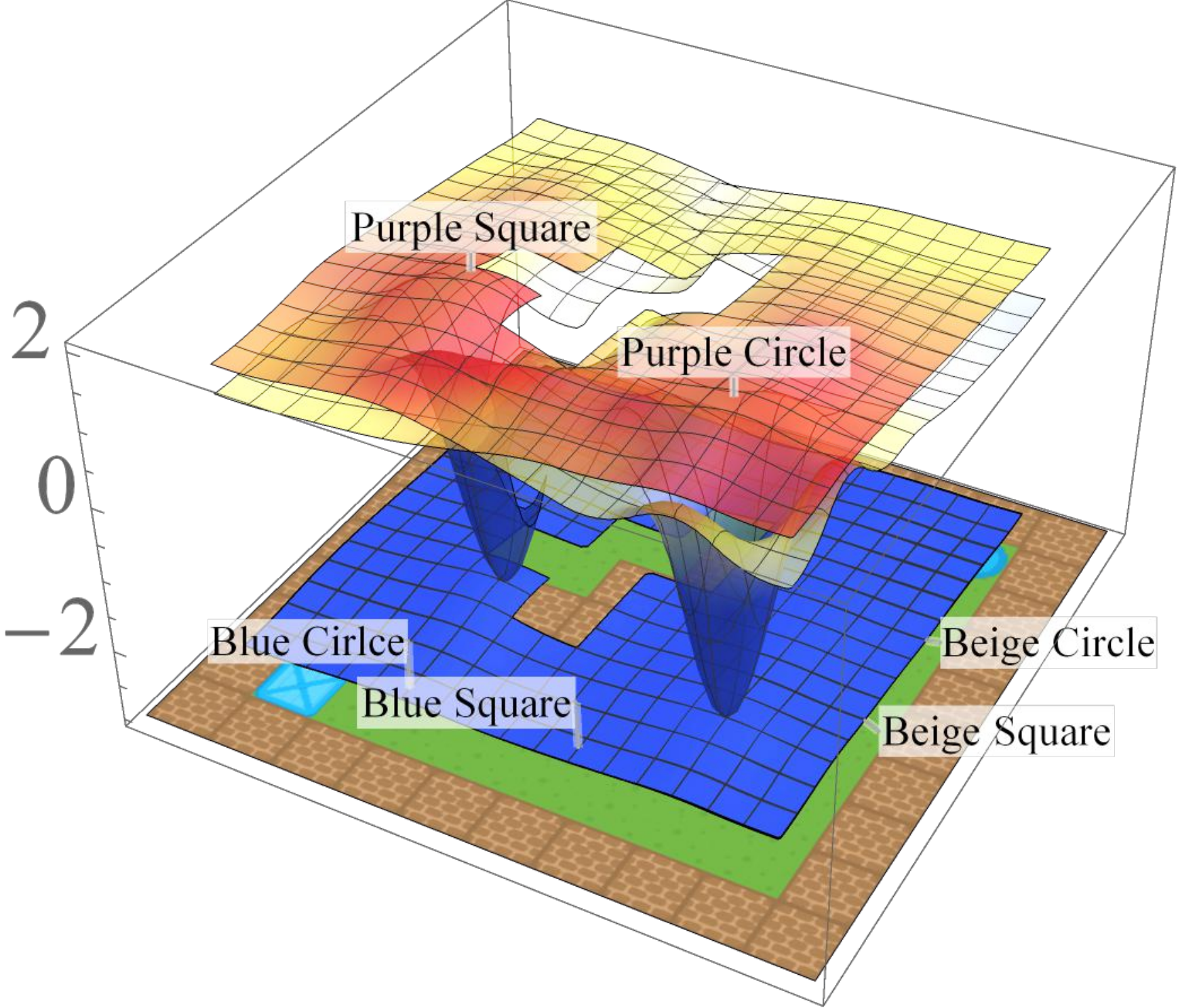}
        \caption{Extended value function for collecting purple objects.}
        \label{fig:purple_dts}
    \end{minipage}%
    \quad
    \begin{minipage}[t]{0.3\textwidth}
        \centering
        \includegraphics[height=1.4in]{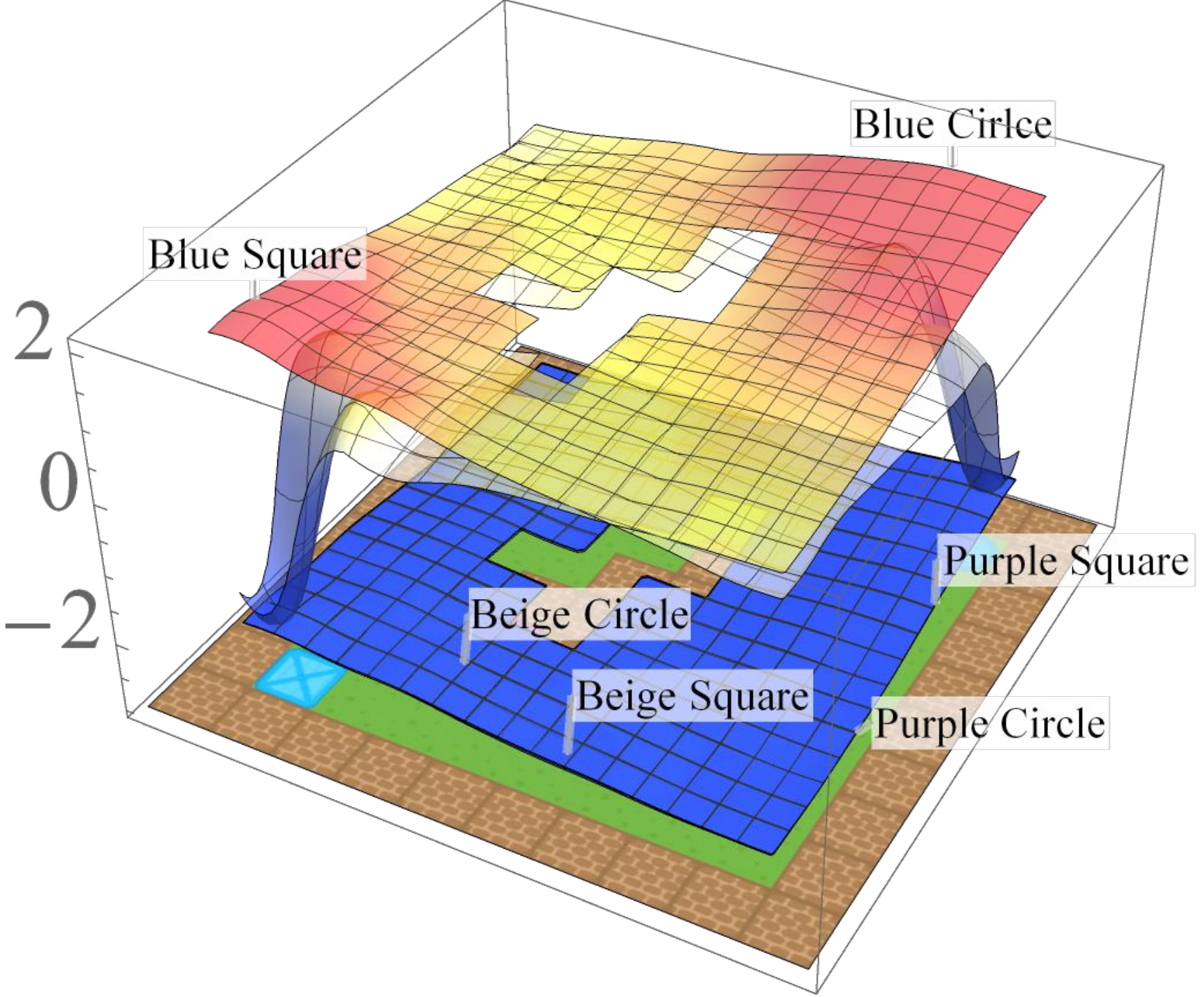}
        \caption{Extended value function for collecting blue objects.}
        \label{fig:blue_dts}
    \end{minipage}%
    \quad
    \begin{minipage}[t]{0.3\textwidth}
        \centering
        \includegraphics[height=1.4in]{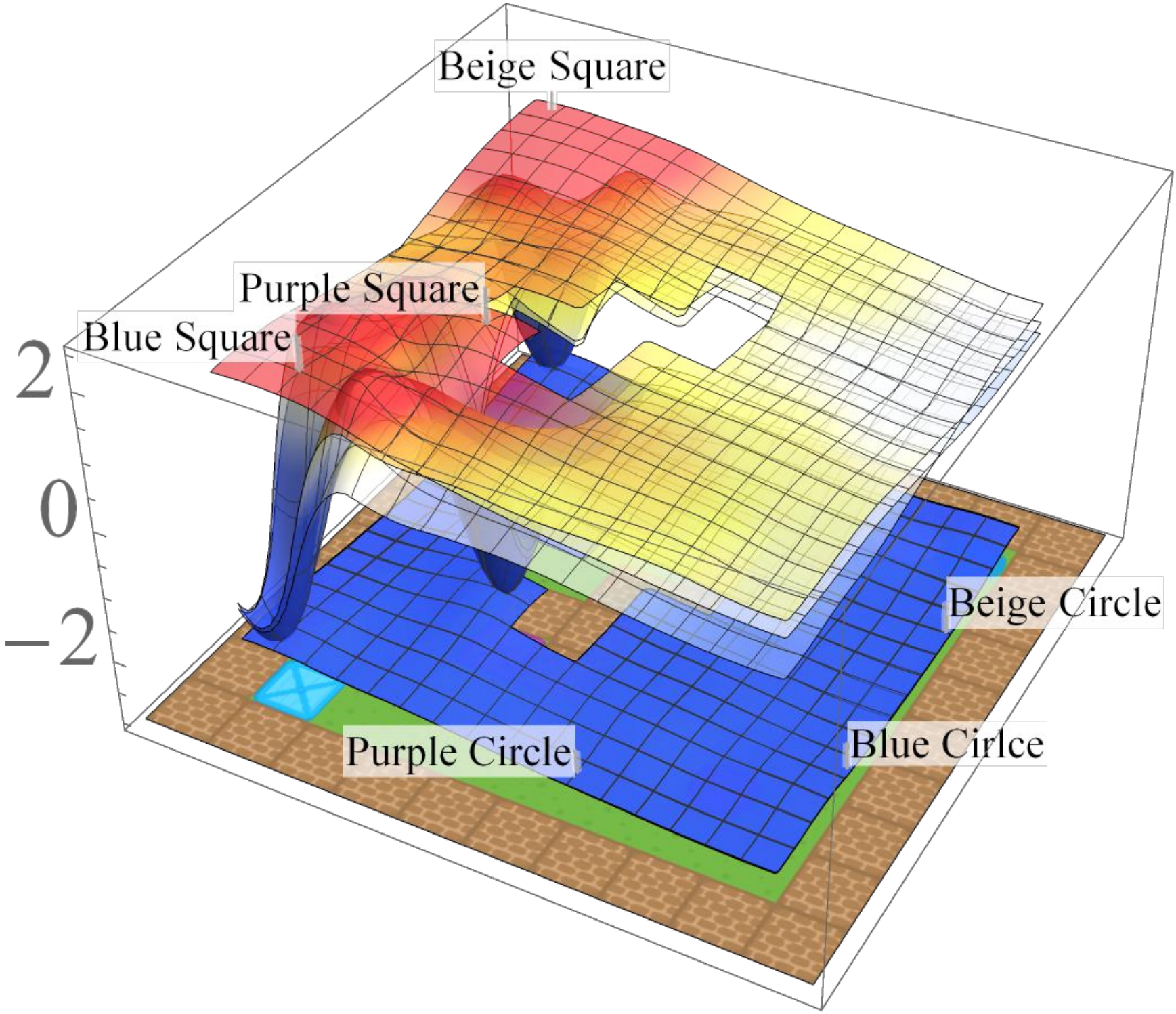}
        \caption{Extended value function for collecting squares.}
        \label{fig:square_dts}
    \end{minipage}%
\end{figure*}

Similarly to Section~5, we demonstrate composition characterised by disjunction, conjunction and exclusive-or, with the resulting value functions and trajectories illustrated by Figure~\ref{fig:repoman-new}. Since the extended value functions learn how to achieve all terminal states in a task and how desirable those terminal states are, we observe that it can still be leveraged for zero-shot composition even when the terminal states differ between tasks. Figure~\ref{fig:Boxman_returns_2} shows the average returns across random placements of the agent and objects.

\begin{figure}[h!]
    \centering
    \includegraphics[width=0.5\linewidth]{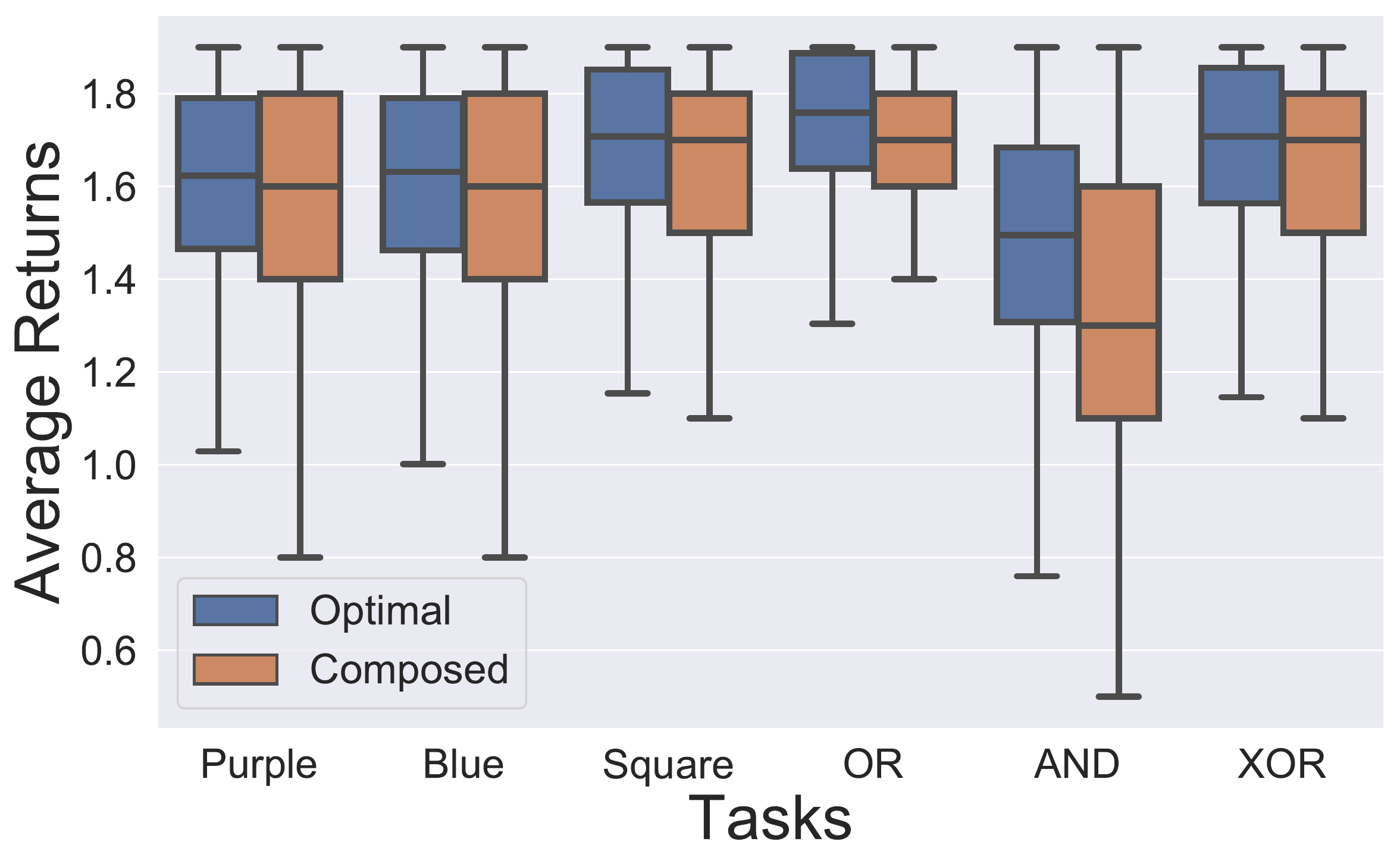}
    \caption{Average returns over 1000 episodes for the \textit{Purple}, \textit{Blue}, \textit{Square}, \textit{Purple OR Blue}, \textit{Blue AND Square} and \textit{Blue XOR Square} tasks.}
    \label{fig:Boxman_returns_2}
\end{figure}

\begin{figure*}[h!]
        \centering
    \begin{subfigure}[t]{0.3\textwidth}
        \centering
        \includegraphics[height=1.3in]{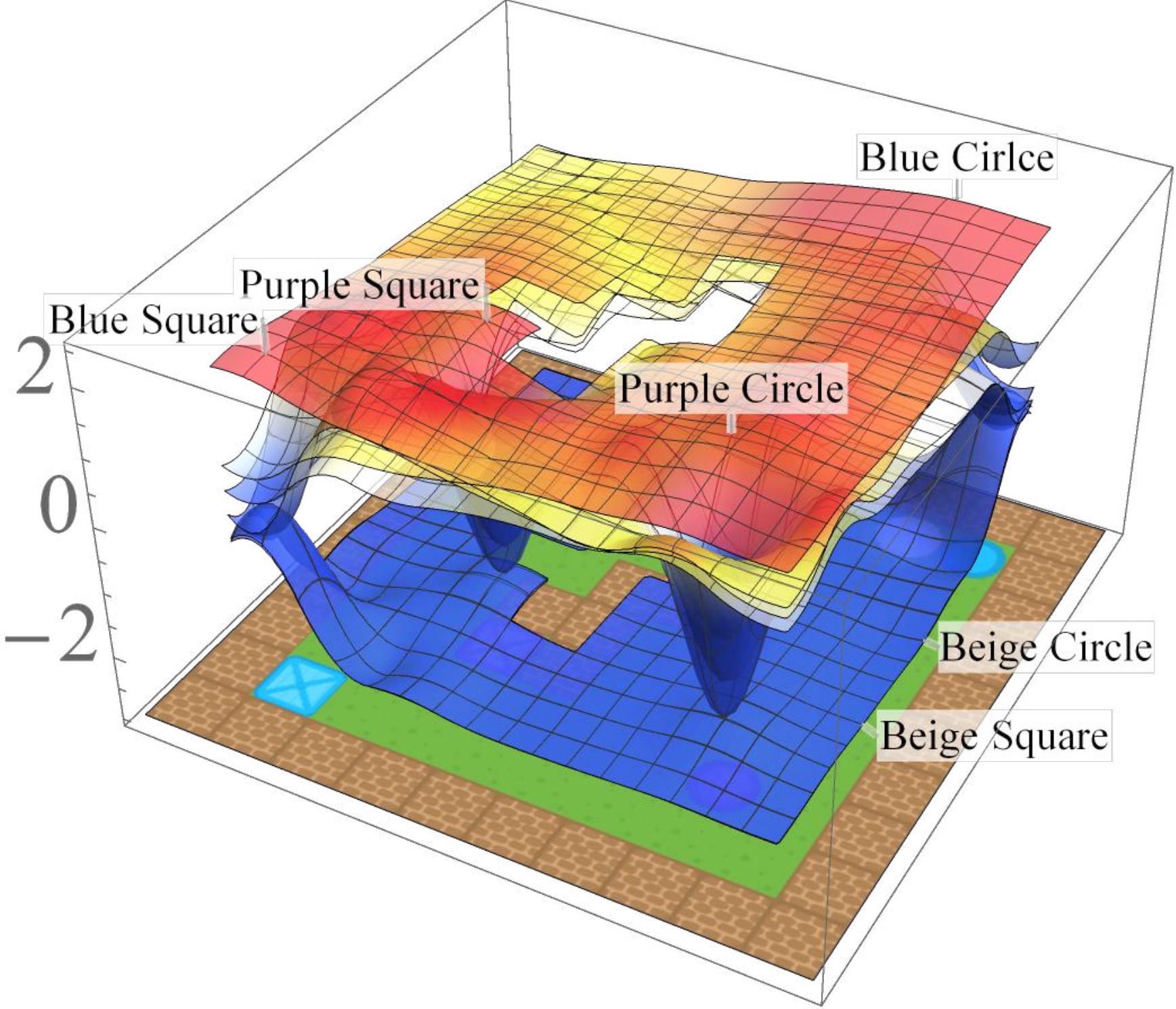}
        \caption{Extended value function for disjunctive composition.}
        \label{fig:u}
    \end{subfigure}%
    \quad
    \begin{subfigure}[t]{0.3\textwidth}
        \centering
        \includegraphics[height=1.3in]{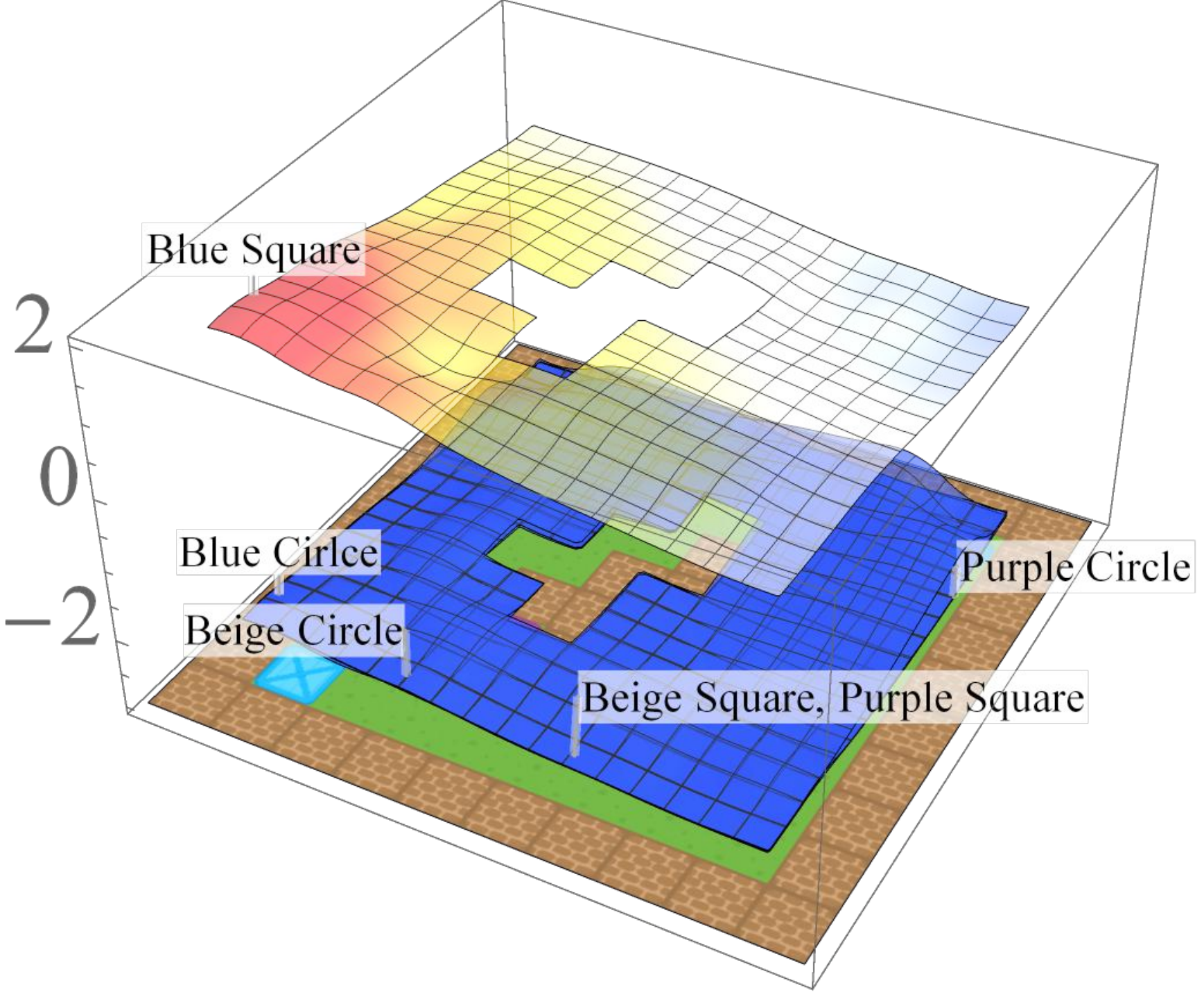}
        \caption{Extended value function for conjunctive composition.}
        \label{fig:v}
    \end{subfigure}%
    \quad
    \begin{subfigure}[t]{0.3\textwidth}
        \centering
        \includegraphics[height=1.3in]{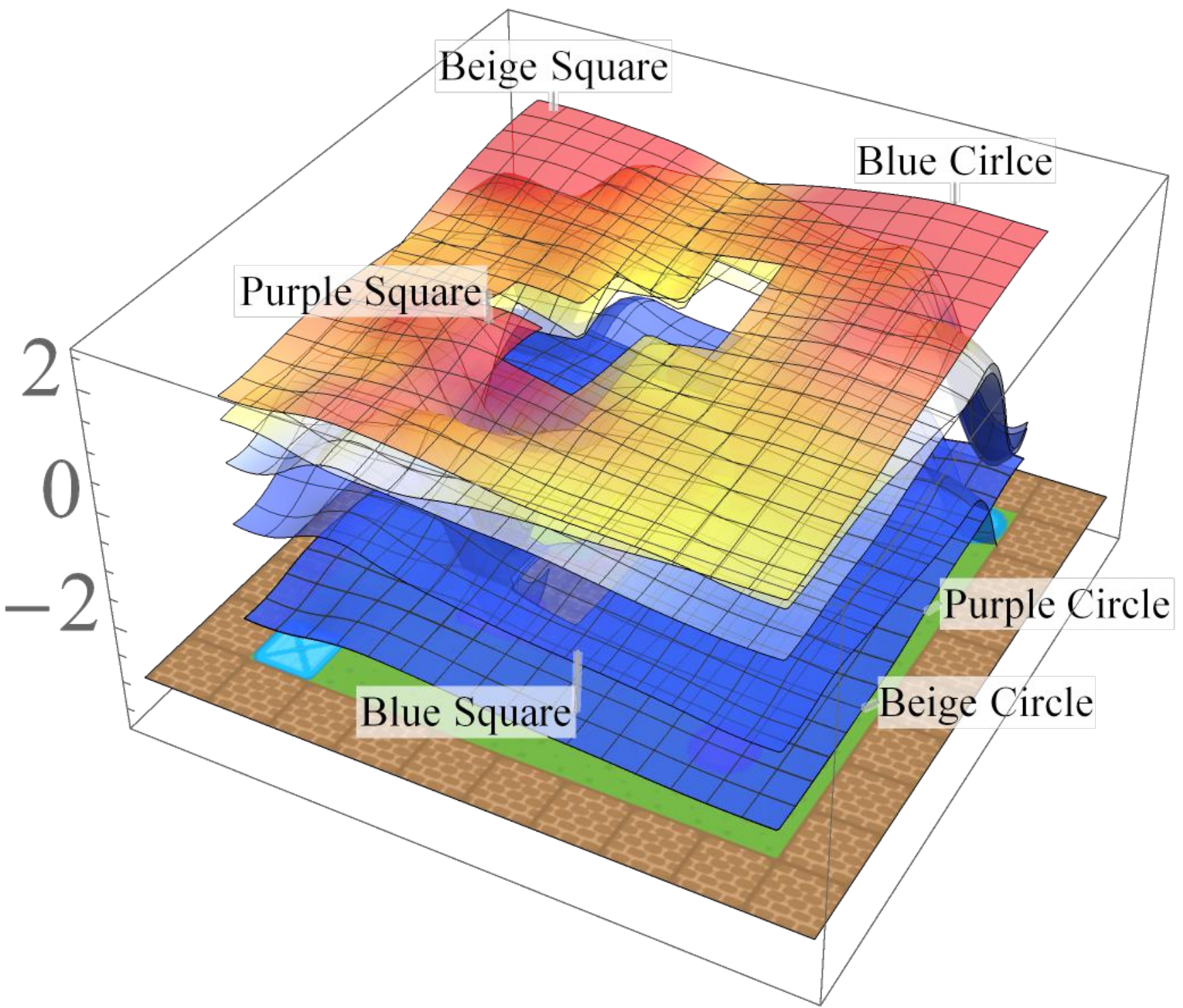}
        \caption{Extended value function for exclusive-or composition.}
        \label{fig:w}
    \end{subfigure}%
    \\
        \centering
    \begin{subfigure}[t]{0.3\textwidth}
        \centering
        \includegraphics[height=1.3in]{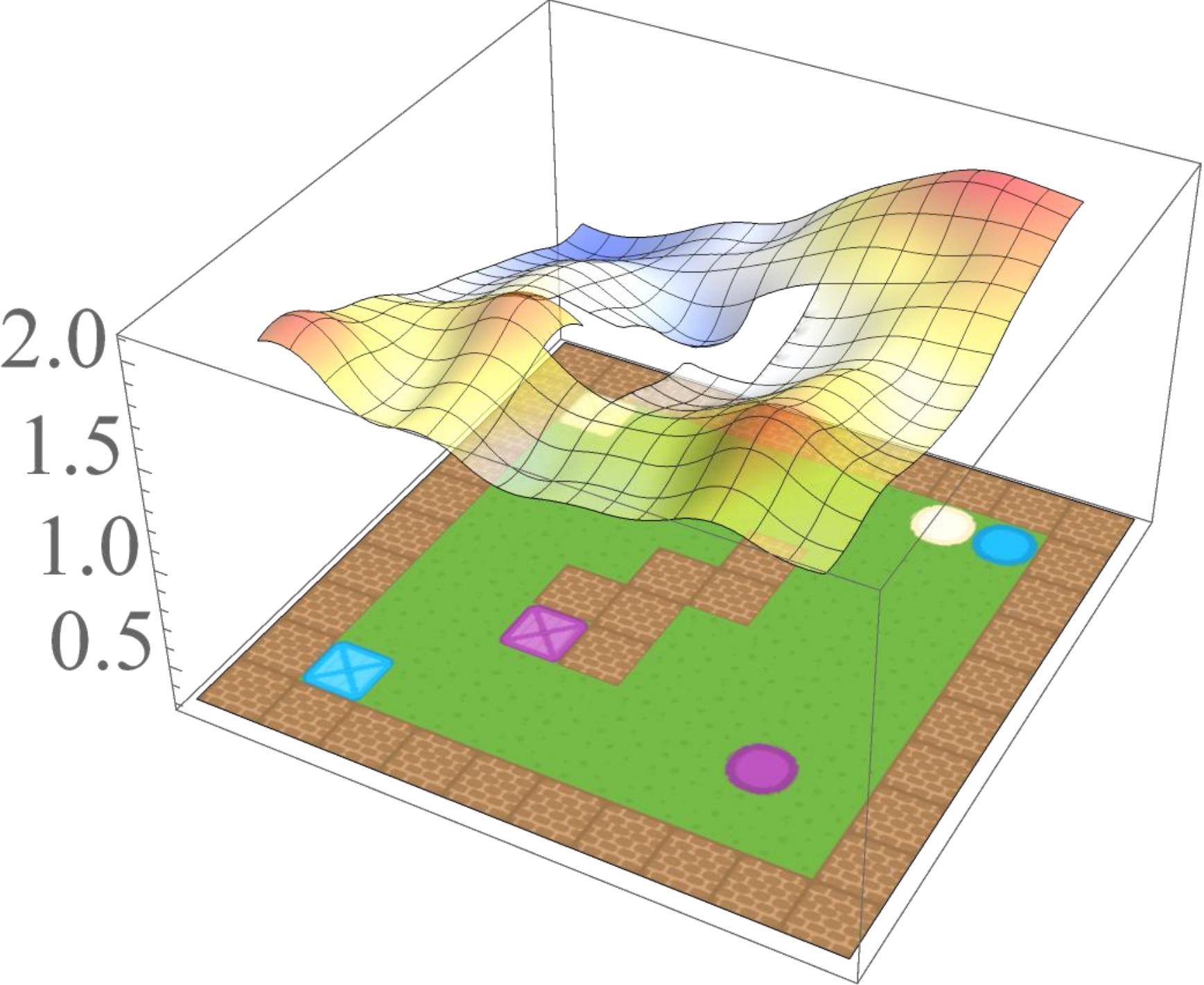}
        \caption{Value function for disjunctive composition.}
        \label{fig:u}
    \end{subfigure}%
    \quad
    \begin{subfigure}[t]{0.3\textwidth}
        \centering
        \includegraphics[height=1.3in]{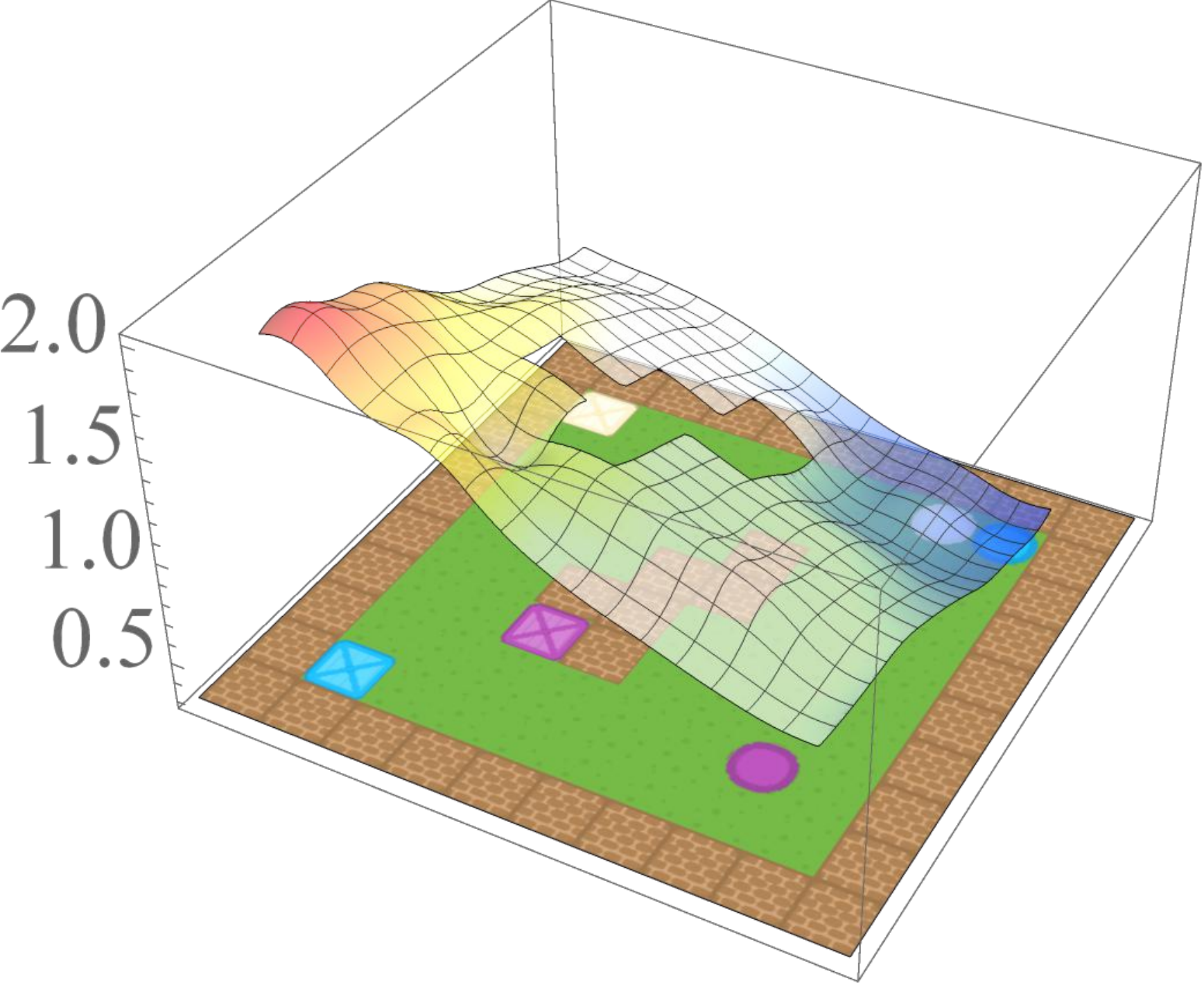}
        \caption{Value function for conjunctive composition.}
        \label{fig:v}
    \end{subfigure}%
    \quad
    \begin{subfigure}[t]{0.3\textwidth}
        \centering
        \includegraphics[height=1.3in]{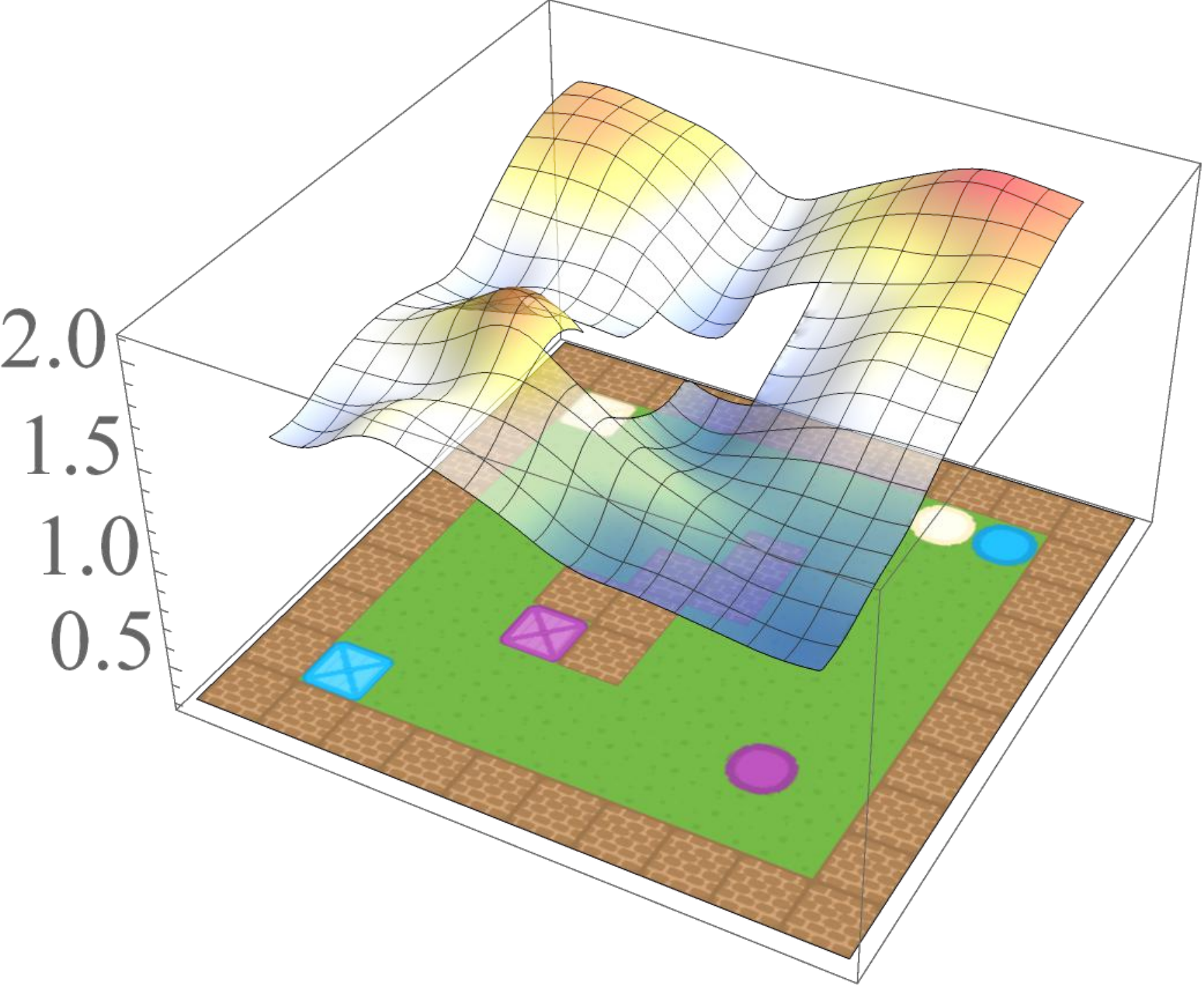}
        \caption{Value function for exclusive-or composition.}
        \label{fig:w}
    \end{subfigure}%
    \\
    \centering
    \begin{subfigure}[t]{0.3\textwidth}
        \centering
        \includegraphics[height=1.3in]{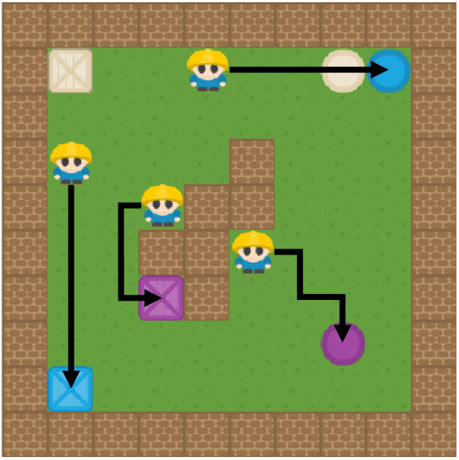}
        \caption{Trajectories for disjunctive composition (collect blue or purple objects).}
        \label{fig:u}
    \end{subfigure}%
    \quad
    \begin{subfigure}[t]{0.3\textwidth}
        \centering
        \includegraphics[height=1.3in]{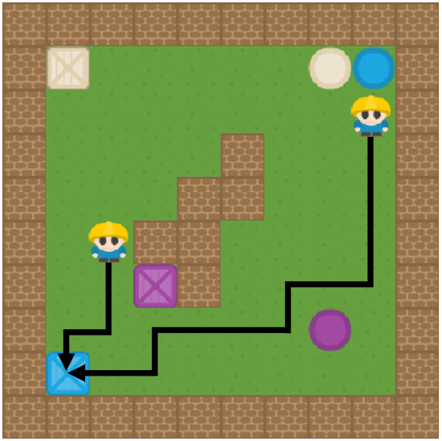}
        \caption{Trajectories for conjunctive composition (collect blue squares).}
        \label{fig:v}
    \end{subfigure}%
    \quad
    \begin{subfigure}[t]{0.3\textwidth}
        \centering
        \includegraphics[height=1.3in]{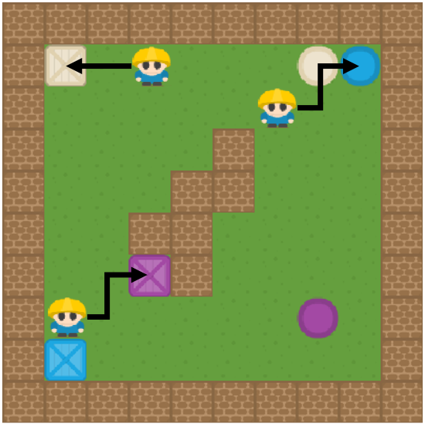}
        \caption{Trajectories for exclusive-or composition (collect blue or square objects, but not blue squares).}
        \label{fig:w}
    \end{subfigure}%
    \caption{Results for the video game environment with relaxed assumptions. We generate value functions to solve the disjunction of blue and purple tasks, and the conjunction and exclusive-or of blue and square tasks.}
    \label{fig:repoman-new}
\end{figure*}

In summary, we have shown that our compositional approach offers strong empirical performance, even when the theoretical assumptions are violated.
Finally, we expect that, in general, the errors due to these violations will be far outweighed by the errors due to non-linear function approximation.

\section{Selecting Base Tasks}

The Four Rooms domain requires the agent to navigate to one of the centres of the rooms in the environment. Figure~\ref{fig:rooms} illustrates the layout of the environment and the goals the agent must reach. 

\begin{figure}[h!]
    \centering
    \includegraphics[width=0.3\linewidth]{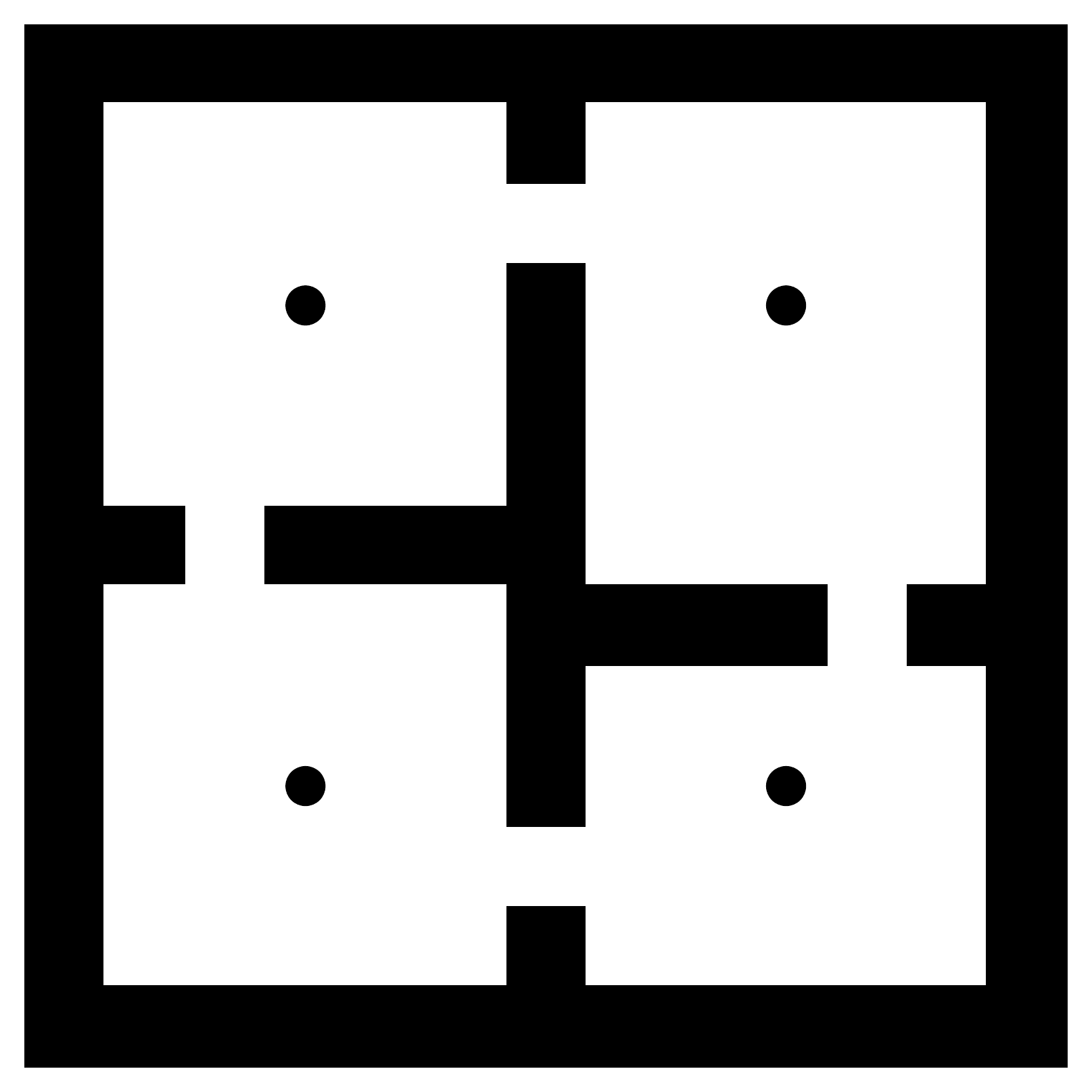}
    \caption{The layout of the Four Rooms domain. The circles indicate goals the agent must reach. We refer to the goals as \texttt{top-left}, \texttt{top-right}, \texttt{bottom-left}, and \texttt{bottom-right}.}
    \label{fig:rooms}
\end{figure}

Since we know the goals upfront, we can select a minimal set of base tasks by assigning each goal a   Boolean number, and then using the columns of the table to select the tasks. 
To illustrate, we assign Boolean numbers to the goals as follows:

\begin{table}[h!]
\centering
\begin{tabular}{lll}
$x_1$    & $x_2$ & Goals \\
\hline
$\rsmall$    & $\rsmall$    & \texttt{bottom-right}      \\
$\rsmall$    & $\rbig$    & \texttt{bottom-left}      \\
$\rbig$      & $\rsmall$    & \texttt{top-right}      \\
$\rbig$      & $\rbig$    & \texttt{top-left}      \\
\hline
\end{tabular}
\caption{Assigning labels to the individual goals. The two Boolean variables, $x_1$ and $x_2$, represent the goals for the base tasks the agent will train on.}
\label{tab:base}
\end{table}

As there are four goals, we can represent each uniquely with just two Boolean variables. 
Each column in Table~\ref{tab:base} represents a base task, where the set of goals for each task are those goals assigned a value $\rbig$. 
We thus have two base tasks corresponding to $x_1 = \{ \text{\texttt{top-right}}, \text{\texttt{top-left}} \}$ and $x_2 = \{ \text{\texttt{bottom-left}}, \text{\texttt{top-left}} \}$.



\section{UVFA Architecture and Hyperparameters}

In our experiments, we used a UVFA with the following architecture:

\begin{enumerate}
    \item Three convolutional layers:
    \begin{enumerate}
        \item Layer 1 has 6 input channels, 32 output channels, a kernel size of 8 and a stride of 4.
        \item Layer 2 has 32 input channels, 64 output channels, a kernel size of 4 and a stride of 2.
        \item Layer 3 has 64 input channels, 64 output channels, a kernel size of 3 and a stride of 1.
    \end{enumerate}
    \item Two fully-connected linear layers:
        \begin{enumerate}
        \item Layer 1 has input size 3136 and output size 512 and uses a ReLU activation function.
        \item Layer 2 has input size 512 and output size 4 with no activation function.

    \end{enumerate}
\end{enumerate}

We used the ADAM optimiser with batch size 32 and a learning rate of $10^{-4}$.
We trained every 4 timesteps and update the target Q-network every 1000 steps.
Finally, we used $\epsilon$-greedy exploration, annealing $\epsilon$ to $0.01$ over 100000 timesteps.



\bibliography{neurips2020}
\bibliographystyle{neurips2020}